\documentclass{article}

\usepackage{amsmath,amsfonts,bm}

\def\eqref#1{equation~\ref{#1}}

\def\1{\bm{1}}

\DeclareMathAlphabet{\mathsfit}{\encodingdefault}{\sfdefault}{m}{sl}
\SetMathAlphabet{\mathsfit}{bold}{\encodingdefault}{\sfdefault}{bx}{n}

\newcommand{\R}{\mathbb{R}}

\usepackage{microtype}
\usepackage{graphicx}
\usepackage{subcaption}
\usepackage{booktabs} %

\usepackage{hyperref}

\usepackage[accepted]{icml2026}

\usepackage{amsmath}
\usepackage{amssymb}
\usepackage{mathtools}
\usepackage{amsthm}

\usepackage[capitalize,noabbrev]{cleveref}

\usepackage{booktabs}       %
\usepackage{amsfonts}       %
\usepackage{nicefrac}       %
\usepackage{microtype}      %
\usepackage{xcolor}         %
\usepackage[table]{xcolor}
\usepackage{pifont}

\usepackage{wrapfig}
\usepackage{multirow}

\newcommand{\xmarkk}{\ding{55}}%
\newcommand*\colourcheck[1]{%
  \expandafter\newcommand\csname #1check\endcsname{\textcolor{#1}{\ding{52}}}%
}

\newcommand*\colourxmark[2]{%
  \expandafter\newcommand\csname #2check\endcsname{\textcolor{#2}{\ding{55}}}%
}

\newcommand{\xmark}{\textcolor{red}{\xmarkk}}

\definecolor{g}{rgb}{0.0, 0.5, 0.0}
\colourcheck{g}

\usepackage{listings}
\usepackage{adjustbox}
\usepackage{rotating}

\definecolor{codegreen}{rgb}{0,0.6,0}
\definecolor{codegray}{rgb}{0.5,0.5,0.5}
\definecolor{codepurple}{rgb}{0.58,0,0.82}
\definecolor{backcolour}{rgb}{0.95,0.95,0.92}

\newcommand{\githubrepo}[2]{%
  \href{#2}{%
    \includegraphics[height=1em]{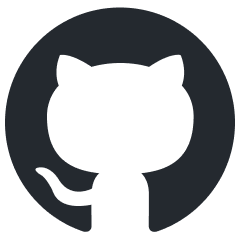}\hspace{0.3em}\texttt{#1}%
  }%
}

\lstdefinestyle{mystyle}{
    backgroundcolor=\color{backcolour},   
    commentstyle=\color{codegreen},
    keywordstyle=\color{magenta},
    numberstyle=\tiny\color{codegray},
    stringstyle=\color{codepurple},
    basicstyle=\ttfamily\footnotesize,
    breakatwhitespace=false,         
    breaklines=true,                 
    captionpos=b,                    
    keepspaces=true,                 
    numbers=left,                    
    numbersep=5pt,                  
    showspaces=false,                
    showstringspaces=false,
    showtabs=false,                  
    tabsize=2
}

\usepackage{pythonhighlight}
\usepackage{tcolorbox} %
\usepackage{caption}
\usepackage{subcaption}
\usepackage{cancel}
\usepackage{lipsum}

\crefname{figure}{Figure}{Figures}
\crefname{table}{Table}{Tables}
\crefname{section}{Section}{Sections}
\crefname{appendix}{Appendix}{Appendices}

\newtcolorbox{greenbox}[1][]{colback=green!4, colframe=black, boxrule=0.3pt, boxsep=-4pt #1}
\newtcolorbox{whitebox}[1][]{colback=white!4, colframe=black, boxrule=0.3pt, boxsep=0.1pt, left=0.1pt, right=2pt, top=0pt, bottom=0pt, #1}
\newcommand{\ours}{\textsc{Violin }}

\usepackage[colorinlistoftodos,prependcaption,textsize=tiny,textwidth=20mm]{todonotes}
\usepackage{soul}

\usepackage{thmtools, thm-restate}
\theoremstyle{plain}
\newtheorem{theorem}{Theorem}[section]
\theoremstyle{definition}
\newtheorem{definition}[theorem]{Definition}
\theoremstyle{definition}
\newtheorem{claim}[theorem]{Claim}

\theoremstyle{plain}

\theoremstyle{definition}

\theoremstyle{remark}

\crefname{equation}{equation}{equations}
\Crefname{equation}{Equation}{Equations}

\usepackage{dblfloatfix}

\icmltitlerunning{Spatial Priors via Space Filling Curves for Small and Limited Data Vision Transformers}

\begin{document}

\twocolumn[
  \icmltitle{Spatial Priors via Space Filling Curves for Small and Limited Data Vision Transformers}

  \icmlsetsymbol{equal}{*}

  \begin{icmlauthorlist}
    \icmlauthor{Leyla Naz Candogan}{equal,lions}
    \icmlauthor{Arshia Afzal}{equal,lions}
    \icmlauthor{Pol Puigdemont}{lions}
    \icmlauthor{Volkan Cevher}{lions}

  \end{icmlauthorlist}

  \icmlaffiliation{lions}{LIONS, \'Ecole Polytechnique F\'ed\'erale de Lausanne (EPFL), Lausanne, Switzerland}

  \icmlcorrespondingauthor{Leyla Naz Candogan}{leyla.candogan@epfl.ch}

  \icmlkeywords{Spatial priors, ViT, ICML}

  \vskip 0.3in
]

\printAffiliationsAndNotice{}  %

\begin{abstract}
Though Vision Transformers (ViTs) have become the dominant backbone in many computer vision tasks, due to permutation equivariance, their attention mechanism lacks explicit spatial inductive biases. This become particularly important in two settings: when model capacity is small or training data is limited. Inspired by the attention masking strategies in Linear Transformers and the scanning patterns of Vision SSMs, we introduce \ours, a lightweight masked attention mechanism that encodes spatial structure within attention via Space Filling Curves (SFCs) with less than $0.0015\%$ extra parameters and negligible computational overhead. \ours scans the image using multiple SFCs to construct curve-specific decay masks, which are then combined and multiplied with the attention matrix. Across a wide range of evaluations, \ours consistently improves performance. In limited data regimes such as fine-tuning on VTAB-1K, it boosts accuracy across all task groups and by up to $8.7\%$ on the tasks where spatial information is essential. It can be combined with parameter-efficient fine-tuning methods such as LoRA to further increase the performance. Beyond fine-tuning, \ours improves various small scale ViT architectures (e.g., DeiT, DINO) during pretraining on ImageNet-1K. Additionally, on pixel-level CIFAR-100 training, a task that is highly dependent on location information, \ours increases accuracy by up to $7.2\%$. Overall, \ours provides a computationally efficient yet effective way to inject spatial inductive bias into ViTs, especially benefiting small models and limited data settings.

\vspace{-2mm}
\begin{center}
    \githubrepo{\textbf{\ours Code}}
    {https://github.com/LIONS-EPFL/VIOLIN}
\end{center}
\end{abstract}

\section{Introduction}
Vision Transformers (ViTs)~\citep{vit} have rapidly become a dominant architecture in computer vision, achieving strong performance across tasks by capturing global dependencies through self-attention. However, unlike Convolutional Neural Networks (CNNs)~\citep{lecun1998gradient}, ViTs lack inherent \textit{spatial priors} such as locality~\citep{fan2024rmt}. This limitation partially comes from the permutation equivariance of attention, which treats image patches as an unordered set of tokens. As a result, ViTs become \textit{data-hungry} and \textit{dependent on larger model sizes}.\footnote{We define models with $\leq30$M parameters as small-scale and those with $\sim$86M+ as large scale.} While large models and datasets allow ViTs to learn these biases directly~\citep{lu2022bridging,sun2017jft}, many downstream tasks require adapting a pretrained backbone with limited data. In such cases, \textit{even large ViTs struggle to specialize}, making stronger inductive biases essential across scales. Prior works tried to address this limitation with convolutions~\citep{guo2022cmt}, novel positional encodings~\citep{wu2021rethinking}, or masking strategies~\citep{fan2024rmt}.

\begin{figure*}[t]
    \centering
    \includegraphics[width=0.9\linewidth]{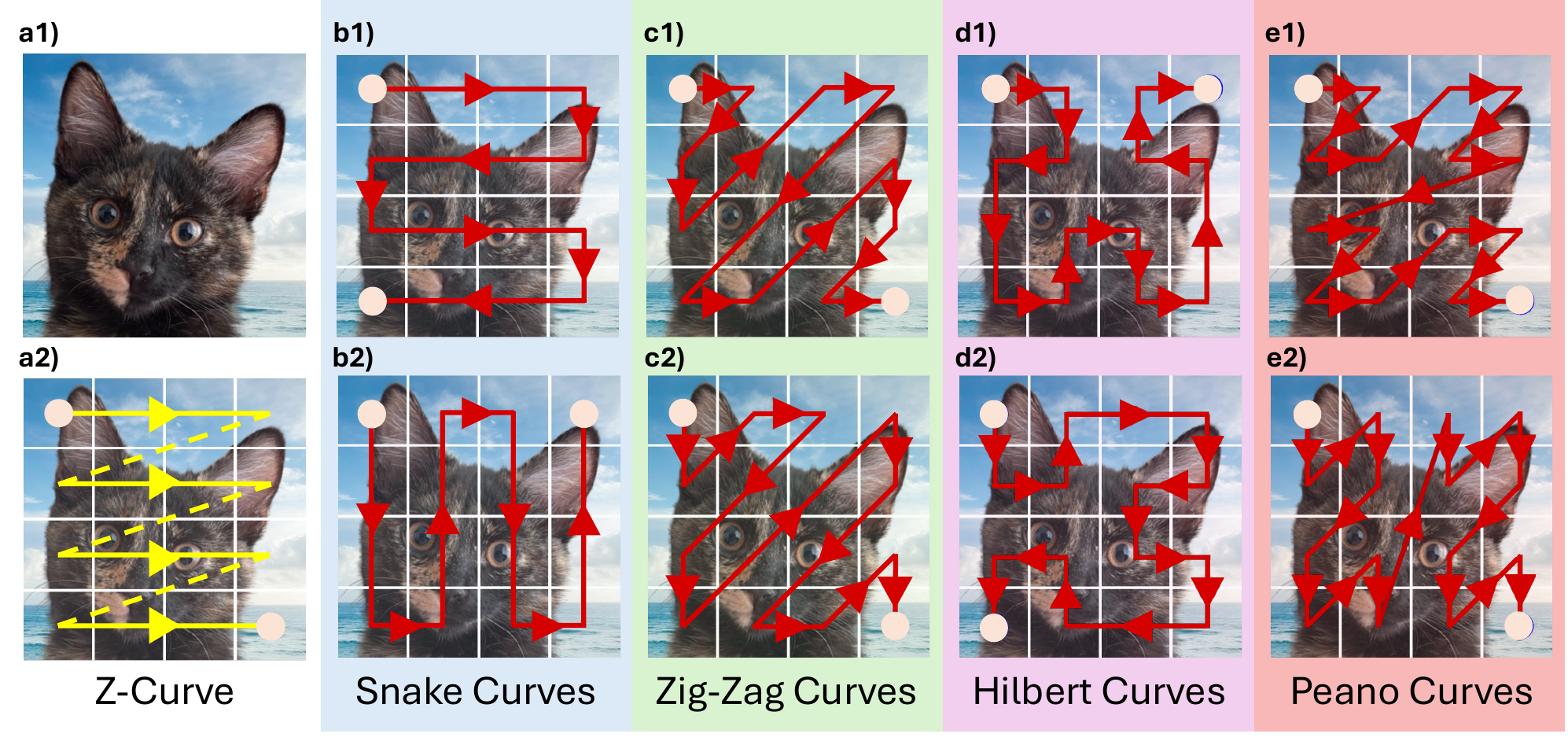}
    \caption{\textit{Space Filling Curve paths:} Examples of traversal paths used in \ours on a $4\times4$ patched image. \textbf{(a1)} Original image. \textbf{(a2)} Z-curve 
        \textbf{(b1)} Snake curve, 
    \textbf{(b2)} Transposed Snake curve, 
        \textbf{(c1)} Zig-zag curve, 
    \textbf{(c2)} Transposed Zig-zag curve, 
        \textbf{(d1)} Hilbert curve, 
    \textbf{(d2)} Transposed Hilbert curve, 
        \textbf{(e1)} Peano curve, 
    \textbf{(e2)} Transposed Peano curve.}
    \label{fig:golge_curves}
\end{figure*}

Concurrently, in natural language processing, State Space Models (SSMs) and Linear Transformers have emerged as efficient alternatives to standard transformers~\citep{mamba,mamba2,retnet}, and their vision adaptations have achieved strong results~\citep{alkin2024visionlstm,liu2024vmamba,zhu2024visionmambaefficientvisual}. Through recurrence and a decay factor on attention scores, these models can capture the relative spatial order of image patches. However, this information depends entirely on the chosen scanning order, and to capture both vertical and horizontal relations, they typically require multiple directional scans~\citep{li2024videomamba}.  

Scanning an image converts its 2D patch layout into a 1D sequence, with the order of patches determined by a traversal path. This process can be viewed as a Space Filling Curve (SFC): a continuous path that passes through every point in a multidimensional grid while systematically covering the entire image~\citep{sagan1994space}. Many vision backbones, including vanilla ViT~\citep{vit}, Vision x-LSTM~\citep{alkin2024visionlstm}, and Vim~\citep{zhu2024visionmambaefficientvisual}, use the simple Z-curve, or row-by-row scan, for this linearization (see \cref{fig:golge_curves} \textbf{(a2)}). Given that other SFCs, such as Snake, Zig-zag, Peano, and Hilbert curves, preserve locality in different ways, we ask the following question:
\begin{center}
\textit{Can SFC-inspired structure in attention enhance the spatial understanding of ViTs and improve their performance in small models and data scarce settings?}
\end{center}
In this work, we answer this question affirmatively by introducing \ours \footnote{As a subtle homage to Giuseppe Peano, the creator of space filling curves, we named our model in a way that also reflects a musical instrument, just like Peano’s family name resembles “Piano”.}, a lightweight attention mechanism for global attention models that injects spatial priors via SFC-guided decay masks. \ours integrates multiple SFC-based scans into a single mask, $\mathbf{M}_\text{\ours}$, capturing relative patch locations \textit{without modifying the rest of the architecture}. This yields an efficient, plug-and-play way to introduce locality into ViTs, \textit{particularly benefiting small models and data scarce regimes}. \cref{fig:golge_curves} \textbf{(b - e)} shows the SFCs used in \ours, with their linearized sequences in \cref{fig:flat_golge}.

We evaluate \ours across a broad set of settings:  
\vspace{-2mm}
\begin{itemize}
    \item Fine-tuning DeiT, DeiT-III, and DINO~\citep{deit,touvron2022deit,dino} on VTAB~\citep{vtab}, across scales \textit{from Tiny (5M) to Huge (632M)}, where \ours consistently improves baselines by up to \textbf{8.7\%} on individual tasks and \textbf{4.7\%} on average. \ours also combines seamlessly with parameter-efficient fine-tuning methods, further boosting adaptability.  
    \vspace{7mm}
    \item Pretraining small-scale models on ImageNet-1K~\citep{russakovsky2015imagenet} increases the performance by up to \textbf{0.9\%}, and on pixel-level CIFAR-100~\citep{cifar}, achieves a notable \textbf{7.2\%} improvement.  
    \item Additional analyses, including the complementary roles of different curves, performance on the Structured group, and extensions to dense prediction tasks such as object detection on COCO~\citep{lin2015microsoftcococommonobjects} and semantic segmentation on ADE20K~\citep{zhou2017scene}, further highlight the versatility of \ours and the importance of explicit spatial priors.  
\end{itemize}

\section{Background}
\textbf{Notations and preliminaries} \label{notation}
We denote a patched image as $\mathcal{I} \in \R^{H \times W \times d}$, where $H$ and $W$ are the number of patches along height and width, and $d$ is the embedding dimension. Its flattened form is $\mathbf{X} \in \R^{N \times d}$ with $N = H \times W$ as the sequence length. For single head attention, the query, key, and value matrices $\mathbf{Q}, \mathbf{K}, \mathbf{V} \in \R^{N \times d}$ are computed using learnable weights $\mathbf{W}_Q,\mathbf{W}_K,\mathbf{W}_V \in \R^{d \times d}$, and the standard ViT attention is computed as
\begin{equation}\label{eq:qkv_attention}
\begin{split}
\mathbf{Q} = \mathbf{X}\mathbf{W}_Q,\; 
\mathbf{K} = \mathbf{X}\mathbf{W}_K,\; 
\mathbf{V} = \mathbf{X}\mathbf{W}_V, \\
\mathbf{Y} = \text{Softmax}\!\left(\frac{\mathbf{Q}\mathbf{K}^\top}{\sqrt{d}}\right)\mathbf{V}.
\end{split}
\end{equation}
where $\mathbf{Y} \in \R^{N \times d}$ is the attention output. We use $h$ and $L$ for the number of attention heads and transformer layers respectively. Elements of matrices and vectors are accessed by $[\cdot]$, and $\odot$ denotes the Hadamard product. A full list of notations is provided in \cref{ap:notation}.

\textbf{Vision Transformers and spatial priors}
After dividing an image into patches (tokens), ViTs process them as a 1D sequence, typically flattened with a Z-curve~\citep{vit}, which discards information about neighboring patches. To reintroduce spatial information, most ViTs add positional embeddings before transformer blocks. Recent works have further improved performance through self-supervised learning (e.g., DINO~\citep{dino}) and optimized training strategies (e.g., DeiT and DeiT-III~\citep{deit,touvron2022deit}). In this study, we show how \ours improves upon these models and training recipes.

By processing patches independently, ViTs lack the strong spatial inductive bias of architectures like CNNs, which inherently encode locality~\citep{yuan2021t2t}. Although ViTs capture global interactions, they struggle with fine-grained local structures, making training data-hungry~\citep{d2021convit}. Sufficiently large models and datasets can mitigate this by learning locality from data, but when model size or data is limited, ViTs struggle to achieve strong performance~\citep{lu2022bridging}, see \cref{bckg:vit} for details.

\textbf{Linear Transformers}
Linear attention was introduced as an alternative to softmax attention, reducing quadratic complexity to linear time via a recurrent formulation \cref{linatt}~\citep{trans_rnn}. 
Instead of relying on positional embeddings to capture the order within a sequence, most modern Linear Transformers~\citep{retnet} incorporate a decay factor ($\textcolor{black}{\gamma}$), 
\begin{equation} \label{linatt}
\mathbf{S}_i = \textcolor{black}{\gamma}\mathbf{S}_{i-1} + \mathbf{k}_i^\top\mathbf{v}_i,\quad 
\mathbf{y}_i = \mathbf{q}_i^\top\mathbf{S}_{i} \end{equation}
\begin{equation} \label{linatt_matrix}
\hspace{-2mm}
\scalebox{0.92}{$
\mathbf{Y} = (\mathbf{Q}\mathbf{K}^\top \odot \mathbf{M}_{\text{causal}})\mathbf{V},\;
\mathbf{M}_{\text{causal}}[i,j] =
\begin{cases}
\gamma^{i-j} & i \ge j, \\
0 & i < j.
\end{cases}$}
\end{equation}
where $\mathbf{S}_i \in \mathbb{R}^{d \times d}$ is the hidden state. This recurrent form can be parallelized using matrix multiplication with a Toeplitz decay mask $\mathbf{M}$~\citep{tnn,retnet} as in \cref{linatt_matrix}. Though linear masked attention was initially proposed for causal NLP tasks, it is later adapted to non-causal tasks using full Toeplitz masks~\citep{lion}. The decay mask naturally extends context length, supports variable sequence lengths, and  provides locality information that inspired \ours. %

\textbf{Scans in Linear Vision Transformers and SSMs}
Linear Transformers and SSMs have been applied to vision tasks~\citep{alkin2024visionlstm, liu2024vmamba, zhu2024visionmambaefficientvisual, ren2025autoregressive, hu2024zigma, zhang2024surveyvisualmamba}. To enhance spatial representation, these models often traverse image patches using a Z-curve, typically scanning in both vertical and horizontal directions. Each scan acts as a separate recurrence, capturing distinct spatial patterns through their own decay factors.
\newpage
\textbf{Space Filling Curves} 
\begin{definition} \label{def:sfc}
A \textit{Space Filling Curve (SFC)} is a continuous mapping from a closed unit interval \( S = [0,1] \) to a closed unit hypercube \( Q = [0,1]^N \), passing through every point in \(Q\) exactly once~\citep{peano1990courbe}. In this work, we focus on the 2D Euclidean case \( Q = [0,1]^2 \), corresponding to images.
\end{definition}
Based on \cref{def:sfc}, many SFCs can been defined, including the \textbf{Snake}, \textbf{Peano} (also known as the Morton curve)~\citep{peano1990courbe}, \textbf{Hilbert}~\citep{hilbert1935stetige}, \textbf{Z} (or Sweep), and \textbf{Zig-zag}~\citep{zigzag} curves as illustrated in \cref{fig:golge_curves}. Additionally, other curves include the Sierpinski \citep{sierpinski1915curve}, and Lebesgue curves \citep{lebesgue1904}.

Flattening or scanning can be viewed as applying an SFC \(c\) to a 2D patched image \(\mathcal{I}\) with \(N\) total patches, mapping it to a 1D sequence \(\mathbf{X}_c \in \mathbb{R}^{N}\) via a flattening function  \( F_c(\mathcal{I}): \mathbb{R}^{H \times W} \mapsto \mathbb{R}^{N} \) such that
\begin{equation}\label{flat_w_curve}
\hspace{-2mm}
\scalebox{0.92}{$
F_c(i,j): (i,j) \mapsto n, \;
i \in \mathbb{Z}_{[0,H)},\;
j \in \mathbb{Z}_{[0,W)},\;
n \in \mathbb{Z}_{[0,N)}. $}
\end{equation}
\begin{equation} \label{flat_w_curve_2}
\hspace{-2mm}
\scalebox{0.92}{$
\mathbf{X}_c = F_c(\mathcal{I}),\quad \mathbf{X}_c[n] = \mathcal{I}[i,j]\;\; \text{where}\;\; n=F_c(i,j). $}
\end{equation}
This flattening can be applied independently across each dimension \(d\) for \(\mathcal{I} \in \mathbb{R}^{H \times W \times d}\). While SFCs 
have diverse applications in other domains, their role in image classification remains underexplored~\citep{zhao2024rethinkingzigzagflatteningimage, kutscher2025REOrder}. Additional details are provided in \cref{ap:sfc}.

\section{Methodology}
In this section, we introduce decay-masked attention (\cref{decmask}), extend it to diverse scanning patterns (\cref{sfcprinc,meet}), and finally present \ours attention (\cref{ours}).
\subsection{Attention with Decay Mask} \label{decmask}
As shown in \cref{permutation_equivariance}, attention (\cref{eq:qkv_attention}) is permutation equivariant: changing the order of tokens in the sequence results in the same reordering in the output. Therefore, standard attention does not encode relative spatial priors within an image. To introduce locality, we take inspiration from Linear Transformers and multiply a decay mask to the attention to break this equivariance:
\begin{equation}\label{newattention}
\begin{split}
\mathbf{Y} = \text{Softmax}\!\left(\frac{\mathbf{Q}\mathbf{K}^\top}{\sqrt{d}} \odot \mathbf{M}\right)\mathbf{V},\\ %
\mathbf{M}[i,j] = \gamma^{|i-j|},\quad 0 < \gamma \le 1.
\end{split}
\end{equation} 
This decay mask $\mathbf{M}$, also known as the Kac–Murdock–Szegö matrix~\citep{kms}, extends the causal decay mask to full attention~\citep{lion}. It dampens the attention score between tokens $i$ and $j$ by $\gamma^{|i-j|}$, enforcing locality in the flattened sequence $\mathbf{X}$. However, both the token order in $\mathbf{X}$ and the notion of distance in $\mathbf{M}$ depend entirely on how the original image $\mathcal{I}$ is flattened. This raises a natural question: \textit{What are alternative ways to flatten an image?}
\subsection{SFCs as Principled Way of Image Flattening} \label{sfcprinc}
Following \cref{flat_w_curve_2}, scanning an image along a path $c$ yields the sequence $\mathbf{X}_c = F_c(\mathcal{I})$. Many ViTs use the Z-curve as the default scanning method.

\textbf{Z-Curve}
The Z-curve, also called sweep, row-major order, or raster scan, traverses the image row by row, top to bottom, and left to right within each row and defined by $F_z(i,j) = iW + j$. See \cref{ap:sfc} for other curves used in \ours

Although flattening with different curves usually requires reprocessing the image, we propose a simpler and significantly more efficient alternative: \textit{applying a permutation to the flattened sequence}.

\textbf{Permutation of a flattened image} 
Given a sequence $\mathbf{X}_{c_1}$ flattened via SFC $c_1$, and noting that flattening is one-to-one, we define a permutation $\pi_{c_1 \mapsto c_2}:\mathbb{Z}_{[0,N)}\!\mapsto\!\mathbb{Z}_{[0,N)}$ that maps it to $\mathbf{X}_{c_2}$ from curve $c_2$
\begin{equation} \label{perm_function}
\mathbf{X}_{c_2} = \pi_{c_1 \mapsto c_2}(\mathbf{X}_{c_1}).
\end{equation}
Note that since each index in \( \mathbf{X}_{c_1} \) uniquely corresponds to one in \( \mathbf{X}_{c_2} \), $\pi_{c_1 \to c_2}$ is invertible. Alternatively, we can represent it as a permutation matrix $\mathbf{P}_{c_1 \mapsto c_2} \in \{0,1\}^{N \times N}$
\begin{align} \label{perm_matrix}
\mathbf{P}_{c_1 \mapsto c_2}[n,m] &=
\begin{cases}
1 & \text{if } m = \pi_{c_1 \mapsto c_2}(n), \\
0 & \text{otherwise},
\end{cases}
\notag \\ %
\mathbf{X}_{c_2} &= \mathbf{P}_{c_1 \mapsto c_2}\mathbf{X}_{c_1}.
\end{align}
Since $\mathbf{P}_{c_1 \mapsto c_2}$ is a permutation matrix,
\begin{equation}
\mathbf{P}_{c_2 \mapsto c_1}
= \mathbf{P}_{c_1 \mapsto c_2}^{-1}
= \mathbf{P}_{c_1 \mapsto c_2}^\top .
\end{equation}
Thus, by flattening the image once using the Z-curve, it is possible to obtain
$\mathbf{X}_c$ for other curves by applying $\pi_{z\mapsto c}(\cdot)$.
\subsection{SFCs Meet Attention} \label{meet}
With the naive approach, using $\mathbf{X}_{c}$ for each curve individually and following \cref{newattention}, the output of masked attention $\mathbf{Y}_c$ can be calculated such that
\begin{equation} \label{naive_reorder}
     \mathbf{Y}_c = \text{Softmax}\left(\frac{\mathbf{Q}_c \mathbf{K}_c^\top}{\sqrt{d}} \odot \mathbf{M}_c\right)\mathbf{V}_c, \;%
     \mathbf{M}_c[i,j]  = \gamma_c^{|i-j|},
\end{equation}
where $\mathbf{Q}_c,\mathbf{K}_c,\mathbf{V}_c$ are calculated with $\mathbf{X}_{c}$. As the token order of $\mathbf{Y}_c$ depends on the curve $c$, when multiple curves are used, the outputs (e.g $\mathbf{Y}_{c_1}$ and $\mathbf{Y}_{c_2}$) will have mismatched positions. To overcome this issue we define a basis curve.

\textbf{Basis Curve}
After computing the attention output $\mathbf{Y}_c$ for each curve $c$, we permute them into a common basis to align all outputs. This preserves the spatial locality of each curve while ensuring they share a consistent reference order. Following standard ViT flattening, we use the Z-curve as the basis and perform all permutations relative to it, simplifying notation as $\pi_{z \mapsto c} = \pi_c$, $\pi_{c \mapsto z} = \pi_c^{-1}$ and $\mathbf{P}_{z \mapsto c} = \mathbf{P}_c$, $\mathbf{P}_{c \mapsto z} = \mathbf{P}_c^{-1}$. The output aligned to the basis is
\begin{align}
\widetilde{\mathbf{Y}_c} = \pi_c^{-1}(\mathbf{Y}_c) = \mathbf{P}_c^\top \mathbf{Y}_c.
\end{align}
\textbf{Permutation of Decay Mask}
The aligned output $\widetilde{\mathbf{Y}_c}$ of the masked attention in \cref{naive_reorder} is
\begin{equation}
\widetilde{\mathbf{Y}_c} = \mathbf{P}_c^\top\mathbf{Y}_c 
= \mathbf{P}_c^\top\text{Softmax}\!\left(\frac{\mathbf{Q}_c\mathbf{K}_c^\top}{\sqrt{d}} \odot \mathbf{M}_c\right)\mathbf{V}_c.
\end{equation}
Equivalently, we can permute the decay mask $\mathbf{M}_c$ to the basis order as $\widetilde{\mathbf{M}_c} = \pi_c^{-1}(\mathbf{M}_c) = \mathbf{P}_c^\top \mathbf{M}_c \mathbf{P}_c$, allowing attention to be computed directly in the basis, see \cref{Mc_equality} for proof. The attention output then becomes
\begin{equation} \label{viatt_pre}
\begin{split} 
\widetilde{\mathbf{Y}_c} = \text{Softmax}\!\left(\frac{\mathbf{QK}^\top}{\sqrt{d}} \odot \widetilde{\mathbf{M}_c}\right)\mathbf{V}, \\
\widetilde{\mathbf{M}_c} = \pi_c^{-1}(\mathbf{M}_c),\quad \mathbf{M}_c[i,j] = \gamma_c^{|i-j|}.
\end{split}
\end{equation}
This approach is more efficient than the naive one, as $\mathbf{Q},\mathbf{K},\mathbf{V}$ are computed only once with the basis curve, and, crucially, \textit{a single} $\mathbf{Q}\mathbf{K}^\top \in \mathbb{R}^{N\times N}$ is shared across all.
\subsection{\ours Attention} \label{ours}
For a single head, we define \ours attention as a decay-masked attention guided by multiple SFCs.
\begin{figure}[h] \centering \vspace{-1mm} 
\begin{minipage}{0.45\textwidth} 
\begin{tcolorbox}[colback=orange!30, colframe=orange!50, boxrule=0.2mm, arc=0mm ,boxsep=0.1pt, left=2pt, right=6pt, top=-8pt, bottom=2pt] \begin{align} \label{violinattention} \mathbf{Y} = \text{Softmax}&\left(\alpha \frac{\mathbf{QK}^\top}{\sqrt{d}} \odot \mathbf{M}_\text{\ours} \right) \mathbf{V}, \notag \\ \mathbf{M}_\text{\ours} &= \frac{1}{|\mathcal{C}|} \sum_{c \in \mathcal{C}} \widetilde{\mathbf{M}_c}. \end{align} \end{tcolorbox}
\end{minipage}
\vspace{-3mm}
\end{figure}

Here, $\mathbf{M}_\text{\ours}$ is the average of decay masks from all curves $c \in \mathcal{C}$, each first aligned to the basis (Z-curve) order. The matrices $\mathbf{Q},\mathbf{K},\mathbf{V}$ are computed from the input $\mathbf{X}$ flattened with respect to the basis. The learnable scalar $\alpha \in \mathbb{R}$ controls how strongly the mask influences attention. 

For \ours, we use Snake, Zig-zag, Peano, and Hilbert curves together with their transposed variants (\cref{fig:golge_curves} \textbf{(b2-e2)}) to capture diverse scanning patterns in both row and column major order. This gives the curve set
\begin{multline}
\mathcal{C} = \{\text{Snake, Zig-Zag, Peano, Hilbert,} \\ \text{Snake$^{\top}$, Zig-Zag$^{\top}$, Peano$^{\top}$, Hilbert$^{\top}$}\}.
\end{multline}

Each curve $c$ has a decay factor $\gamma_c \in [0,1]$ for its mask $\mathbf{M}_c$, parameterized as $\gamma_c=\text{sigmoid}(\beta_c)$ with learnable $\beta_c \in \mathbb{R}$ for stability~\citep{lru}. In multi-head attention, each head $k$ has its own $\beta_c^k$ and $\alpha^k$, yielding head specific masks $\mathbf{M}_c^k$ and $\mathbf{M}_\text{\ours}^k$. 

In practice, permutations can be applied efficiently via indexing, see code in \cref{curve_codes}. The full \ours block is shown in \cref{fig:vilblock}, with theoretical motivation for averaging in \cref{ap:sfc_avg}, further design choices and ablations in \cref{sec:design} and \cref{ap:ablations}.

\begin{figure}[t]
\centering
\includegraphics[width=0.9\linewidth]{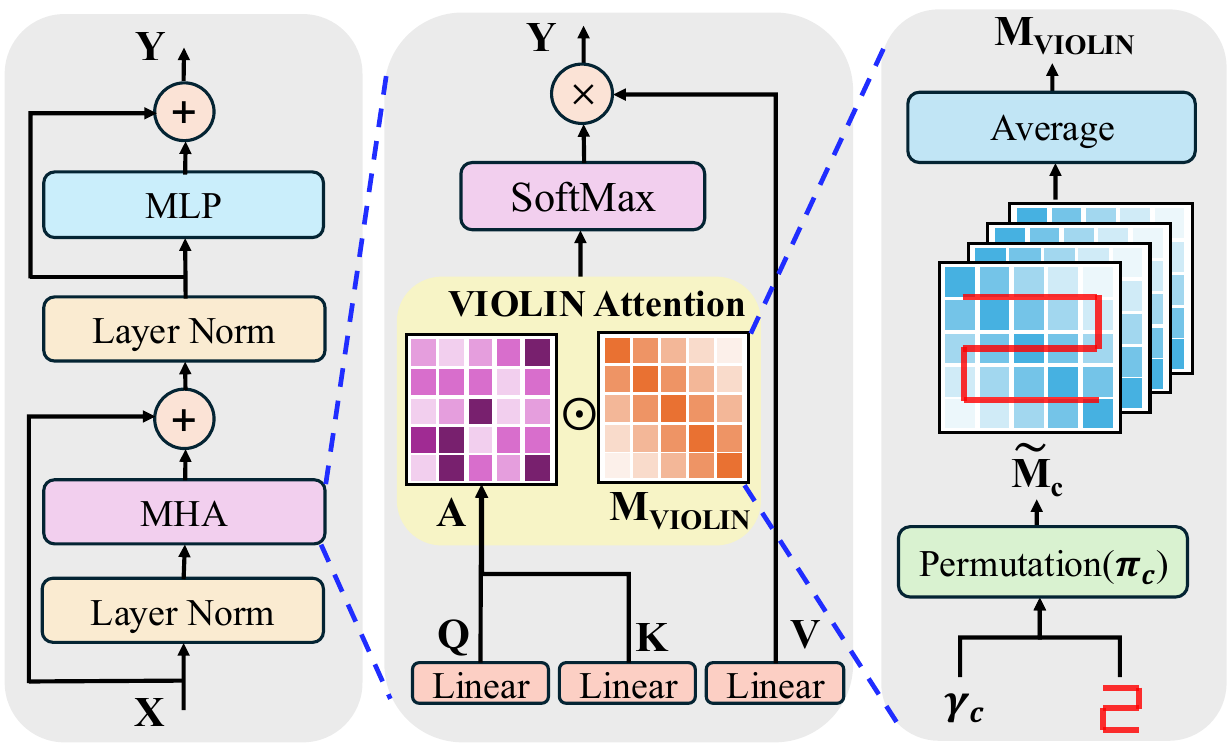}
\caption{\ours: \textbf{(Left)} ViT block with \ours multi-head attention. \textbf{(Middle)} Single-head \ours attention. \textbf{(Right)} Decay mask $\mathbf{M}_\text{\ours}$ formed by averaging masks from curves in $\mathcal{C}$.}
\label{fig:vilblock}
\end{figure}
\begin{table}[b]
    \centering
    \caption{\textit{Parameter and computational overhead of \ours:} calculated relative to DeiT-B (86M parameters, 55.4G FLOPs). }
    \begin{tabular}{lcc}
        \noalign{\hrule height 1.5pt}
        \multirow{2}{4em}{Metric} &{Theoretical} & {\% Change} \\ 
        &{Computation} & {(over DeiT-B)} \\ \hline
        $\#$ Param. & $Lh(|\mathcal{C}|+1)$ & $\textbf{0.0015\%}$ \\ 
        FLOPs & $\mathcal{O}(LhdN^2)$ & $\textbf{0.64\%}$ \\ 
        \noalign{\hrule height 1.5pt}
    \end{tabular}%
    \label{tab:comp_overhead}
\end{table}

\textbf{Parameter and computational overhead}
A key advantage of \ours is its minimal parameter and computational overhead. As shown in \cref{tab:comp_overhead}, \ours adds only \textbf{0.0015\% parameters} and \textbf{0.64\% FLOPs} compared to the baseline DeiT-B model which is effectively negligible in practice. 

We further evaluate GPU memory and inference runtime using a DeiT-S backbone with a batch size of 256 at resolutions of $224\times224$  (classification) and $512\times512$ (dense prediction). As shown in \cref{tab:gpu_time}, \ours closely matches vanilla DeiT model in both runtime and memory usage, confirming the minimal overhead predicted by our analysis.

\section{Experiments}
We evaluate \ours across diverse settings to assess its impact on ViTs’ spatial awareness. Experiments include fine-tuning on small datasets (\cref{sec:finetune}), pretraining small-scale models on ImageNet-1K and pixel-level CIFAR-100 (\cref{sec:pretrain}), and ablations on curve configurations and decay factors (\cref{sec:extra}). Beyond classification, we analyze gains on the Structured VTAB group and extend evaluation to dense prediction tasks such as detection and segmentation. Overall, \ours consistently improves performance, particularly for small models and data-scarce regimes.
\subsection{VTAB-1K Fine-tuning} \label{sec:finetune}
The Visual Task Adaptation Benchmark (VTAB)~\citep{vtab} evaluates the adaptability of learned representations to diverse unseen tasks with limited data. It consists of three groups, Natural, Specialized, and Structured, covering 19 datasets from varied domains and semantic categories. In our experiments we use VTAB-1K, a subset with 1,000 examples per task, specifically designed to test model adaptation in data-scarce settings. 

On small datasets, we test \ours under two configurations: full and parameter-efficient fine-tuning (PEFT). In both cases, we compare fine-tuning results of the original pretrained models (\colorbox{gray!20}{Baseline}), and \colorbox{orange!30}{$\text{Baseline}\odot \mathbf{M}_\text{\ours}$} where pretrained models are combined with freshly initialized mask before fine-tuning and then optimized jointly with the backbone during fine-tuning. For all models, baselines and \ours, we use the fine-tuning implementation from \cite{vtab1k-pytorch} as described in \cref{ap:vtab_hyper} with the complete set of training hyperparameters in \cref{tab:vtab_hyper}, and per-dataset results in \cref{ap:vtab}.
\begin{table}[t]
\centering
\caption{\textit{GPU memory and inference time comparison:} for DeiT-S and \ours-S at different input resolutions. Measurements done on the same hardware with batch size 256.}
    \begin{tabular}{lcc}
    \noalign{\hrule height 1.5pt}
    \multirow{2}{4em}{Model} & GPU Memory & Runtime \\
     & (GB) & (ms/batch) \\
    \midrule
    \rowcolor{gray!20} DeiT ($224\times224$)   & 0.80 & 206.1 \\
    \rowcolor{orange!30} \ours ($224\times224$) & 0.81 & 233.1 \\
    \hline
    \rowcolor{gray!20} DeiT ($512\times512$)   & 13.88 & 1739.3 \\
    \rowcolor{orange!30} \ours ($512\times512$) & 13.90 & 1789.7 \\
    \noalign{\hrule height 1.5pt}
    \end{tabular}%
\label{tab:gpu_time}
\end{table}
\begin{table*}[t]
    \centering
    \caption{\textit{Full fine-tuning results on VTAB-1K}: 
    Comparison of the top-1 accuracies of \colorbox{gray!20}{baseline models} and their \colorbox{orange!30}{$\text{Baseline}\odot \mathbf{M}_\text{\ours}$} counterparts across the VTAB-1K benchmark. The three task groups are abbreviated as NAT. = Natural, SPE. = Specialized, and STR. = Structured. The values in parentheses $(\cdot)$ indicate the accuracy difference compared to the baseline. The best performance within each model pair is highlighted in \textbf{bold}. \textcolor{g}{Green} highlights the improvement.}    \label{tab:finetune2}
    \resizebox{\textwidth}{!}{
    \begin{tabular}{lc>{\columncolor{gray!20}}c>{\columncolor{gray!20}}c>{\columncolor{gray!20}}c>{\columncolor{gray!20}}c>{\columncolor{orange!30}}c>{\columncolor{orange!30}}c>{\columncolor{orange!30}}c>{\columncolor{orange!30}}c}
         \noalign{\hrule height 1.5pt}
        \multirow{3}{*}{Model} & \multirow{3}{*}{Param.} & \multicolumn{8}{c}{Top-1 Accuracy ($\%$)}  \\ 
        & & \multicolumn{4}{{>{\columncolor{gray!20}}c}}{Baseline} & \multicolumn{4}{{>{\columncolor{orange!30}}c}}{$\text{Baseline}\odot \mathbf{M}_\text{\ours}$} \\ 
        & & NAT. & SPE. & STR. & Avg.& NAT. & SPE. & STR. & Avg.\\
        \hline
        DeiT-T & 5M  & 69.56 & 82.34 & 53.57 & 65.52 & \textbf{71.90} \textbf{\textcolor{g}{(+2.34)}} & \textbf{83.75} \textbf{\textcolor{g}{(+1.41)}} & \textbf{57.50} \textbf{\textcolor{g}{(+3.93)}} & \textbf{68.33} \textbf{\textcolor{g}{(+2.81)}} \\
        DeiT-S & 22M & 73.64 & 84.30 & 53.44 & 67.38 & \textbf{76.06} \textbf{\textcolor{g}{(+2.42)}} & \textbf{85.05} \textbf{\textcolor{g}{(+0.75)}} & \textbf{58.26} \textbf{\textcolor{g}{(+4.82)}} & \textbf{70.46} \textbf{\textcolor{g}{(+3.08)}} \\
        DeiT-B & 86M & 76.93 & 85.52 & 57.00 & 70.35 & \textbf{77.96} \textbf{\textcolor{g}{(+1.03)}} & \textbf{86.29} \textbf{\textcolor{g}{(+0.77)}} & \textbf{61.89} \textbf{\textcolor{g}{(+4.89)}} & \textbf{72.95} \textbf{\textcolor{g}{(+2.60)}} \\
        \hline
        DeiT-III-S & 22M  & 75.13 & 83.63 & 52.92 & 67.57 & \textbf{77.03} \textbf{\textcolor{g}{(+1.90)}} & \textbf{85.46} \textbf{\textcolor{g}{(+1.83)}} & \textbf{61.61} \textbf{\textcolor{g}{(+8.69)}} & \textbf{72.31} \textbf{\textcolor{g}{(+4.74)}} \\
        DeiT-III-B & 86M  & 78.19 & 85.26 & 56.71 & 70.63 & \textbf{79.24} \textbf{\textcolor{g}{(+1.05)}} & \textbf{86.47} \textbf{\textcolor{g}{(+1.21)}} & \textbf{63.03} \textbf{\textcolor{g}{(+6.32)}} & \textbf{73.94} \textbf{\textcolor{g}{(+3.31)}} \\
        DeiT-III-L & 304M & 88.68 & 84.38 & 51.40 & 67.41 & \textbf{90.39} \textbf{\textcolor{g}{(+1.71)}} & \textbf{84.68} \textbf{\textcolor{g}{(+0.30)}} & \textbf{54.95} \textbf{\textcolor{g}{(+3.55)}} & \textbf{69.51} \textbf{\textcolor{g}{(+2.10)}} \\
        DeiT-III-H & 632M &  88.15	& 84.18	& 50.70 & 66.91   & \textbf{89.10}\textbf{\textcolor{g}{(+0.95)}} & \textbf{84.43} \textbf{\textcolor{g}{(+0.25)}}& \textbf{53.65} \textbf{\textcolor{g}{(+2.95)}}& \textbf{68.50} \textbf{\textcolor{g}{(+1.41)}}\\
        \hline
        DINO-S & 22M & 75.35 & 85.09 & 60.65 & 71.21 & \textbf{76.26} \textbf{\textcolor{g}{(+0.91)}} & \textbf{85.32} \textbf{\textcolor{g}{(+0.23)}} & \textbf{61.24} \textbf{\textcolor{g}{(+0.59)}} & \textbf{71.84} \textbf{\textcolor{g}{(+0.63)}} \\
        DINO-B & 86M & 77.50 & 85.77 & 58.47 & 71.23 & \textbf{78.65} \textbf{\textcolor{g}{(+1.15)}} & \textbf{86.44} \textbf{\textcolor{g}{(+0.67)}} & \textbf{60.84} \textbf{\textcolor{g}{(+2.37)}} & \textbf{72.79} \textbf{\textcolor{g}{(+1.56)}} \\
        \noalign{\hrule height 1.5pt}
    \end{tabular}}
\end{table*}

\textbf{Full fine-tuning}
In the first setting, we test the plug-in capability of \ours by fully fine-tuning pretrained DeiT, DeiT-III, and DINO models across scales ranging from 5M to 630M parameters. Prior to fine-tuning, we freshly initialize the $\mathbf{M}_\text{\ours}$ mask and scaling factor $\alpha$ as defined in \cref{violinattention}. Next, we jointly fine-tune the model and the mask with accuracies reported in \cref{tab:finetune2}. 
Freshly initialized mask enables fast adaptation by learning task-specific structural biases, which is critical in data-scarce setting. We also fine-tune the \ours pretrained models from \cref{sec:pretrain} and observe that masks learned only during downstream fine-tuning consistently outperform pretrained ones, full results and discussions are provided in \cref{ap:finetune}.

This highlights a key advantage, \textit{\ours can improve any pretrained global attention model when applied only at fine-tuning,} allowing models to specialize on the downstream task better and avoiding costly pretraining from scratch. The gains are substantial, up to \textbf{4.7\%} on average and \textbf{8.7\%} on individual groups, showing that the spatial bias introduced by \ours enables more effective learning in data-scarce regimes. Moreover, \ours introduces negligible overhead and generalizes across datasets, training setups, and model scales, including models larger than 600M parameters.
\begin{table}[t]
    \centering
    \caption{\textit{PEFT results on VTAB-1K with DeiT-B:} \# Param. denotes the number of learnable parameters per method. The baseline uses PEFT alone, while \ours combines PEFT with mask fine-tuning.}
    \begin{tabular}{lc>{\columncolor{gray!20}}c>{\columncolor{orange!30}}c}
        \noalign{\hrule height 1.5pt}
        \multirow{2}{*}{Method} & \multirow{2}{*}{$\#$ Param.} & \multicolumn{2}{c}{Avg. Accuracy ($\%$)}  \\ 
        & & Baseline & $\text{Baseline}\odot \mathbf{M}_\text{\ours}$ \\ 
        Full-FT& 86 M & 70.35 &\textbf{72.95 \textcolor{g}{(+2.60)}}\\
        \hline
        LoRA&$\sim$0.3M&71.04 &\textbf{72.55 \textcolor{g}{(+1.51)}}\\
        \hline
        DoRA&$\sim$0.6M&70.75&\textbf{71.90 \textcolor{g}{(+1.15)}}\\        
        \noalign{\hrule height 1.5pt}
    \end{tabular}%
    \label{tab:peft}
\end{table}

\textbf{PEFT with \ours} 
Secondly, we use the PEFT methods LoRA~\citep{hu2022lora} and DoRa~\citep{liu2024dora} to fine-tune DeiT-B, with results in \cref{tab:peft}. The \ours mask is freshly initialized and updated alongside the PEFT weights. The extra cost introduced by \ours remains insignificant, only 0.0015\% additional parameters compared to 0.35\% introduced by LoRA. These results show that \ours can integrate seamlessly with different PEFT methods, further highlighting its applicability and generalizability.
\subsection{Pretraining} \label{sec:pretrain}
\textbf{ImageNet-1K pretraining}
We pretrain \ours on small-scale models~\footnote{We observed that for ImageNet pretraining with larger models, the performance gains are smaller, which is expected. See \cref{ap:base} for numerical results and a detailed explanation.} under both supervised and self-supervised paradigms, as shown in \cref{tab:pretrain}. For supervised training, we follow the DeiT training recipe for tiny and small models, a strong baseline for data-efficient supervised learning. In all DeiT based pretraining experiments, we adopt only the training recipe and do not use distillation. \ours consistently improves performance without additional tuning, with gains of \textbf{0.8\%} and \textbf{0.9\%} on DeiT-T and DeiT-S models, demonstrating strong compatibility. For these models, we replace the class token with Global Average Pooling (GAP)~\citep{lin2013network,lu2022bridging}, which is more compatible with \ours, see \cref{abl:gap} for details. 

For self-supervised training, we adopt DINO, a state-of-the-art teacher-student framework for label free representation learning, known for its stable training dynamics and strong downstream performance. In our experiments, both teacher and student networks are equipped with \ours attention. Across model scales and training durations, \ours consistently improves performance, yielding gains in both KNN and linear evaluations on ImageNet. For all models, \textit{we strictly follow the original training recipes without modifying any hyperparameters for \ours}. Baseline accuracies are taken directly from the reported values.  
\begin{figure*}[t]
    \centering
    \captionof{table}{\textit{Pretraining results on ImageNet-1K}: 
    Comparison of the top-1 accuracies of \colorbox{gray!20}{baseline models} with their \colorbox{orange!30}{\ours} counterparts. The values in parentheses $(\cdot)$ indicate the accuracy difference compared to the baseline. The best performance between each pair of models is highlighted in \textbf{bold}. For DINO models, both KNN and linear evaluations are reported and (100), (300) indicate the number of training epochs of the models. \textbf{(Left)} Supervised training with similar sized CNN baselines, \textbf{(Right)} Self-supervised training.}
    \label{tab:pretrain}
\begin{minipage}[t]{0.44\textwidth}
    \begin{tabular}{lc>{\columncolor{gray!20}}c>{\columncolor{orange!30}}c}
         \noalign{\hrule height 1.5pt}
        \multirow{2}{*}{Model} & \multirow{2}{*}{$\#$ Param.} & \multicolumn{2}{c}{Top-1 Accuracy ($\%$)}  \\ 
        & & Baseline & \textbf{\ours}  \\ 
        \hline
        DeiT-T  &  5M & 72.2 & \textbf{73.0 \textcolor{g}{(+0.8)}}  \\
        DeiT-S  & 22M & 79.8 &\textbf{ 80.7  \textcolor{g}{(+0.9)}}  \\
        \hline
        ResNet-18  & 12M & 69.8  \\
        ResNet-50  & 25M & 76.2   \\
         \noalign{\hrule height 1.5pt}
    \end{tabular}
\end{minipage}
\hfill
\begin{minipage}[t]{0.54\textwidth}
        \begin{tabular}{lcc>{\columncolor{gray!20}}c>{\columncolor{orange!30}}c}
         \noalign{\hrule height 1.5pt}
        \multirow{2}{*}{Model} &  \multirow{2}{*}{} & \multirow{2}{*}{$\#$ Param.} & \multicolumn{2}{c}{Top-1 Accuracy ($\%$)}  \\ 
        & & & Baseline & \textbf{\ours}  \\ 
        \hline
        \multirow{2}{*}{DINO-S (100)} & KNN    & \multirow{2}{*}{22M} & 69.3 & \textbf{70.0  \textcolor{g}{(+0.7)}}  \\
                                            & Linear & & 74.0 & \textbf{74.6  \textcolor{g}{(+0.6)}} \\
        \multirow{2}{*}{DINO-S (300)} & KNN    & \multirow{2}{*}{22M} & 72.8 & \textbf{73.4 \textcolor{g}{(+0.6)}}  \\
                                            & Linear & & 76.1 & \textbf{76.4  \textcolor{g}{(+0.3)}} \\
         \noalign{\hrule height 1.5pt}
    \end{tabular}
\end{minipage}
\end{figure*}

\textbf{Ablation studies}
In \cref{ap:ablations}, we provide comprehensive ablations on \ours, using the same pretraining setup. \cref{abl:gap,abl:pos_embd} examine the effects of global average pooling and positional embeddings, while \cref{ap:curve_configs} explores curve configurations, including using a single curve, all combinations in $\mathcal{C}$, Z-curve only, Manhattan distance-based masking (similar to RMT \citep{fan2024rmt}), random curve orderings and variants without transposed curves. \cref{ap:masking} compares alternative masking strategies, and \cref{ap:other_ablations} analyzes key design choices such as initialization, the scaling factor $\alpha$, and fixed versus learnable decay parameters. Together, these ablations clarify the contribution of each component. Additionally, in \cref{abl:multi_res}, we evaluate the context extrapolation capability of \ours using multi-resolution classification and video generation with a pretrained \ours DINO model, leveraging the extrapolation property of the KMS decay mask $\mathbf{M}_\ours$. 

\newpage
\textbf{Pixel-level CIFAR-100 pretraining} 
Recent work has explored pixel-level tokenization for ViTs~\citep{nguyen2025an, wang2025scaling}, which provides detailed image representations and avoids hand-crafted choices around patching. However, since eliminating patching also removes a key source of locality bias and it makes models even more data-hungry and harder to optimize on smaller datasets such as CIFAR-100~\citep{cifar}. This setting aligns perfectly with \ours, which introduces locality into the model independently of the patching process.  

On CIFAR-100, when ViT-T is trained using the DeiT ImageNet training recipe, \ours achieves a striking improvement of over \textbf{7\%} compared to the vanilla pixel-level baseline, as shown in \cref{tab:cifar}. This demonstrates that our locality mechanism provides a powerful inductive bias, enabling effective learning in small-data, small-model regimes where standard ViTs fail. These results highlight both the effectiveness of \ours and the importance of locality awareness for pixel-level ViTs, particularly in resource-constrained scenarios where large-scale pretraining or very long training schedules are impractical.
\begin{table}[t]
    \centering
    \caption{ \textit{Pixel level CIFAR-100 pretraining}: Comparison of the top-1 accuracies of \colorbox{gray!20}{baseline} and \colorbox{orange!30}{\ours} models. }    
    \begin{tabular}{lc>{\columncolor{gray!20}}c>{\columncolor{orange!30}}c}
        \noalign{\hrule height 1.5pt}
        \multirow{2}{*}{Model} & \multirow{2}{*}{$\#$ Param.} & \multicolumn{2}{c}{Avg. Accuracy ($\%$)}  \\ 
        & & Baseline & \textbf{\ours}  \\ 
        \hline
        DeiT-T & 5 M & 60.8 & \textbf{68.0} \textcolor{g}{(\textbf{+7.2)}}\\    
        \noalign{\hrule height 1.5pt}
    \end{tabular}%
    \label{tab:cifar}
\end{table}
\subsection{Understanding Spatial Awareness in \ours} \label{sec:extra}
\textbf{Performance gain on the Structured group}
The Structured category of VTAB includes tasks that require understanding the spatial  structure of the images such as object counting and 3D depth prediction, many of which are derived from simulated environments. These scenes often consist of rendered geometric objects that are simple to humans but differ significantly from images in datasets like ImageNet. As a result, success in these tasks often depends on \textit{recognizing positional, orientational, or shape-based information, making local spatial layout especially important.}

\begin{figure}[h]
  \begin{center}
    \includegraphics[width=0.65\linewidth]{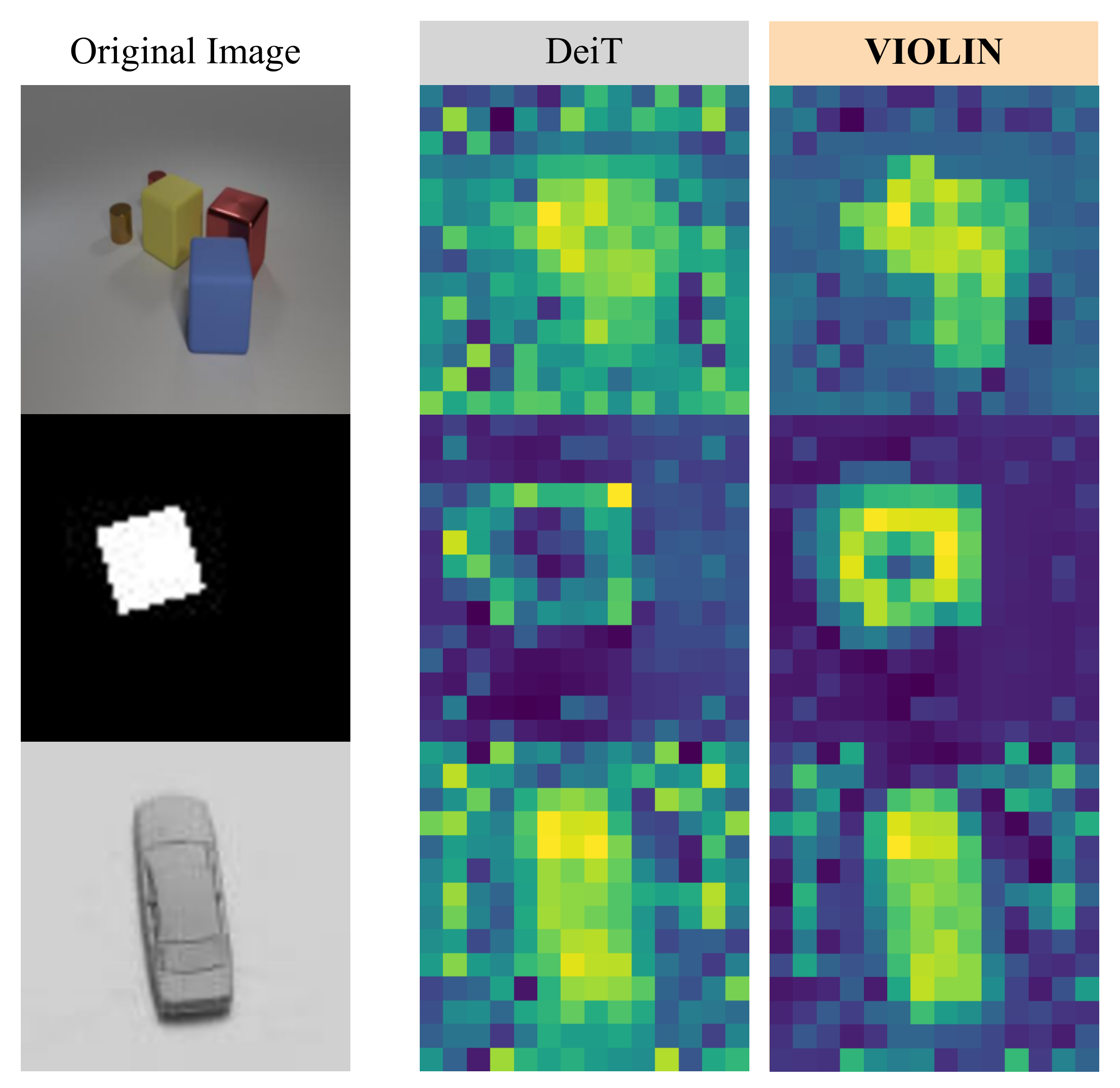}
  \end{center}
  \caption{\textit{Attention heatmaps on Structured tasks:} Examples are drawn from three datasets in the Structured group: CLEVR-Count, dSprites-Location, and SmallNORB-Azimuth. They are taken from layer 12, using the same attention head for each image. }
  \label{fig:structured}
\end{figure}

As shown in \cref{tab:finetune2}, the \ours mask provides the largest improvements in this category, with gains of up to \textbf{8.69\%}, a \textbf{16\%} relative increase over the baseline. These results highlight the \ours's ability to enhance spatial capabilities, and generalize effectively to tasks that depend heavily on spatial structure. In \cref{fig:structured}, we illustrate images from three datasets in the Structured group with attention heatmaps of DeiT-B models fine-tuned with and without $\mathbf{M}_\text{\ours}$. The comparisons show that models fine-tuned with \ours attend to objects more accurately, suppress noise on irrelevant patches, and produce more uniform responses in background regions, further demonstrating its benefit for spatial understanding. \cref{ap:visuals} provides additional visualizations of per-head attention heatmaps across different layers.

\begin{figure*}[t]
    \centering
    \captionof{table}{\textit{Results on dense prediction tasks:} \textbf{(Left)} mIoU scores on semantic segmentation on ADE20K with DeiT-B model. \textbf{(Right)} box AP and mask AP scores on object detection and instance segmentation on COCO with Swin-T. }
\begin{minipage}[t]{0.4\textwidth}
    \begin{tabular}{l>{\columncolor{gray!20}}c>{\columncolor{orange!30}}c}
        \noalign{\hrule height 1.5pt}
        \multirow{2}{*}{Backbone} & \multicolumn{2}{c}{mIoU}  \\ 
         & Baseline & {$\text{Baseline}\odot \mathbf{M}_\text{\ours}$}  \\
        \hline
        DeiT-B & 45.24 &  \textbf{45.80 \textcolor{g}{(+0.56)}}\\    
        \noalign{\hrule height 1.5pt}
    \end{tabular}
    \label{tab:segment}
\end{minipage}
\hfill
\begin{minipage}[t]{0.53\textwidth}
    \begin{tabular}{l>{\columncolor{gray!20}}c>{\columncolor{gray!20}}c>{\columncolor{orange!30}}c>{\columncolor{orange!30}}c}
        \noalign{\hrule height 1.5pt}
        \multirow{2}{*}{Backbone}  & \multicolumn{2}{c}{Baseline} & \multicolumn{2}{c}{{$\text{Baseline}\odot \mathbf{M}_\text{\ours}$}}  \\ 
        & box AP & mask AP & box AP & mask AP \\
        \hline
        Swin-T & 42.7 & 39.3 &  \textbf{42.8 \textcolor{g}{(+0.1)}} & \textbf{39.7 \textcolor{g}{(+0.4)}}\\    
        \noalign{\hrule height 1.5pt}
    \end{tabular}
\end{minipage}
\end{figure*}

\textbf{Curve configurations}
We examine the individual contribution of each curve by pretraining DeiT-S with all $2^4=16$ combinations of four curves (including their transposed variants), with accuracies reported in \cref{tab:curves}. While some combinations yield larger gains, every curve contributes meaningfully, motivating the use of all four in \ours to leverage their complementary spatial information. To illustrate this, \cref{fig:curves_sumbul} visualizes the decay masks for three reference patches (top-left, center, bottom-right) across all curves and their transposes. Lighter regions indicate stronger attention, and the distinct patterns show how different curves bias the model toward diverse spatial regions.  

\begin{figure}[t] 
    \centering
    \includegraphics[width=0.95\linewidth]{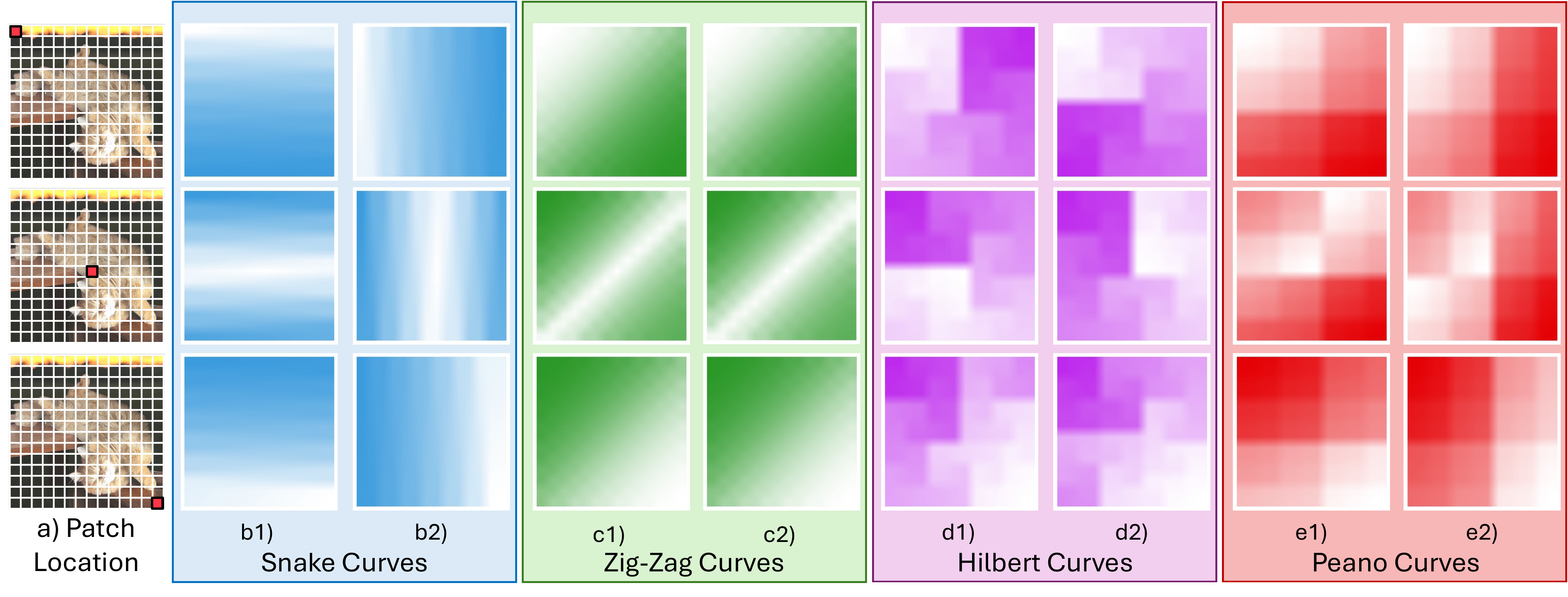}
   \caption{\textit{Mask patterns for different patches:} Visualization of decay mask patterns for three reference patches, top-left, center, and bottom-right, (1st, 2nd and 3rd rows) across all curves. 
   Lighter values indicate stronger spatial relevance, showing more attended regions. 
   \textbf{a1)} Reference patch locations, 
        \textbf{b1)} Snake, 
    \textbf{b2)} Snake$^{\top}$, 
        \textbf{c1)} Zig-zag, 
    \textbf{c2)} Zig-zag$^{\top}$, 
        \textbf{d1)} Hilbert, 
    \textbf{d2)} Hilbert$^{\top}$, 
        \textbf{e1)} Peano, 
    \textbf{e2)} Peano$^{\top}$ curves.}
    \label{fig:curves_sumbul}
\end{figure}

We further analyze the learned decay parameters $\gamma_c$ for DeiT-B in \cref{fig:gamma}, observing that most remain close to one. This is consistent with findings in Linear Transformers where $\gamma \approx 1$ is associated with preserving long-range reasoning~\citep{lru}. With resolution 224 (sequence length 196), even a moderate decay changes the effective receptive field significantly (e.g., $0.9^{196} < 10^{-9}$), making values near one necessary to keep both local and global spatial information. Smaller $\gamma_c$ values, on the other hand, act as implicit curve selection, as their corresponding masks contribute minimally to the weighted average, with certain layers and heads emphasizing particular curves. \cref{fig:gamma_values} visualizes how different $\gamma$ values change the effective receptive field, with additional attention heatmaps and curve-flattening visualizations provided in \cref{ap:visuals}.

\begin{figure}[b] 
    \centering
    \includegraphics[width=0.9\linewidth]{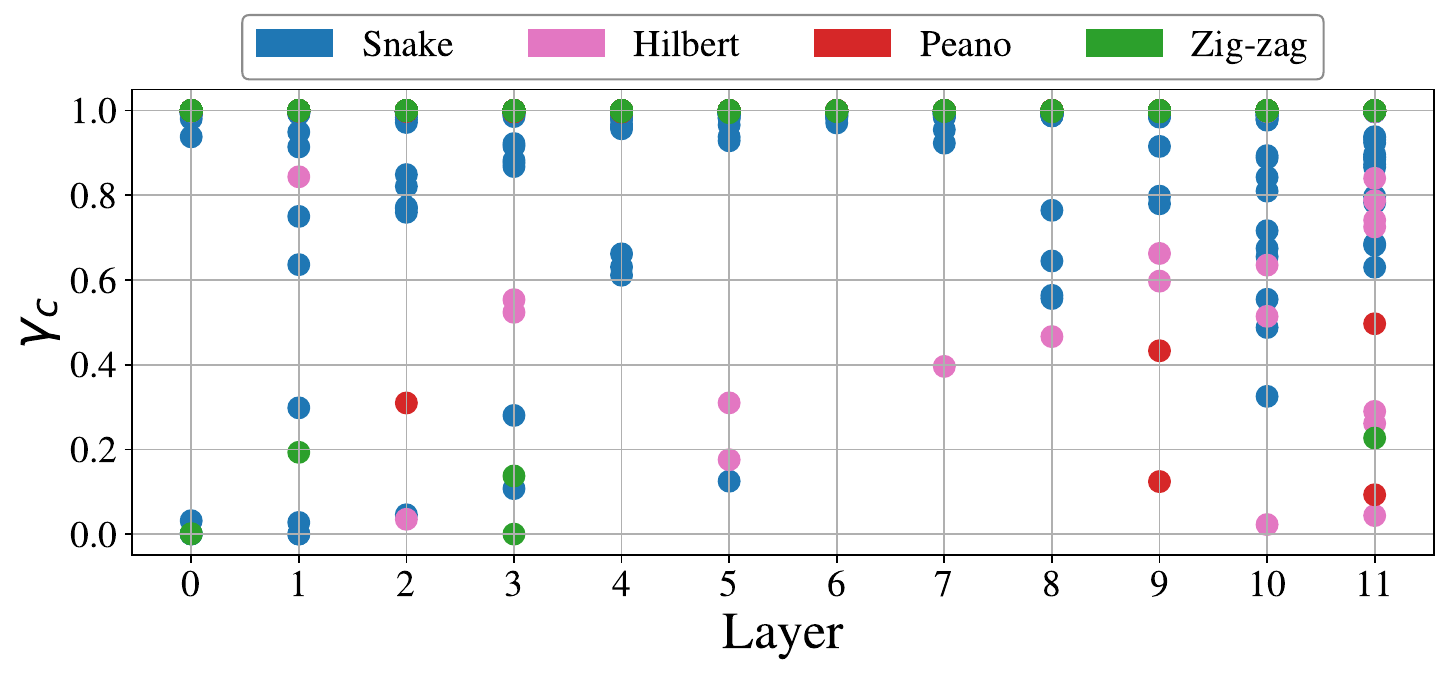}
    \caption{\textit{$\gamma_c$ values:} $\gamma_c$ values of \ours DeiT-B model are presented across layers, heads and curves. Most remain close to one, indicating active use of long-range spatial information.}
    \label{fig:gamma}
\end{figure}

\textbf{Dense prediction tasks}
To assess the capabilities of \ours beyond classification, we evaluate it on semantic segmentation and object detection. For both tasks, baseline and \ours enhanced models are trained under identical setups to ensure fair comparison, with results reported in \cref{tab:segment}. These experiments also highlight the flexibility of $\mathbf{M}_\text{\ours}$, which naturally generalizes to arbitrary input shapes, enabling resolution expansion and non-square images.

For semantic segmentation, we use ADE20K~\citep{zhou2017scene,zhou2019semantic}, a challenging scene parsing dataset, implemented in the \texttt{mmsegmentation} framework~\citep{mmseg2020}. The backbone is an ImageNet pretrained DeiT-B model combined with UPerNet~\citep{xiao2018unified}. The $\mathbf{M}_\text{\ours}$ mask is freshly initialized at fine-tuning, and trained for 80k iterations with batch size 16. 

For object detection, we evaluate on COCO~\citep{lin2015microsoftcococommonobjects} using the \texttt{mmdetection} framework~\citep{mmdetection} with an ImageNet pretrained Swin-T~\citep{liu2021swin} backbone and Mask R-CNN~\citep{maskrcnn} as the detector. $\mathbf{M}_\text{\ours}$ is freshly initialized at fine-tuning, and models are trained with a $1\times$ schedule and batch size 16. \ours yields improvements of \textbf{+0.56} mIoU on semantic segmentation and a \textbf{+0.4} mAP over the baseline (see \cref{tab:segment}), showing how spatial priors can improve dense prediction tasks.

\begin{table}[b] 
\centering
\vspace{-4mm}
\caption{\textit{Comparison of locality methods:} The pretrained DeiT-B model fine-tuned with different locality methods on the VTAB Structured group. Best result is highlighted on \textbf{bold}.}
\begin{tabular}{lcc}
\noalign{\hrule height 1.5pt}
\multirow{2}{*}{Method} & \# Extra  & Structured  \\
& Param. & Avg. (\%) \\
\midrule
\rowcolor{gray!20} Baseline (DeiT-B)        & --      & 57.00 \\
\rowcolor{orange!30}\ours                    & $\sim$1.3K   & \textbf{61.89} \\
Single SFC ($\mathbf{M}_{\text{Peano}}$)       & $\sim$0.4K   & 61.63 \\
Additive $\mathbf{M}_{\ours}$          & $\sim$1.3K   & 61.34 \\
Swin RPB                 & $\sim$105K   & 61.58 \\
i-RPE QKV                & $\sim$115K   & 61.45 \\
LocalViT                 & $\sim$6.2M   & 61.50 \\
Manhattan Mask           & $\sim$0.4K   & 58.37 \\
\bottomrule
\end{tabular}%
\label{tab:structured_locality}
\end{table}

\textbf{Comparison against other inductive bias methods}
In \cref{tab:structured_locality}, we present an extended comparison of locality-enforcing methods on the Structured group with fine-tuning. All methods use the same pretrained DeiT-B backbone with their locality mechanisms initialized on top, ensuring that all models start identically. All methods are then fine-tuned under the same protocol, described in \cref{ap:vtab_hyper}.

These results show that while most locality priors offer some improvement, \textit{\ours achieves the strongest gains with minimal overhead}. This indicates that the gains come specifically from the usage of multiple SFC curves, rather than from the presence of any local bias. Moreover, they highlight \ours's effectiveness as a plug-and-play spatial prior in small-data finetuning regimes. Full implementation details, initialization choices, further details on chosen baselines and per-dataset results are provided in \cref{ap:locality}. 

\section{Conclusion and Future Directions}
In this work, we introduced \ours, a novel masked attention mechanism inspired by the decay masks of Linear Transformers and the perspective of flattening via space filling curves. By integrating diverse spatial patterns into a unified mask, \ours enhances the understanding of relative spatial relationships without altering the training recipe, or introducing a significant computational cost.  

Experiments show that \ours is especially effective in small models and data-scarce settings, where spatial inductive bias is most critical. It serves as a plug-and-play module applicable only during fine-tuning, combining seamlessly with PEFT methods. More broadly, \ours emphasizes the overlooked role of patch ordering and spatial priors in ViT design, offering a lightweight and practical approach to strengthen locality in global attention ViTs.  

\textbf{Future directions} Since \ours operates directly on attention scores, it can be used in any setting that uses global attention and benefits from spatial priors. This opens up many exciting future directions. For instance, in dense tasks such as depth estimation, super-resolution and object tracking, explicit spatial priors are critical. In data-scarce applications like medical imaging or satellite analysis where learning spatial structure from scratch is costly, \ours's plug-and-play nature can allow us to inject strong locality priors at fine-tuning. Similarly, video understanding or multimodal learning presents promising opportunities to further explore the impact of \ours in various vision backbones.

\section*{Acknowledgements}
We thank the reviewers for their valuable feedback. This work was partially sponsored by the Army Research Office and was accomplished under Grant Number W911NF-24-1-0048, and partially funded by the Swiss National Science Foundation (SNSF) under grant number 200021-205011. Additionally, this work was supported under project ID \#37 as part of the Swiss AI Initiative, through a grant from the ETH Domain and computational resources provided by the Swiss National Supercomputing Centre (CSCS) under the Alps infrastructure and by the Swiss AI Initiative 2025 Fellowship Program.

\section*{Impact Statement}

This work aims to advance machine learning by improving the spatial inductive biases of vision transformers, particularly in small-model and data-scarce settings. By enabling more data-efficient and lightweight vision models, our approach may reduce dependence on large-scale pretraining and computational resources and improve resource constrained domains such as medical imaging, remote sensing, and scientific analysis. We do not foresee any negative ethical implications beyond those commonly associated with vision models.

\bibliography{references}
\bibliographystyle{icml2026}

\newpage
\onecolumn
\appendix
\makeatletter
\renewcommand\partname{}  %
\renewcommand\thepart{}   %
\makeatother
\newpage
\part{Appendix} %

\begin{itemize}
  \item \hyperref[ap:notation]{\textbf{A} Notations}
  \item \hyperref[bckg]{\textbf{B} Extended Background}
  \begin{itemize}
    \item \hyperref[bckg:vit]{B.1 ViTs and spatial priors}
    \item \hyperref[bckg:linear]{B.2 Linear Transformers}
    \item \hyperref[ap:sfc]{B.3 Space Filling Curves}
    \item \hyperref[ap:relpos]{B.4 Locality via decay mask}
    \item \hyperref[ap:toeplitz]{B.5 Efficiency of Toeplitz decay mask}
    \item \hyperref[ap:connections]{B.6 Connections of \ours to other models}
  \end{itemize}
  \item \hyperref[ap:proofs]{\textbf{C} Proofs}
  \begin{itemize}
    \item \hyperref[permutation_equivariance]{C.1 Attention is permutation equivariant}
    \item \hyperref[ap:sfc_distance]{C.2 SFCs in decay mask are a distance metric}
    \item \hyperref[Mc_equality]{C.3 \ours SFC flattening only reflects in decay mask}
    \item \hyperref[ap:sfc_avg]{C.4 Averaging multiple SFC decay masks}
  \end{itemize}
  \item \hyperref[sec:design]{\textbf{D} Further design details}
  \begin{itemize}
    \item \hyperref[ap:init]{D.1 Initialization}
    \item \hyperref[ap:other_arch]{D.2 Adaptation of \ours to various architectures}
  \end{itemize}
  \item \hyperref[ap:ablations]{\textbf{E} Ablation studies}
  \begin{itemize}
    \item \hyperref[abl:pos_embd]{E.1 Positional embeddings}
    \item \hyperref[ap:curve_configs]{E.2 Alternative curve configurations}
    \item \hyperref[ap:masking]{E.3 Alternative masking strategies}
    \item \hyperref[ap:other_ablations]{E.4 Other design elements}
    \item \hyperref[abl:gap]{E.5 Global Average Pooling (GAP)}
  \end{itemize}
  \item \hyperref[ap:additional]{\textbf{F} Additional results}
  \begin{itemize}
    \item \hyperref[ap:base]{F.1 Pretraining of larger models}
    \item \hyperref[ap:finetune]{F.2 Fine-tuning of \ours pretrained models}
    \item \hyperref[abl:multi_res]{F.3 Multi-resolution classification}
    \item \hyperref[ap:visuals]{F.4 Additional visualizations}
    \item \hyperref[ap:vtab]{F.5 Details and individual results on VTAB-1K dataset}
    \item \hyperref[ap:locality]{F.6 Comparison against other locality methods}
    \item \hyperref[ap:learned]{F.7 Learned curve order}
    \item \hyperref[ap:rpe_comp]{F.8 Comparison with relative positional encodings}
  \end{itemize}
  \item \hyperref[ap:codes]{\textbf{G} Codes and implementation details}
  \begin{itemize}
    \item \hyperref[ap:compute]{G.1 Compute resources}
    \item \hyperref[ap:vtab_hyper]{G.2 VTAB-1K hyperparameters}
    \item \hyperref[curve_codes]{G.3 Codes for curves}
    \item \hyperref[ap:decay_code]{G.4 Code of efficient decay mask}
  \end{itemize}
\end{itemize}

\clearpage

\newpage

\section{Notations} \label[appendix]{ap:notation}
In \cref{tab:notation}, we summarize the notations used in the paper.
\begin{table}[ht]
    \centering
    \caption{\textit{Notations}: Summary of notations used throughout the paper.}
    \label{tab:notation}
    \begin{tabular}{ll}
        \noalign{\hrule height 1.5pt}
         \textbf{Definition}& \textbf{Notation} \\
         \hline 
         Image & $\mathcal{I} \in \R^{H\times W \times d} $ \\
         Curves set & $\mathcal{C}$\\
         Curve ID& $c \in \mathcal{C}$\\
         Flattening operator with curve $c$& $F_c(\mathcal{I}): \R^{H\times W} \rightarrow \R^{N}$\\
         Flattened image with curve $c$& $\mathbf{X}_c \in \R^{N \times d }$ \\
         Permutation from curve $c_1$ to $c_2$& $\pi_{c_1\rightarrow c_2}(i)$  \\
         Permutation matrix from curve $c_1$ to $c_2$& $\mathbf{P}_{c_1\rightarrow c_2} \in \R^{N \times N}$ \\
         Decay mask for basis curve (Z-curve)& $\mathbf{M} \in \R^{N \times N}$ \\
         Decay mask for curve $c$ & $\mathbf{M}_c \in \R^{N \times N}$\\
         Permuted decay mask for curve $c$ & $\widetilde{\mathbf{M}_c} \in \R^{N \times N}$\\
         Average of all decay masks for all curves & $\mathbf{M}_\text{\ours} \in \R^{N \times N}$\\
         Average mask scaling parameter & $\alpha \in \R$\\
         Decay parameter for mask $\mathbf{M}_c$& $\gamma_c \in \R$ \\
         Queries, keys, values & $\mathbf{Q},\mathbf{K},\mathbf{V} \in \R^{N \times d}$ \\
         Integer index set & $\mathbb{Z}_{[0,N)} = \{\, i \in \mathbb{Z} \mid 0 \le i < N \,\}$ \\

         \noalign{\hrule height 1.5pt}
    \end{tabular}
\end{table}

\section{Extended Background} \label{bckg}

\subsection{ViTs and spatial priors} \label[appendix]{bckg:vit}
ViTs are powerful alternatives to Convolutional Neural Networks (CNNs)~\citep{oshea2015introductionconvolutionalneuralnetworks}, but their design comes with a fundamental limitation: a lack of inherent spatial inductive bias. Unlike CNNs, where convolutions naturally encode locality and translation equivariance, ViTs treat images as sequences of independent patches. Spatial relations must therefore be inferred entirely from data, with positional embeddings and patching serving as the primary source of spatial information~\citep{vit,yuan2021t2t}. This design provides ViTs with flexibility in modeling global dependencies, however it also removes the strong inductive priors that are especially critical in data-scarce settings~\citep{d2021convit, wu2021rethinking}.

The absence of spatial inductive bias makes ViTs particularly fragile and data hungry when model capacity or training data is limited. Small ViTs trained on large datasets often underperform compared to CNNs, since they cannot rely on built-in locality to efficiently capture low-level spatial features~\citep{deit, yuan2021t2t}. In contrast, when both models and datasets are sufficiently large, and training is long enough, ViTs can learn these biases directly from data. For instance, large-scale training on ImageNet-21k~\citep{ridnik2021imagenet21k} or JFT~\citep{sun2017jft} demonstrates that ViTs can eventually match or surpass CNNs, but this comes at considerable computational and data cost~\citep{vit, deit}. Therefore, spatial inductive bias is highly beneficial in practice, especially for downstream tasks, resource-constrained scenarios and small scale models.

Motivated by this tradeoff, various approaches have emerged to reintroduce spatial priors into transformer architectures. Hierarchical models such as Swin Transformer~\citep{liu2021swin,liu2022swinv2} and Pyramid Vision Transformer (PVT)~\citep{pvt, wang2022pvtv2} adopt CNN-like multi-scale processing, enabling more efficient capture of local and global dependencies. Similarly, T2T-ViT~\citep{yuan2021t2t} progressively aggregates tokens to embed local structure. These designs restore the inductive biases of locality and scale, improving performance in regimes where pure ViTs struggle.

Another line of work incorporates convolutions directly into the transformer pipeline. Convolutional hybrids such as CvT~\citep{wu2021rethinking}, ConViT~\citep{d2021convit}, and CMT~\citep{guo2022cmt} explicitly embed local connectivity into the attention mechanism or token embedding process, bridging the gap between CNNs and ViTs. Other methods explore novel locality-aware mechanisms, including vicinity attention~\citep{zhang2023vicinity}, shuffle-based spatial mixing~\citep{huang2021shuffle}, and localized attention modules~\citep{li2021localvit, chu2023conditional}. Even more recent innovations, such as RMT~\citep{fan2024rmt}, propose decay masks inspired by RetNet~\citep{retnet} to enforce local inductive constraints.

Despite their effectiveness, most of these approaches achieve improved spatial priors by directly modifying the ViT architecture such as embedding convolutions into tokenization, or restructuring the model into hierarchical stages. While such changes enhance locality, they also increase design complexity, reduce modularity, and often require pretraining from scratch on large datasets to fully realize their benefits. This makes them less practical in settings where one wishes to reuse widely available pretrained vanilla ViTs. In contrast, methods that can inject spatial inductive bias without altering the base architecture, for instance, during fine-tuning, offer a more lightweight and flexible alternative, enabling broader applicability to downstream tasks and smaller models without sacrificing compatibility with existing pretrained checkpoints.

What remains missing is a simple mechanism to bridge this gap: an approach that can utilize already trained ViTs while still strengthening their spatial priors, which can be achieved via \ours with close to zero additional cost. 

\subsection{Linear Transformers} \label[appendix]{bckg:linear}
Linear attention is mathematically equivalent to an RNN~\citep{trans_rnn}
\begin{align} \label{linatt2}
    \quad \mathbf{S}_i = \mathbf{S}_{i-1} + \mathbf{k}_i^\top\mathbf{v}_i, \quad \mathbf{y}_i = \mathbf{q}_i^\top\mathbf{S}_{i}  \quad \Leftrightarrow \quad \mathbf{Y} = (\mathbf{Q}\mathbf{K}^\top \odot \mathbf{L_\text{Causal}})\mathbf{V},
\end{align}
where $\mathbf{S}_i \in \mathbb{R}^{d \times d}$ represents the hidden state of the Linear Transformer in its equivalent RNN form and $\mathbf{L_\text{Causal}} \in \R^{N \times N}$ is lower triangular matrix of ones. 

Building on that, Linear Transformers with a scalar decay factor commonly take the following recurrent form:
\begin{equation}
    \mathbf{S}_i = \mathbf{\Lambda}_i\mathbf{S}_{i-1} + \mathbf{k}_i^\top\mathbf{v}_i, \quad \mathbf{u}_i = \mathbf{q}_i^\top\mathbf{S}_{i}
\end{equation}
with hidden state $\mathbf{S}_i$ and output $\mathbf{y}_i$. Here, the behavior of the model is determined by the choice of the decay parameter \( \mathbf{\Lambda}_i \). It is also standard practice to apply a non-linearity to the queries and keys, such that \( \mathbf{Q}, \mathbf{K} = \phi(\mathbf{W}_Q \mathbf{X}), \phi(\mathbf{W}_K \mathbf{X}) \), and to scale attention in relation to past tokens, as discussed in \citep{trans_rnn}.

\paragraph{No decay} In vanilla Linear Transformers (\cref{linatt}), there is no decay term, or equivalently \( \mathbf{\Lambda}_i = \mathbf{I} \) where $\mathbf{I}$ is the identity matrix. As a result, these models do not encode relative positional information. Performer~\citep{performer} is a representative example, using Random Fourier Features (RFF)~\citep{peng2021random} as the non-linear function \( \phi(\cdot) \), without any form of decay mechanism.

\paragraph{Non input-dependent decay} A key example in this category is RetNet~\citep{retnet}, which employs a fixed scalar decay parameter \( \mathbf{\Lambda}_i = \gamma \). This introduces a locality bias in the attention computation, but the decay remains constant and independent of the input sequence.

\paragraph{Input-dependent decay} Several recent linear transformers in the NLP domain fall into this category, where the decay parameter \( \mathbf{\Lambda}_i = g(\mathbf{x}_i) \) is a function of the input and thus varies across tokens. For example, DeltaNet~\citep{deltanet} defines the decay using the Delta Rule~\citep{schlag2021linear} as \( \mathbf{\Lambda}_i = \mathbf{I} - \mathbf{k}_i \mathbf{k}_i^\top \), while Gated RFA~\citep{peng2021random} uses an input-dependent scalar decay of the form \( \mathbf{\Lambda}_i = \sigma(\mathbf{W}\mathbf{x}_i) \), where \( \sigma(\cdot) \) is the sigmoid function and \( \mathbf{W} \in \mathbb{R}^d \), resulting in a scalar decay value per token.

\paragraph{Selective SMMs} This category of models is closely related to linear transformers with input-dependent decay. A prominent example is Mamba~\citep{mamba}, which can be interpreted as a linear transformer with an input-dependent diagonal matrix as the decay parameter \( \mathbf{\Lambda}_i \)~\citep{yang2023gated}. Mamba-2~\citep{mamba2}, a simplified variant, further refines this by using an exponential formulation for the decay factor: \( \mathbf{\Lambda}_i = \exp(-\exp(\mathbf{W}\mathbf{x}_i)) \), enabling a more stable and expressive modeling of token-wise recurrence.

\subsection{Space Filling Curves} \label[appendix]{ap:sfc}

SFCs have diverse applications across various domains, including image compression and generation~\citep{wang2022neural, dafner2000context}, point cloud processing~\citep{deepspace}, data mining~\citep{böhm2020spacefillingcurveshighperformancedata}, and data movement~\citep{walker2023impactspacefillingcurvesdata}. In this section, we define the curves used in this study as flattening operation $F_c$ for each curve. The definitions are adapted from~\citep{sagan1994space,peano1990courbe,hilbert1935stetige, zhao2024rethinkingzigzagflatteningimage}.
\paragraph{Z-curve}
The Z-curve, also known as sweep, row-major order, or raster scan, is the simplest and most widely used method for flattening a 2D image into a 1D sequence. It scans the image row by row, from top to bottom and left to right within each row. More concretely, for an image with width $W$, the flattening function can be defined as 
\begin{align} \label{z-curve}
    F_z(i,j) = i W +j.
\end{align}
This flattening order is the default scanning method in many vision models, including ViTs. As a result, we use it as our basis in the paper.

\paragraph{Snake Curve}
The snake curve, also known as boustrophedon order~\citep{snake_curve}, is a variation of the Z-curve that alternates the scanning direction across rows. Even-indexed rows are traversed left to right, while odd-indexed rows are traversed right to left, creating a continuous snake path through the image. The flattening function is given by:
 \begin{align} \label{snake-curve}
F_{\text{snake}}(i, j) =
\begin{cases}
i \cdot W + j & \text{if } i \bmod 2 = 0 \\
i \cdot W + (W - 1 - j) & \text{if } i \bmod 2 = 1
\end{cases}
\end{align}
This curve has a simplicity similar to the Z-curve while reducing long jumps between the end of one row and the beginning of the next. It is utilized in various applications, including image processing and path planning, due to its efficiency in covering areas without unnecessary repositioning.

\paragraph{Zig-zag Curve}
The Zig-zag curve~\citep{zigzag} is a diagonal scanning pattern that visits patches of an image along consecutive diagonals, alternating direction at each level. More concretely, with an image of size $H \times W$, for each diagonal $g \in \{0,\dots, H + W - 2\}$, it scans the elements where $ i+j=g$, from top-right to bottom-left on odd-numbered diagonals and from bottom-left to top-right on even-numbered ones. In other words, for each diagonal $g$, let the set of valid coordinates on that diagonal be
$D_g = \left\{ (i, j) \mid i + j = g, \; 0 \leq i < H, \; 0 \leq j < W \right\}$.
Then the ordering of $F_{\text{zigzag}}(i, j)$ can be defined by
\begin{equation} \label{zigzag}
F_{\text{zigzag}}(i, j) = \left( \sum_{k=0}^{g-1} |D_k| \right) + \operatorname{offset}_g(i, j),
\end{equation}
where $|D_k|$ is the length of the diagonal and $\operatorname{offset}_g(i, j)$ is
\[
\operatorname{offset}_g(i, j) =
\begin{cases}
\#\{(i', j') \in D_g \mid j' < j\} & \text{if } g \bmod 2 = 0, \\
\#\{(i', j') \in D_g \mid j' > j\} & \text{if } g \bmod 2 = 1.
\end{cases}
\]

The zig-zag curve is most commonly used in applications where frequency components are spatially grouped such as the JPEG compression standard to serialize the block of discrete cosine transform (DCT) coefficients, to ensure that low-frequency components that carry the most information appear early in the sequence.

\paragraph{Hilbert Curve}
The Hilbert curve~\citep{hilbert1935stetige} recursively divides the space into quadrants and connects them in a continuous path that fills the entire 2D grid. Similar to Peano curve, the Hilbert curve is most naturally defined on square images of size  $2^p\times2^p$ where the recursive quadrant-based construction aligns with the binary structure of the coordinates. The flattening function $F_{\text{hilbert}}(i, j)$ does not have a simple closed-form expression, but can be computed via recursive or bitwise algorithms, for example, Butz or Moore methods~\citep{BUTZ1969128, moore_curve}.

For an image of size $H \times W$ with $H = W = 2^p$, we can define the Hilbert curve flattening function as
\begin{equation} \label{hilbert-curve}
F_{\text{hilbert}}(i, j) = \sum_{k=1}^{n} q_k \cdot 4^{n - k}
\end{equation}
where $q_1 q_2 \cdots q_n$ is the base-4 Hilbert index corresponding to the normalized pixel center:
\begin{equation}
\left( \frac{i}{2^n} + \frac{1}{2^{n+1}}, \quad \frac{j}{2^n} + \frac{1}{2^{n+1}} \right)
\in [0, 1)^2
\end{equation}
Each digit $q_k \in \{0, 1, 2, 3\}$ represents the quadrant at level $k$ in the recursive Hilbert construction.

Points that are close in 2D space tend to remain close in 1D, which makes it especially valuable in image processing, spatial indexing, and contexts where locality is significant.

\paragraph{Peano Curve}
The Peano curve, also called Z-order curve or Morton curve,~\citep{peano1990courbe} is a recursive scanning approach that preserves spatial locality by interleaving the binary representations of the row and column indices. It is particularly well-suited to square grids of size $2^p\times2^p$ as the bit structure of the coordinates aligns naturally with the recursive subdivisions of the curve. 

For $H=W=2^p$, let $(i, j) \in \{0, \dots, 2^p - 1\}^2$ be the pixel coordinates, and we can write their binary expansions:
\begin{equation}
i = \sum_{k=0}^{n-1} i_k \cdot 2^k, \quad
j = \sum_{k=0}^{n-1} j_k \cdot 2^k
\quad \text{with } i_k, j_k \in \{0, 1\}
\end{equation}
\begin{equation} \label{morton}
F_{\text{peano}}(i, j) = \text{interleave\_bits}(i, j) = \sum_{k=0}^{p-1} \left( j_k \cdot 2^{2k + 1} + i_k \cdot 2^{2k} \right)
\end{equation}
As it can be constructed bitwise, it is computationally efficient and commonly used in applications like image tiling, spatial databases, and quadtree indexing. 

\paragraph{Remark:} While the Peano and Hilbert curves are most naturally defined on square grids with power-of-two dimensions, they can be easily extended to arbitrary image sizes by truncating higher-order bits, using padding, clipping, or floating-point mapping techniques~\citep{gilbert,general_sfc}. In \cref{fig:hilbert_morton}, we visually show how to extend these curves to non-power-of-2 cases with codes provided in \cref{curve_codes}.

\begin{figure}[t]
  \centering
  \begin{subfigure}[b]{0.24\textwidth}
    \includegraphics[width=\linewidth]{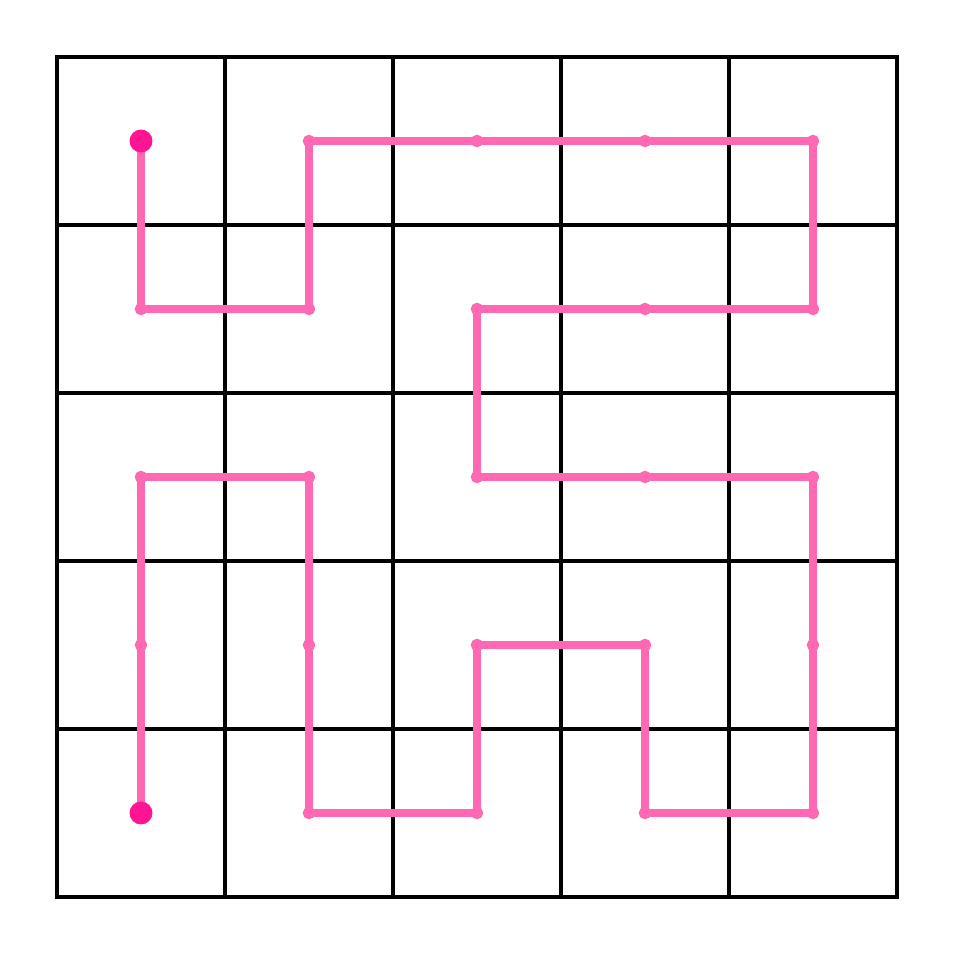}
    \caption{Hilbert on $5\times5$ grid.}
    \label{fig:sub1}
  \end{subfigure}
  \hfill
  \begin{subfigure}[b]{0.24\textwidth}
    \includegraphics[width=\linewidth]{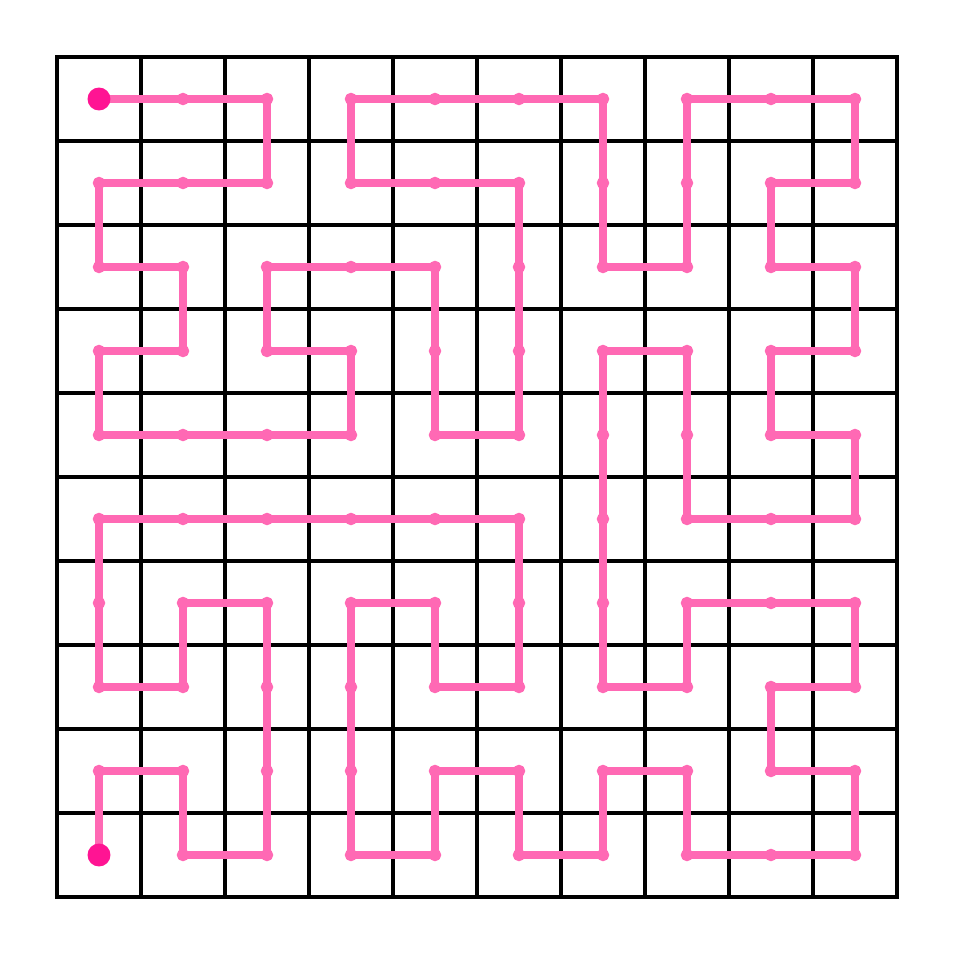}
    \caption{Hilbert on $10\times10$ grid.}
    \label{fig:sub2}
  \end{subfigure}
  \hfill
  \begin{subfigure}[b]{0.24\textwidth}
    \includegraphics[width=\linewidth]{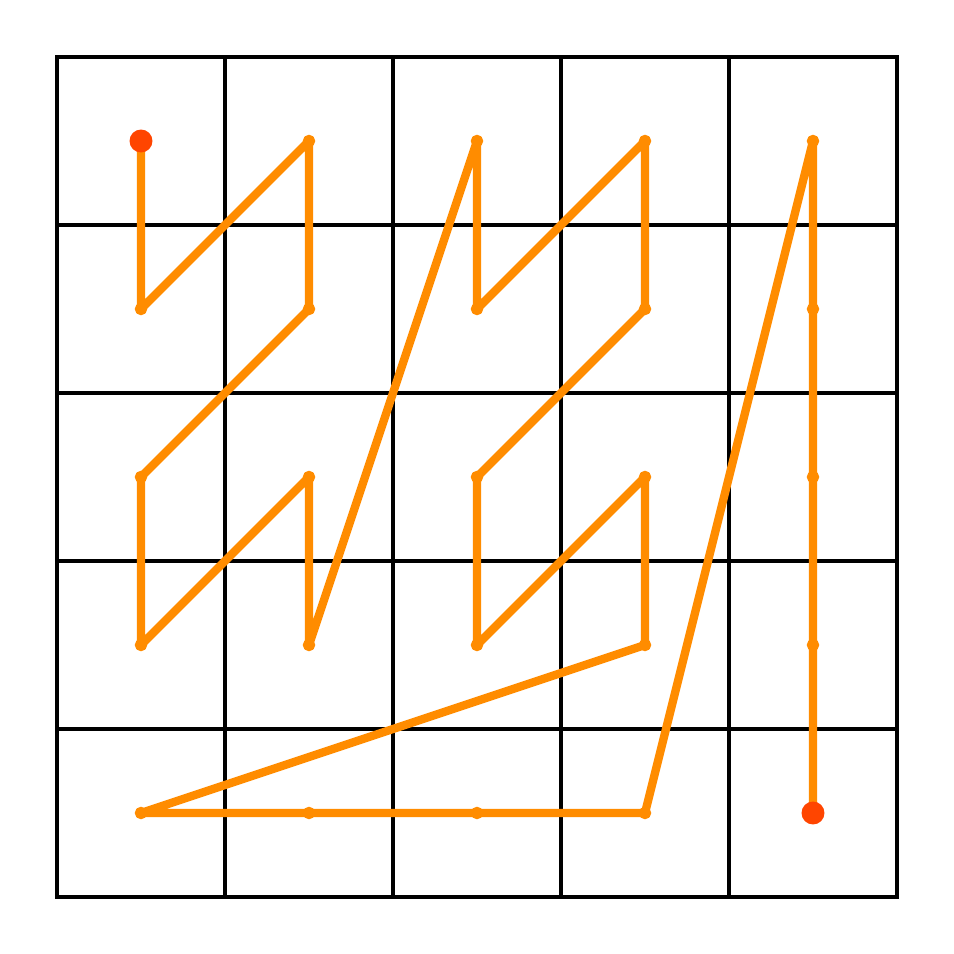}
    \caption{Peano on $5\times5$ grid.}
    \label{fig:sub3}
  \end{subfigure}
  \hfill
  \begin{subfigure}[b]{0.24\textwidth}
    \includegraphics[width=\linewidth]{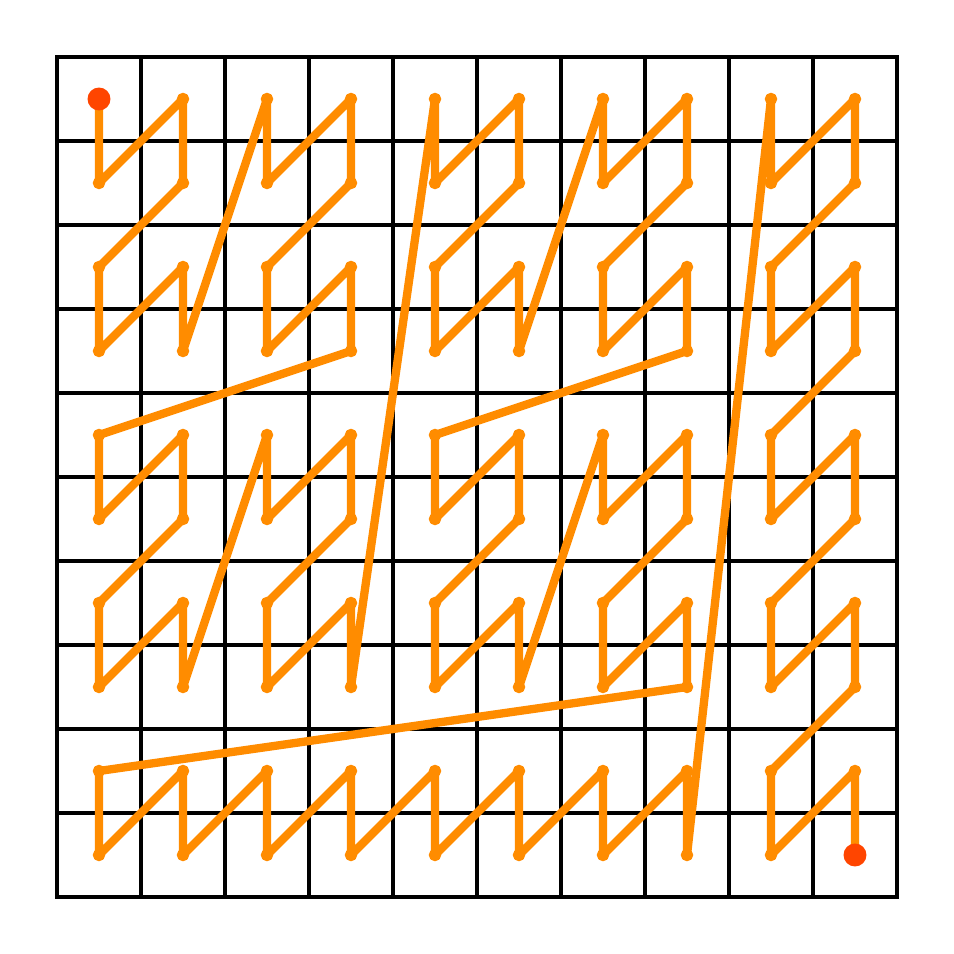}
    \caption{Peano on $10\times10$ grid.}
    \label{fig:sub4}
  \end{subfigure}
    \caption{\textit{Extension of Hilbert and Peano curves:} Visualization of how Hilbert and Peano curves extend to non-power-of-2 grids.}
  \label{fig:hilbert_morton}
\end{figure}

\paragraph{Flattening with transposed curves} \label{sfc_trans}
Standard SFCs are typically defined over fixed scans using row-major or column-major orderings. To increase the diversity of locality preserving patterns without incurring additional cost, we introduce transposed variants of standard SFCs—such as column-major Snake or vertical Zig-Zag. These variants simply swap coordinates during traversal. We define the flattened image under a transposed curve as:
\begin{equation} \label{rot_F}
   \mathbf{X}_{c^\top}[n] = \mathcal{I}[i, j] \quad \text{where} \quad n=F_{c\top}(i, j) = F_c(j,i).
\end{equation}
Accordingly, we expand our curve set to include these rotated versions, resulting in the final \ours curve set:
\begin{align}
    \mathcal{C} = \{\text{Snake, Zig-Zag, Peano, Hilbert, Snake$^{\top}$, Zig-Zag$^{\top}$, Peano$^{\top}$, Hilbert$^{\top}$}\}
\end{align}

\subsection{Locality via decay mask} \label[appendix]{ap:relpos}
\paragraph{Decay mask structure}  An example of a \(4 \times 4\) causal decay mask with non-input-dependent decay factor, as used in RetNet~\citep{retnet}, is
\begin{align}
    \mathbf{M_\text{Causal}} =
\begin{bmatrix}
1           &     &   & \\
\gamma      & 1            &        &  \\
\gamma^2    & \gamma       & 1            &  \\
\gamma^3    & \gamma^2     & \gamma        & 1
\end{bmatrix}, \qquad
    \mathbf{M_\text{Causal}}[i,j] =  \begin{cases} 
    \gamma^{i-j} & i \geq j  \\
    0 & i < j
\end{cases}
\end{align}
As seen in the causal decay mask above, the decay masking the attention \( \mathbf{M_\text{Causal}}[i,j] \) depends only on the difference between \( i \) and \( j \), specifically \( \mathbf{M_\text{Causal}}[i,j] = \gamma^{|i-j|} \). which reflects the locality information in the causal decay mask.

As an extension for bidirectional tasks, such as image classification, the causal mask can be extended to a full Toeplitz decay mask, as shown in~\citep{lion}:
\begin{align}
    \mathbf{M} =
\begin{bmatrix}
1           &    \gamma  & \gamma ^2  & \gamma ^3 \\
\gamma      & 1            &   \gamma      &   \gamma ^2\\
\gamma^2    & \gamma       & 1            &\gamma   \\
\gamma^3    & \gamma^2     & \gamma        & 1
\end{bmatrix}, \qquad
    \mathbf{M}[i,j] =  \gamma^{|i-j|}
\end{align}
 in this case, the attention between each pair of tokens \( i \) and \( j \) is masked based on their distance \( |i-j| \). Additionally, the decay factor \( 0 < \gamma < 1 \) is bounded between to ensure that \( \mathbf{M}[i,j] \) does not overflow and remains stable~\citep{lru}. %

\paragraph{Extrapolation capabilities of decay mask} The decay mask \( \mathbf{M} \) can easily be extrapolated beyond the context length~\citep{mamba2, retnet} because \( \mathbf{M}[i,j] = \gamma^{|i-j|} \) is independent of the sequence length. This is especially useful since we can change the resolution of images during inference without needing to interpolate or extrapolate the position embeddings~\citep{vit, dino}. This capability is particularly valuable when generating videos for object tracking in \ours DINO.

\subsection{Efficiency of Toeplitz decay mask} \label{ap:toeplitz}

As mentioned in the background \cref{bckg:linear}, the decay parameter $\gamma$ can be input dependent as well, which means that it is extracted for each token as:
\begin{align} \label{eqfox}
    \gamma_i = g(\mathbf{W}_\gamma \mathbf{x}_i) , \quad \mathbf{M}[i,j] = \gamma_j\gamma_{j+1}...\gamma_{i} =\prod_{k=j}^{i} \gamma_k
\end{align}
with $g(.)$ being a bounded function such that $0<g(x)<1$ (i.e. sigmoid). This results in each element of the decay mask \( \mathbf{M}[i,j] \) representing the cumulative product of decay contributions from all tokens between positions \( i \) and \( j \) leading to input-dependent decay masks. While these type of masks can offer finer-grained control, they are slower to train, requiring \( \mathcal{O}(\log(N)) \) time points to compute~\citep{mamba, mamba2}, consume more memory, and must be dynamically constructed during inference. In contrast, input-independent decay masks such as the one used in \ours are much more efficient. We adopt the decay mask in \ours as it is faster to train, memory-efficient (requiring only a single learned scalar \( \gamma \) per curve), and eliminates the need for recomputation during inference. This simple scalar-based design still performs effectively and achieves strong results in practice~\citep{lion}.

\subsection{Connections of \ours to other models} \label{ap:connections}

As \ours is inspired by the forget gate (also known as the decay mask) in Linear Transformers, it shares strong connections with these models and their adaptations for vision tasks. Below, we highlight some of the most relevant connections:

\paragraph{RMT} RMT~\citep{fan2024rmt} also introduces a decay mask (via Manhattan distance) to enhance the spatial awareness of ViTs, addressing a similar challenge. However, it differs from \ours in key ways. RMT uses only a single flattening strategy and applies a fixed distance metric (Manhattan), while \ours generates multiple masks based on different SFCs and defines a Kac–Murdock–Szegő (KMS) matrix for the decay. Architecturally, \ours is a modular attention mechanism that can be plugged into various ViT backbones, whereas RMT is a standalone model. We also conducted an ablation using the Manhattan distance decay as in RMT, and found it underperforms compared to \ours. Detailed results are provided in \cref{tab:extra_curves}.

\paragraph{FoX} FoX, or Forgetting Transformer~\citep{fox}, is designed for causal sequence modeling, specifically to capture long-range dependencies in the NLP domain. It uses an input-dependent causal decay mask, as shown in \cref{eqfox}, which differs significantly from \ours in both application domain and mask design. Moreover, the perspective central to \ours, based on flattening and scanning via space-filling curves, does not appear in FoX, as it operates in the NLP setting rather than vision tasks.

\paragraph{Vision Linear Transformer} This class includes models such as Vision LSTM~\citep{alkin2024visionlstm}, Vision Mamba~\citep{zhu2024visionmambaefficientvisual}, and VMamba~\citep{liu2024vmamba}, which are related to \ours due to their use of different scanning strategies primarily based on the Z-curve in both standard and transposed (horizontal and vertical) directions. However, these models significantly differ from \ours in architecture, as they are based on SSMs like Mamba~\citep{mamba} or other linear attention mechanisms, rather than softmax-based Transformers. In contrast, \ours is a softmax-based masked attention module that can be easily integrated into various ViT backbones. In this study, we apply \ours to DeiT, DeiT-III, and DINO as representative examples.

\paragraph{MAE} Masked Auto Encoders (MAE)~\citep{He2022MAE} apply random input masking as a pretraining objective, dropping patches and training the model to reconstruct them. This masking affects only the input and does not influence attention computation. In contrast, \ours applies structured masking within the attention mechanism, using decay masks based on space-filling curves to rescale attention scores, without dropping tokens or reconstructing inputs. It serves as a spatial inductive bias, guiding the model to attend more to nearby regions without altering the input or training objective.

\section{Proofs}\label{ap:proofs}

\subsection{Attention is permutation equivariant} \label[appendix]{permutation_equivariance}
\begin{claim}
    Attention without positional embeddings is permutation-equivariant. That is,
    \begin{align}
        A(\pi(\mathbf{X})) = \pi(A(\mathbf{X}))
    \end{align}
    where \( A(\cdot) \) is the output of the attention mechanism, and \( \pi(\cdot) \) denotes a permutation of the sequence.
\end{claim}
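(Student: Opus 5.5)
We will prove the claim by writing the permutation as a permutation matrix and propagating it through each stage of the attention computation in \cref{eq:qkv_attention}. Following \cref{perm_matrix}, represent $\pi$ by a permutation matrix $\mathbf{P}\in\{0,1\}^{N\times N}$, so that $\pi(\mathbf{X})=\mathbf{P}\mathbf{X}$. The only facts we need are:
\begin{itemize}
\item $\mathbf{P}^\top\mathbf{P}=\mathbf{I}$;
\item $\mathbf{P}$ acts on the row (token) dimension, whereas the weights $\mathbf{W}_Q,\mathbf{W}_K,\mathbf{W}_V$ act on the embedding dimension;
\item the row-wise Softmax commutes with simultaneous permutation of rows and columns.
\end{itemize}

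First we handle the linear projections. With input $\mathbf{P}\mathbf{X}$, the queries, keys and values become $\mathbf{P}\mathbf{X}\mathbf{W}_Q=\mathbf{P}\mathbf{Q}$, $\mathbf{P}\mathbf{K}$ and $\mathbf{P}\mathbf{V}$ by associativity of matrix multiplication. The score matrix is then
\begin{equation*}
(\mathbf{P}\mathbf{Q})(\mathbf{P}\mathbf{K})^\top/\sqrt{d}=\mathbf{P}\,\mathbf{Q}\mathbf{K}^\top\mathbf{P}^\top/\sqrt{d}.
\end{equation*}
Its rows and columns are therefore reindexed by the same permutation.

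The key step, and the only one needing care, is the identity $\text{Softmax}(\mathbf{P}\mathbf{S}\mathbf{P}^\top)=\mathbf{P}\,\text{Softmax}(\mathbf{S})\,\mathbf{P}^\top$, where Softmax is applied row-wise. We verify it entrywise. Write $\mathbf{P}\mathbf{S}\mathbf{P}^\top[n,m]=\mathbf{S}[\pi(n),\pi(m)]$, with the index convention of \cref{perm_matrix}. The normalizer of row $n$ is then $\sum_m \exp(\mathbf{S}[\pi(n),\pi(m)])$. Because $\pi$ is a bijection of $\mathbb{Z}_{[0,N)}$, this sum equals $\sum_{m'}\exp(\mathbf{S}[\pi(n),m'])$, which is exactly the normalizer of row $\pi(n)$ of $\mathbf{S}$. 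Hence each entry of the left-hand side equals the corresponding reindexed entry of $\text{Softmax}(\mathbf{S})$. The same argument works for any elementwise map followed by row normalization, so it also covers the $1/\sqrt{d}$ scaling.

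Combining the steps gives
\begin{equation*}
A(\mathbf{P}\mathbf{X})=\mathbf{P}\,\text{Softmax}(\mathbf{Q}\mathbf{K}^\top/\sqrt{d})\,\mathbf{P}^\top\mathbf{P}\mathbf{V}=\mathbf{P}\,A(\mathbf{X}),
\end{equation*}
using $\mathbf{P}^\top\mathbf{P}=\mathbf{I}$. This is $\pi(A(\mathbf{X}))$. For multi-head attention, the same argument applies to each head; concatenation along the feature dimension and the output projection both act on columns, so they commute with $\mathbf{P}$. Finally, the claim's hypothesis of no positional embeddings is essential: adding a fixed $\mathbf{E}$ would turn the input into $\mathbf{P}\mathbf{X}+\mathbf{E}\neq\mathbf{P}(\mathbf{X}+\mathbf{E})$ in general. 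The same reasoning shows why a fixed mask $\mathbf{M}$ as in \cref{newattention} breaks equivariance unless $\mathbf{P}\mathbf{M}\mathbf{P}^\top=\mathbf{M}$.
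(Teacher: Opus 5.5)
Your proof is correct and follows essentially the same route as the paper: you represent $\pi$ by a permutation matrix $\mathbf{P}$, push it through the projections to obtain $\mathbf{P}\mathbf{Q}\mathbf{K}^\top\mathbf{P}^\top$, pull $\mathbf{P}$ and $\mathbf{P}^\top$ out of the row-wise softmax, and cancel $\mathbf{P}^\top\mathbf{P}=\mathbf{I}$. Your entrywise check of $\text{Softmax}(\mathbf{P}\mathbf{S}\mathbf{P}^\top)=\mathbf{P}\,\text{Softmax}(\mathbf{S})\,\mathbf{P}^\top$ usefully proves the step that the paper only asserts with the phrase that permutation matrices ``preserve row-wise operations''.
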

\begin{proof} \label{permuteequi}
Let \( \mathbf{X} \in \mathbb{R}^{N \times d} \) be the input sequence with \( N \) tokens and model dimension \( d \). The attention is defined as
\begin{align}
    \mathbf{Q} = \mathbf{X} \mathbf{W}_Q, \quad \mathbf{K} = \mathbf{X} \mathbf{W}_K, \quad \mathbf{V} = \mathbf{X} \mathbf{W}_V, \qquad
    A(\mathbf{X}) = \text{Softmax}\left(\frac{\mathbf{Q} \mathbf{K}^\top}{\sqrt{d}}\right) \mathbf{V}.
\end{align}
Let \( \pi \) be a permutation of the input sequence, represented by a permutation matrix \( \mathbf{P} \in \mathbb{R}^{N \times N} \) such that \( \pi(\mathbf{X}) = \mathbf{P} \mathbf{X} \) and $\mathbf{P}\mathbf{P}^\top = \mathbf{I}$. Then
\begin{align} \label{permqkv}
    \pi(\mathbf{Q}) = \mathbf{P} \mathbf{X} \mathbf{W}_Q = \mathbf{P} \mathbf{Q}, \quad \pi(\mathbf{K}) = \mathbf{P} \mathbf{K}, \quad \pi(\mathbf{V}) = \mathbf{P} \mathbf{V}.
\end{align}
Now compute the attention on the permuted input
\begin{align}
    A(\pi(\mathbf{X})) &= \text{Softmax}\left( \frac{(\mathbf{P} \mathbf{Q})(\mathbf{P} \mathbf{K})^\top}{\sqrt{d}} \right)(\mathbf{P} \mathbf{V}) 
    = \text{Softmax}\left( \frac{\mathbf{P} \mathbf{Q} \mathbf{K}^\top \mathbf{P}^\top}{\sqrt{d}} \right)\mathbf{P} \mathbf{V}
\end{align}
Since softmax is applied row-wise and permutation matrices preserve row-wise operations, we can factor \( \mathbf{P} \) out
\begin{equation}
    \scalebox{1}{$
A(\pi(\mathbf{X})) = \mathbf{P} \, \text{Softmax}\left( \frac{\mathbf{Q} \mathbf{K}^\top}{\sqrt{d}} \right)
\underset{\mathbf{I}}{\cancel{\mathbf{P}^\top\mathbf{P}}}
\mathbf{V}=
 \mathbf{P} \, \text{Softmax}\left( \frac{\mathbf{Q} \mathbf{K}^\top}{\sqrt{d}} \right)\mathbf{V} =
 \mathbf{P} A(\mathbf{X}) = \pi(A(\mathbf{X}))$}
\end{equation}
Thus, attention is permutation-equivariant in the absence of positional embeddings.
\end{proof} 

\subsection{SFCs in decay mask are a distance metric} \label{ap:sfc_distance}
\begin{claim}
Let \( \mathbf{X}_{c_1} \in \mathbb{R}^{N \times d} \) be the flattened image using a space-filling curve \( c_1 \), with the sequence indexed by \( i,j,k \in \{0, \dots, N-1\} \). Any permutation \( \pi_{c_2} \), corresponding to a new flattening order defined by a different curve \( c_2 \), when applied to \( \mathbf{X}_{c_1} \), induces a new sequence order. In this new order, the term \( |\pi(i) - \pi(j)| \) satisfies the non-negativity, identity of indiscernibles, symmetry and triangle inequality properties of a distance metric between tokens \( i \) and \( j \).
\end{claim}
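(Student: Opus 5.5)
The plan is to define $d_\pi(i,j) := |\pi(i)-\pi(j)|$ on the index set $\mathbb{Z}_{[0,N)}$ and observe that it is the pullback of the standard absolute-value metric on $\mathbb{Z}$ (equivalently on $\mathbb{R}$) along the map $\pi$. Here $\pi = \pi_{c_1\mapsto c_2}$ is the permutation of \cref{perm_function}. The whole argument rests on one structural fact: $\pi$ is a bijection, hence injective. This is already noted in the paper right after \cref{perm_function}, since flattening is one-to-one and $\pi$ is invertible. With this in hand, each metric axiom reduces to the corresponding axiom for $|\cdot|$ on integers.

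The four properties would be checked in order.
\begin{itemize}
\item \emph{Non-negativity} is immediate, since $|\pi(i)-\pi(j)|\ge 0$ as an absolute value.
\item \emph{Symmetry} follows from $|a-b|=|b-a|$ with $a=\pi(i)$, $b=\pi(j)$.
\item \emph{Triangle inequality}: for any $i,j,k$, write $\pi(i)-\pi(j) = (\pi(i)-\pi(k)) + (\pi(k)-\pi(j))$ and apply the triangle inequality for the absolute value. This gives $d_\pi(i,j)\le d_\pi(i,k)+d_\pi(k,j)$.
\item \emph{Identity of indiscernibles}: if $i=j$ then trivially $d_\pi(i,j)=0$. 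Conversely, $d_\pi(i,j)=0$ means $\pi(i)=\pi(j)$, and injectivity of $\pi$ gives $i=j$.
\end{itemize}

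The only step that uses anything about space-filling curves is identity of indiscernibles. It requires that $\pi$ be a genuine permutation rather than an arbitrary relabeling. I would therefore spell out why this holds: $F_{c_1}$ and $F_{c_2}$ in \cref{flat_w_curve} are bijections $\mathbb{Z}_{[0,H)}\times\mathbb{Z}_{[0,W)}\to\mathbb{Z}_{[0,N)}$. This follows from \cref{def:sfc}, since each grid point is visited exactly once, and also from the explicit formulas in \cref{ap:sfc}. Consequently $\pi_{c_1\mapsto c_2}=F_{c_2}\circ F_{c_1}^{-1}$ is a bijection of $\mathbb{Z}_{[0,N)}$.

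I expect this to be the main subtlety, mild as it is. For the extended Hilbert and Peano curves on non-power-of-two grids (\cref{fig:hilbert_morton}), bijectivity should be stated as an assumption of the construction, namely that truncation or clipping still yields each index exactly once.

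Finally, I would remark on the consequence for the method. Every entry of the permuted mask $\widetilde{\mathbf{M}_c}$ has the form $\gamma_c^{d_\pi(i,j)}$. Each curve-specific mask in \cref{viatt_pre} is therefore a decay in a bona fide metric on the patches, which is what justifies reading the mask as a locality prior.
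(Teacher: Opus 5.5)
Your proposal is correct and follows essentially the same argument as the paper. Each of the four axioms is inherited from the absolute value on $\mathbb{Z}$, and bijectivity of the permutation is the only extra ingredient, needed for identity of indiscernibles; you also end with the same remark that each decay mask entry is $\gamma$ raised to this distance. Your explicit derivation of bijectivity from the flattening maps $F_{c_1},F_{c_2}$ adds a detail the paper only asserts, and it is safe whichever direction convention is used for the permutation ($\pi$ versus $\pi^{-1}$), since the argument works for any bijection.
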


\begin{proof}
To show that \( |\pi(i) - \pi(j)| \) is a valid distance metric, we verify that it satisfies the standard properties of a metric:

\hspace*{2em}\textit{Non-negativity:}  
    For all \( i, j \), we have  
    \begin{equation}
    |\pi(i) - \pi(j)| \geq 0
    \end{equation} 
    \hspace*{2em}since absolute values are always non-negative.

\hspace*{2em}\textit{Identity of indiscernibles:}  
    \begin{equation}
    |\pi(i) - \pi(j)| = 0 \iff \pi(i) = \pi(j) \iff i = j
    \end{equation}  
    \hspace*{2em}because \( \pi \) is a permutation (i.e., a bijective function), so \( \pi(i) = \pi(j) \) implies \( i = j \).

\hspace*{2em}\textit{Symmetry:}  
    \begin{equation}
    |\pi(i) - \pi(j)| = |\pi(j) - \pi(i)|
    \end{equation}  
    \hspace*{2em}by the symmetry of absolute value.

\hspace*{2em}\textit{Triangle inequality:}  
    For any \( i, j, k \in \{0, \dots, N-1\} \),
    \begin{equation}
    |\pi(i) - \pi(j)| \leq |\pi(i) - \pi(k)| + |\pi(k) - \pi(j)|
    \end{equation}
    \hspace*{2em}holds due to the triangle inequality property of absolute values.

Therefore, \( |\pi(i) - \pi(j)| \) satisfies all the conditions of a distance metric. This property is particularly interesting because the term \( |\pi(i) - \pi(j)| \) appears as the exponent in the decay mask, leading to \( \mathbf{M}_{c_2}[i,j] = \gamma^{|\pi(i) - \pi(j)|} \). As a result, taking the logarithm of the decay mask yields a distance matrix,
$
\log(\mathbf{M}_{c_2}[i,j]) = |\pi(i) - \pi(j)| \cdot \log(\gamma)
$
thus, \( \log(\mathbf{M}_{c_2}) \) is a scaled distance matrix, encoding relative positional distances under the permutation induced by curve \( c_2 \).
\end{proof}

\subsection{\ours SFC flattening only reflects in decay mask}\label{Mc_equality}
\begin{claim} 
    Let the input sequence flattened using a base space-filling curve (e.g., Z-curve) be denoted by \( \mathbf{X} \in \mathbb{R}^{N \times d} \), and let the output of \ours attention be \( \mathbf{Y} \in \mathbb{R}^{N \times d} \), computed as:
    \begin{align} \label{viatt}
        \mathbf{Y} = \text{Softmax}\left(\alpha \frac{\mathbf{QK}^\top}{\sqrt{d}} \odot \mathbf{M} \right) \mathbf{V}
    \end{align}
    where \( \mathbf{M} \in \mathbb{R}^{N \times N} \) is the base decay mask with entries \( \mathbf{M}[i,j] = \gamma^{|i - j|} \).

    Now, let \( \mathbf{X}_c = \pi_c(\mathbf{X}) \) be the input sequence reordered using a space-filling curve \( c \), with permutation \( \pi_c \). Then, the output of the \ours attention for the permuted input \( \mathbf{X}_c \), re-ordered back to the original (basis) input order, is given by:
    \begin{align} \label{viatt_pre2}
        \mathbf{\widetilde{Y}} = \text{Softmax}\left(\alpha \frac{\mathbf{QK}^\top}{\sqrt{d}} \odot \pi_c(\mathbf{M}) \right) \mathbf{V}
    \end{align}
    where \( \pi_c(\mathbf{M}) = \mathbf{M}[\pi_c(i), \pi_c(j)] \) denotes the decay mask permuted along both rows and columns according to the curve \( c \).
\end{claim}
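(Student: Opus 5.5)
The plan is to reduce everything to permutation-matrix algebra, using the same device as the proof of permutation equivariance in \cref{permutation_equivariance}. Let $\mathbf{P}_c$ be the permutation matrix of $\pi_c$, so that $\mathbf{X}_c = \mathbf{P}_c\mathbf{X}$ as in \cref{perm_matrix}. Since the projections act row-wise, $\mathbf{Q}_c=\mathbf{P}_c\mathbf{Q}$, $\mathbf{K}_c=\mathbf{P}_c\mathbf{K}$ and $\mathbf{V}_c=\mathbf{P}_c\mathbf{V}$, exactly as in \cref{permqkv}. The masked attention on the permuted sequence uses the base mask $\mathbf{M}$ in curve order. Reordered back, it is
\begin{align}
\widetilde{\mathbf{Y}} = \mathbf{P}_c^\top\,\text{Softmax}\!\left(\alpha\frac{\mathbf{P}_c\mathbf{Q}\mathbf{K}^\top\mathbf{P}_c^\top}{\sqrt{d}} \odot \mathbf{M}\right)\mathbf{P}_c\mathbf{V}.
\end{align}

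The first key step is a Hadamard-product identity: for any permutation matrix $\mathbf{P}$ and any $\mathbf{A},\mathbf{B}\in\mathbb{R}^{N\times N}$,
\begin{align}
(\mathbf{P}\mathbf{A}\mathbf{P}^\top)\odot\mathbf{B} = \mathbf{P}\bigl(\mathbf{A}\odot(\mathbf{P}^\top\mathbf{B}\mathbf{P})\bigr)\mathbf{P}^\top .
\end{align}
I will verify this entrywise. Conjugation by $\mathbf{P}$ just relabels indices, $(\mathbf{P}\mathbf{A}\mathbf{P}^\top)[n,m]=\mathbf{A}[\sigma(n),\sigma(m)]$ for the appropriate bijection $\sigma$, and the Hadamard product commutes with a simultaneous relabelling of rows and columns. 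Applying this with $\mathbf{A}=\alpha\mathbf{Q}\mathbf{K}^\top/\sqrt{d}$ and $\mathbf{B}=\mathbf{M}$ moves the permutation outside the masked score matrix.

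The second step uses the row-wise nature of softmax: $\text{Softmax}(\mathbf{P}\mathbf{Z}\mathbf{P}^\top)=\mathbf{P}\,\text{Softmax}(\mathbf{Z})\,\mathbf{P}^\top$. Permuting columns permutes the entries within each row, which leaves each row's normalizer unchanged, and permuting rows just reorders the rows. This is the same fact used in \cref{permutation_equivariance}. The outer $\mathbf{P}_c^\top$ and $\mathbf{P}_c$ then cancel, as do the inner $\mathbf{P}_c^\top\mathbf{P}_c=\mathbf{I}$ next to $\mathbf{V}$. This leaves
\begin{align}
\widetilde{\mathbf{Y}}=\text{Softmax}\!\left(\alpha\frac{\mathbf{Q}\mathbf{K}^\top}{\sqrt{d}}\odot \mathbf{P}_c^\top\mathbf{M}\mathbf{P}_c\right)\mathbf{V}.
\end{align}

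The final step identifies $\mathbf{P}_c^\top\mathbf{M}\mathbf{P}_c$ entrywise with $\mathbf{M}[\pi_c(i),\pi_c(j)]=\gamma^{|\pi_c(i)-\pi_c(j)|}$. This is consistent with $\widetilde{\mathbf{M}_c}$ in \cref{viatt_pre} and with the metric interpretation in \cref{ap:sfc_distance}.

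I expect the main obstacle to be index bookkeeping rather than any conceptual difficulty. The convention in \cref{perm_matrix} ($\mathbf{P}[n,m]=1$ iff $m=\pi(n)$) determines whether conjugation yields $\mathbf{M}[\pi_c(i),\pi_c(j)]$ or $\mathbf{M}[\pi_c^{-1}(i),\pi_c^{-1}(j)]$. I will fix the convention once, compute $(\mathbf{P}_c^\top\mathbf{M}\mathbf{P}_c)[i,j]$ explicitly, and, if it comes out as the inverse permutation, note that $\pi_c(\mathbf{M})$ in the statement should be read with the same convention as $\pi_c(\mathbf{X})$. Under that reading the claimed identity holds.
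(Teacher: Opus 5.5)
Your argument is essentially the paper's proof. The paper also writes the permuted scores as $\mathbf{P}_c\mathbf{Q}\mathbf{K}^\top\mathbf{P}_c^\top$ and inserts $\mathbf{P}_c\mathbf{P}_c^\top$ on both sides of $\mathbf{M}$, which is your Hadamard identity. It then factors the permutations out of the Hadamard product and the row-wise softmax, and cancels to reach $\text{Softmax}(\alpha\mathbf{Q}\mathbf{K}^\top/\sqrt{d}\odot\mathbf{P}_c^\top\mathbf{M}\mathbf{P}_c)\mathbf{V}$. Your entrywise checks of the two commutation facts make rigorous what the paper asserts in one sentence.

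The step that would not work as planned is your fallback for the index bookkeeping. With the convention of \cref{perm_matrix}, $(\mathbf{P}_c\mathbf{X})[n]=\mathbf{X}[\pi_c(n)]$ and $(\mathbf{P}_c^\top\mathbf{M}\mathbf{P}_c)[i,j]=\mathbf{M}[\pi_c^{-1}(i),\pi_c^{-1}(j)]$, so it does come out as the inverse. However, reading $\pi_c(\mathbf{M})$ ``with the same convention as $\pi_c(\mathbf{X})=\mathbf{P}_c\mathbf{X}$'' means $\mathbf{P}_c\mathbf{M}\mathbf{P}_c^\top$. Its entries are again $\mathbf{M}[\pi_c(i),\pi_c(j)]$, so this reading just restates the literal formula that failed. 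In general it is a different matrix from $\mathbf{P}_c^\top\mathbf{M}\mathbf{P}_c$; this already happens for the Peano reordering of a $4\times4$ grid. So the fallback does not rescue the statement.

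Whichever convention you fix, if $\pi_c(\cdot)$ denotes the operator taking $\mathbf{X}$ to $\mathbf{X}_c$, the mask that appears is the \emph{inverse} action, $\pi_c^{-1}(\mathbf{M})=\mathbf{P}_c^\top\mathbf{M}\mathbf{P}_c$. This is exactly how the paper labels it in its own proof and in \cref{viatt_pre}. The statement's entrywise formula $\gamma^{|\pi_c(i)-\pi_c(j)|}$ is correct only if $\pi_c$ is read as the map sending a basis index to its position along curve $c$, i.e.\ $\mathbf{X}_c[\pi_c(a)]=\mathbf{X}[a]$. That is the inverse of the map in \cref{perm_matrix}, and it is the convention implicitly used in \cref{ap:sfc_distance} and \cref{ap:sfc_avg}. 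State your conclusion in one of these two consistent forms rather than appealing to a shared convention for $\mathbf{X}$ and $\mathbf{M}$.
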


\begin{proof}
It is easy to see that flattening the input \( \mathcal{I} \) into a sequence \( \mathbf{X}_{c_1} \) using any space-filling curve \( c_1 \) defines a one-to-one mapping from the 2D grid to a 1D sequence. Therefore, there exists a permutation \( \pi_{c_1 \rightarrow c_2} \) and an associated permutation matrix \( \mathbf{P}_{c_1 \rightarrow c_2} \) such that the sequence obtained by flattening with another curve \( c_2 \) is given by:
\begin{equation}    
\mathbf{X}_{c_2} = \mathbf{P}_{c_1 \rightarrow c_2} \, \mathbf{X}_{c_1}
\end{equation}
Now, considering \( c_1 \) as the z-Curve (our basis flattening), and renaming \( c_2 \) simply as \( c \), we simplify the notation as follows:
\begin{equation} 
\pi_{c_1 \rightarrow c_2} = \pi_c, \quad \mathbf{P}_{c_1 \rightarrow c_2} = \mathbf{P}_c, \quad \mathbf{X}_c = \pi_c(\mathbf{X}) = \mathbf{P}_c \mathbf{X}
\end{equation}
From  \cref{permqkv} we know that permuting the input $\mathbf{X}$ will result in permutation of query, key and value matrices so for the input $\mathbf{X}_c$ the  attention presented at \cref{viatt} is re-written as:

    \begin{align}  
        \mathbf{Y}_c &= \text{Softmax}\left(\alpha \frac{\mathbf{\pi_c(Q)\pi_c(K)}^\top}{\sqrt{d}} \odot \mathbf{M} \right) \mathbf{\pi_c(V)} \notag  \\ &=  \text{Softmax}\left(\alpha \frac{\mathbf{\mathbf{P}_c Q(\mathbf{P}_c K)}^\top}{\sqrt{d}} \odot \mathbf{M} \right) \mathbf{{P}_c V}\notag  \\ &= \text{Softmax}\left(\alpha \frac{\mathbf{\mathbf{P}_c(QK^\top)\mathbf{P}_c}^\top}{\sqrt{d}} \odot \mathbf{M} \right) \mathbf{{P}_c V }
    \end{align}
by multiplying $\mathbf{P}_c\mathbf{P}_c^\top$ to both sides of $\mathbf{M}$ we have:
    \begin{align}  \label{eq:vilatt_per}
        \mathbf{Y}_c &= \text{Softmax}\left(\alpha \frac{\mathbf{\mathbf{P}_c(QK^\top)\mathbf{P}_c}^\top}{\sqrt{d}} \odot \mathbf{P}_c\mathbf{P}_c^\top\mathbf{M}\mathbf{P}_c\mathbf{P}_c^\top \right) \mathbf{{P}_c(V)} \\&= \text{Softmax}\left(\alpha \frac{\mathbf{\mathbf{P}_c(QK^\top)\mathbf{P}_c}^\top}{\sqrt{d}} \odot \mathbf{P}_c(\mathbf{P}_c^\top\mathbf{M}\mathbf{P}_c)\mathbf{P}_c^\top \right) \mathbf{{P}_c(V)}
    \end{align}
Since the multiplication with the decay mask and the softmax operation are element-wise (i.e., applied row-wise for each query), the permutation matrices \( \mathbf{P}_c \) and \( \mathbf{P}_c^\top \) can be factored out of the attention computation. This results in the following expression:
\begin{equation} \label{eq:vilatt_final}
    \scalebox{1}{$ 
     \mathbf{Y}_c = \mathbf{P}_c  \text{Softmax}\left( \alpha \frac{\mathbf{QK}^\top}{\sqrt{d}} \odot \mathbf{P}_c^\top \mathbf{M} \mathbf{P} _c\right)\underset{\mathbf{I}}{\cancel{\mathbf{P}_c^\top\mathbf{P}_c}} \mathbf{V} =  \mathbf{P}_c  \text{Softmax}\left( \alpha \frac{\mathbf{QK}^\top}{\sqrt{d}} \odot \underbrace{\mathbf{P}_c^\top \mathbf{M} \mathbf{P}_c}_{\pi_c^{-1}(\mathbf{M)}} \right) \mathbf{V} $}
\end{equation}
Since the order of \( \mathbf{Y}_c \) corresponds to the permuted input \( \mathbf{X}_c \), we can recover the output in the original (basis) order by applying the inverse permutation, i.e., multiplying by \( \mathbf{P}_c^\top \). Therefore, the final output \( \mathbf{\widetilde{Y}_c} \) aligned with the original input \( \mathbf{X} \) is:
\begin{align} \label{eq:vilatt_recovered}
    \mathbf{\widetilde{Y}_c}  = \mathbf{P}_c^\top \mathbf{Y}_c = \text{Softmax}\left( \alpha \frac{\mathbf{QK}^\top}{\sqrt{d}} \odot \mathbf{P}_c^\top \mathbf{M} \mathbf{P} _c\right) \mathbf{V}
\end{align}
This confirms that applying attention to a permuted input using the base decay mask is equivalent to applying attention to the original input with a permuted (reordered) decay mask \( \pi_c^{-1}(\mathbf{M}) = \mathbf{P}_c^\top \mathbf{M} \mathbf{P}_c \).
    
\end{proof}

\begin{figure}[t]
    \centering
    \includegraphics[width=1\linewidth]{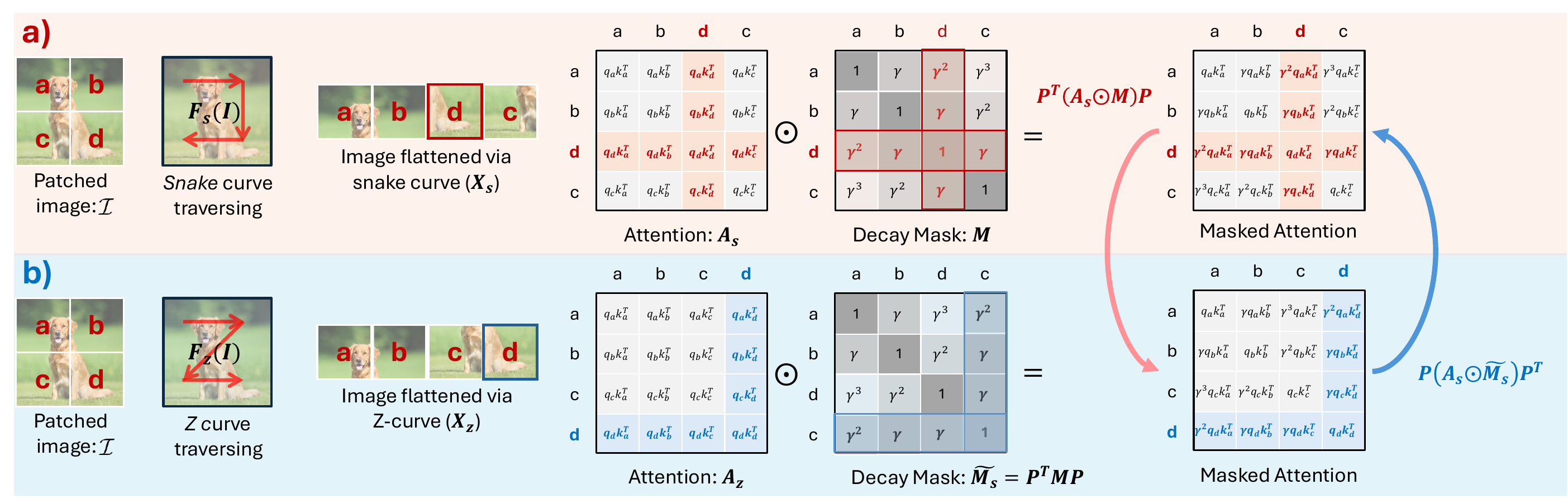}
    \caption{\textit{Effect of SFCs on flattened Image:} Visually showing the equivalence between \textbf{a)} Permuting the input sequence according to \( c \) (e.g., the snake curve) to get $\mathbf{X}_S$, multiplying the attention $\mathbf{A}_S$ with the original decay mask defined in the basis curve $\mathbf{M}$ (e.g., Z-curve in our study), and then reordering the output back to the original and \textbf{b)} Calculating attention $\mathbf{A}_z$ with basis curve ordered $\mathbf{X}_z$, using a permuted decay mask \( \widetilde{\mathbf{M}_c} \). }
    \label{fig:equl_permute}
\end{figure}

This proof is also visualized in \cref{fig:equl_permute}, illustrating that applying attention using a permuted decay mask based on curve \( c \) (e.g., the snake curve in the figure) is equivalent to permuting the input sequence according to \( c \), computing attention with the original decay mask defined in the basis curve (e.g., Z-curve in our study), and then reordering the output back to the original sequence order. 

\paragraph{Disclaimer} In practice, it is unnecessary to explicitly define a permutation function $\pi$ or construct a matrix $\mathbf{P}$. The reordering can be efficiently achieved by simply storing the corresponding indices. $\mathbf{P}$ and $\pi$ are used for mathematical clarity and formalism only.

\subsection{Averaging multiple SFC decay masks}\label[appendix]{ap:sfc_avg}
\begin{claim}
Let $C=\{c_1,\dots,c_m\}$ be a fixed set of space-filling curves, each inducing a permutation $\pi_c$ over $N$ tokens and a decay mask
\begin{equation}
\mathbf{M}_c[i,j] \;=\; \gamma^{|\pi_c(i)-\pi_c(j)|}, \qquad \gamma\in(0,1).
\end{equation}
Define the averaged decay mask
\begin{equation}
\overline{\mathbf{M}}[i,j] \;=\; \frac{1}{m}\sum_{c\in C}\mathbf{M}_c[i,j].
\end{equation}
Then, for any token pair $(i,j)$, averaging over $\mathcal{C}$ yields:
\begin{enumerate}
    \item \textbf{(Reduced sensitivity to individual curves)}  
    The influence of any single curve on $\overline{\mathbf{M}}[i,j]$ is bounded by $1/m$.
    \item \textbf{(Robust preservation of local interactions)}  
    If at least a fraction $p\in(0,1]$ of curves satisfy
    \begin{equation}
    |\pi_c(i)-\pi_c(j)| \le r,
    \end{equation}
    then the averaged mask obeys the lower bound
    \begin{equation}
    \overline{\mathbf{M}}[i,j] \;\ge\; p\,\gamma^{r}.
    \end{equation}
\end{enumerate}
Consequently, $\overline{\mathbf{M}}$ provides a more stable and expressive affinity prior than any single-curve mask.
\end{claim}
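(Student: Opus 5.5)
The plan is to argue entrywise throughout, using only two facts. First, every single-curve mask entry lies in $(0,1]$. This holds because $\gamma\in(0,1)$ and the exponent $|\pi_c(i)-\pi_c(j)|$ is a nonnegative integer; by the metric claim in \cref{ap:sfc_distance}, it is exactly zero on the diagonal. Second, $t\mapsto\gamma^t$ is decreasing in $t\ge 0$. Both parts of the claim then reduce to elementary estimates on a convex combination of numbers in $(0,1]$.

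For part 1, I will make ``influence'' precise as the largest possible change of $\overline{\mathbf{M}}[i,j]$ when one curve $c_0$ is altered or replaced while the others stay fixed. Let $\mathbf{M}'_{c_0}$ be any alternative mask with entries in $(0,1]$, and let $\overline{\mathbf{M}}'$ be the resulting average. Then
\begin{equation*}
\bigl|\overline{\mathbf{M}}[i,j]-\overline{\mathbf{M}}'[i,j]\bigr| = \frac{1}{m}\bigl|\mathbf{M}_{c_0}[i,j]-\mathbf{M}'_{c_0}[i,j]\bigr| \le \frac{1}{m}.
\end{equation*}
Equivalently, the partial derivative of $\overline{\mathbf{M}}[i,j]$ with respect to $\mathbf{M}_{c_0}[i,j]$ equals $1/m$. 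The only real work in this part is stating the interpretation explicitly, since the claim leaves ``influence'' undefined.

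For part 2, I will split $C$ into the set $C_r$ of curves with $|\pi_c(i)-\pi_c(j)|\le r$, where $|C_r|\ge pm$ by hypothesis, and its complement. Curves in $C_r$ satisfy $\mathbf{M}_c[i,j]\ge\gamma^r$ by monotonicity. Curves in the complement contribute terms that are nonnegative and can be dropped. Together these give
\begin{equation*}
\overline{\mathbf{M}}[i,j]\ge \frac{1}{m}\sum_{c\in C_r}\gamma^{r}\ge p\,\gamma^{r}.
\end{equation*}

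The concluding sentence, that $\overline{\mathbf{M}}$ is ``more stable and expressive'' than a single mask, is qualitative. I plan to support it rather than prove it. Stability follows from part 1, since a single-curve mask corresponds to $m=1$ and has influence $1$. For locality, a single curve can place two spatially adjacent patches far apart; for example, the Z-curve gives $|\pi(i)-\pi(j)|=W$ for vertical neighbours, so that entry decays to $\gamma^W$. The average instead keeps the entry at least $p\gamma^r$ as long as a fraction $p$ of curves, such as the transposed variants, keep the pair close.

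The main obstacle is precisely this informal last statement and the undefined term ``influence''. My approach is to fix the definitions above and label the final conclusion as an interpretation of parts 1 and 2, not a separate theorem.
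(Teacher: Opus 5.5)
Your proposal is correct and follows the paper's proof essentially step for step. Part 1 is the same one-curve perturbation bound $\frac{1}{m}|x_{c^\star}'-x_{c^\star}|\le\frac{1}{m}$, using that all mask entries lie in $(0,1]$. Part 2 is the same monotonicity argument: sum the bound $\gamma^{r}$ over at least $pm$ local curves and divide by $m$. Your explicit dropping of the nonnegative non-local terms, and your framing of the final sentence as an interpretation rather than a theorem, are slightly more careful than the paper's own closing remark but consistent with it.
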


\begin{proof}
We prove the two statements.

\hspace*{2em}\textit{(1) Reduced sensitivity to individual curves:}  
Fix $(i,j)$ and define $x_c := \mathbf{M}_c[i,j]\in(0,1]$. By definition,
\begin{equation}
\overline{\mathbf{M}}[i,j] \;=\; \frac{1}{m}\sum_{c\in C} x_c.
\end{equation}
If the value of one curve $c^\star$ is perturbed from $x_{c^\star}$ to $x_{c^\star}'$, while all others remain fixed, then
\begin{equation}
\left|\overline{\mathbf{M}}'[i,j]-\overline{\mathbf{M}}[i,j]\right|
\;=\;
\frac{1}{m}\left|x_{c^\star}'-x_{c^\star}\right|
\;\le\;
\frac{1}{m},
\end{equation}
since $x_c\in(0,1]$ for all $c$. Thus, no single curve can dominate the averaged mask, and the effect of any outlier curve is suppressed by a factor $1/m$.

\hspace*{2em}\textit{(2) Robust preservation of local interactions:}  
Assume that for at least $pm$ curves in $\mathcal{C}$ we have $|\pi_c(i)-\pi_c(j)|\le r$. For each such curve,
\begin{equation}
\mathbf{M}_c[i,j] = \gamma^{|\pi_c(i)-\pi_c(j)|} \;\ge\; \gamma^{r},
\end{equation}
since $\gamma\in(0,1)$ and $\gamma^t$ is monotonically decreasing for $t\ge 0$.

Summing over all curves yields
\begin{equation}
\sum_{c\in C}\mathbf{M}_c[i,j]
\;\ge\;
pm\cdot \gamma^{r}.
\end{equation}
Dividing by $m$ gives
\begin{equation}
\overline{\mathbf{M}}[i,j]
\;\ge\;
p\,\gamma^{r},
\end{equation}
which establishes the claimed lower bound.

Finally, observe that each single-curve mask $\mathbf{M}_c$ depends on one induced one-dimensional distance $|\pi_c(i)-\pi_c(j)|$, whereas the averaged mask $\overline{\mathbf{M}}$ aggregates multiple such distances. Hence, $\overline{\mathbf{M}}$ encodes interactions that are consistently local across several curves while attenuating interactions that appear local only under a single permutation. This yields a more stable and expressive affinity structure.
\end{proof}

\paragraph{Remark}
Although the above result is stated for a fixed set of curves $\mathcal{C}$, it also has a natural probabilistic interpretation. If the curves in $\mathcal{C}$ are viewed as samples from an underlying distribution over space-filling curve orderings, then the averaged mask $\overline{\mathbf{M}}[i,j]$ corresponds to the empirical mean of the random variable $\mathbf{M}_c[i,j]=\gamma^{|\pi_c(i)-\pi_c(j)|}$. In this case, standard results imply that the variance of the empirical mean decreases proportionally to $1/|C|$. This interpretation provides additional intuition: averaging multiple curves reduces the variability induced by any single ordering and yields a more stable estimate of spatial affinity.

\newpage
\section{Futher design details} \label[appendix]{sec:design} 
In this section, we outline key design choices made in the implementation of \ours models. 
\subsection{Initialization} \label[appendix]{ap:init} 
Since $\gamma_c =\text{sigmoid}(\beta_c)$ is exponentiated over the sequence length in the \ours decay mask, it is important to initialize it close to 1, which is also highlighted in the Linear Transformer literature~\citep{lru,retnet}. For pretraining \ours models, we initialize $\beta_c$ uniformly in the range $[5, 9]$, which corresponds to $\gamma_c \in (0.9820, 0.9998)$. This ensures that the initial mask values $\mathbf{M}_c[i,j] \in (0.03, 0.962)$ for $N = 196$, maintaining a stable and controlled decay. For numerical results on the effect of initialization, see \cref{ap:other_ablations}.

During full fine-tuning, we initialize the model using the pretrained baseline. In this setting, since the query/key/value weights $\mathbf{W_Q}, \mathbf{W_K}, \mathbf{W_V}$ are already trained during pretraining and \ours attention is introduced and used only at fine-tuning, we initialize the scaling factor $\alpha$ using a Gaussian distribution centered at 1 to allow for smooth adaptation. For $\beta_c$, we use a uniform initialization in the range $[15, 20]$. This setup avoids a steep drop in attention scores while allowing the model to gradually adapt to the newly introduced decay mask $\mathbf{M}_\ours$.  All other initialization settings in \ours exactly follow those of the original baselines without any modification.

\textit{All other configurations, such as data augmentation, optimizer, initialization, model parameters, and training setups are kept exactly the same as in the original baselines, with no modifications.}

\subsection{Adaptation of \ours to various architectures} \label[appendix]{ap:other_arch} 
\ours attention supports both the use of a classification token and Global Average Pooling (GPA)~\citep{lin2013network,lu2022bridging}. For pretraining of DeiT models, we remove the classification token and instead apply Global Average Pooling (GAP). The attention module is replaced with \ours attention, while the rest of the model, including positional embeddings, layer normalization, and other components, remains unchanged, see \cref{abl:gap} for details. For fine-tuning the classification token remains intact.

In the DINO setting, both teacher and student models are initialized with \ours attention, with all other weights handled as usual. Due to the multi-crop training, the attention module encounters varying sequence lengths. However, since the construction of \( \mathbf{M}_\text{\ours} \) naturally adapts to any sequence length, this poses no issue.  

To accommodate the classification token, we modify the corresponding rows and columns of \( \mathbf{M}_\text{\ours} \) by setting \( \gamma_\text{cls} = 1 \). We also experimented with a learnable \( \gamma_\text{cls} \in [0,1] \) but observed no significant performance gains. The rest of the model structure follows the original DINO architecture.  

\paragraph{\ours with hierarchical and convolutional architectures}
Hierarchical transformer architectures such as Swin~\citep{liu2021swin} and convolutional-transformer hybrids like PVT~\citep{pvt} differ fundamentally from vanilla ViTs in how attention is computed. Instead of applying full attention across the entire sequence, they restrict the receptive field by using windowed or spatially localized attention, often combined with hierarchical feature maps. This design introduces locality explicitly into the architecture, reducing the need for additional spatial priors such as those provided by SFCs.

In such settings, applying SFC-guided decay masks becomes problematic for two main reasons. First, SFCs are meaningful when attention spans the \textit{entire} sequence of image patches, since the curve defines a global traversal order. In hierarchical models, however, attention is restricted to local windows or pyramid levels, where the notion of a global SFC ordering no longer applies. Second, many of these architectures already incorporate inductive biases (through localized windows, shifting strategies, or convolutional layers), so introducing additional SFC-based priors could interfere with rather than complement their design.

Thus, \ours is best suited for standard ViTs and related architectures where attention is fully global, the sequence is flattened in a fixed order (commonly the Z-curve), and inductive biases are otherwise minimal. In contrast, hierarchical or convolutional variants already bake spatial priors directly into their architecture, making SFC-based masking redundant or ill defined.

Consistent with our analysis, when we integrated \ours into Swin at tiny and small scales during pretraining, we achieved minimal accuracy improvements of $0.2\%$ and $0.1\%$, respectively, as shown in \cref{tab:swin}. The \ours mask is applied at every stage and layer, with each mask being independently learned and unique to its respective layer. The remaining architecture follows the original Swin model structure.  
\begin{table}[h]
\centering
\caption{\textit{Pretraining of Swin models:} The performance of baseline model is compared against \ours for ImageNet pretraining. Changes with respect to the baseline are shown inside $(\cdot)$ next to the accuracies.}
\label{tab:swin}
\begin{tabular}{l>{\columncolor{gray!20}}c>{\columncolor{orange!30}}c}
\noalign{\hrule height 1.5pt}
\multirow{2}{*}{Model}  & \multicolumn{2}{c}{Top-1 Accuracy ($\%$)} \\
& Baseline & \ours  \\ 
\hline
Swin-T & 81.3 & \textbf{81.5} \textcolor{g}{(\textbf{+0.2)}} \\
Swin-S & 83.0 & \textbf{83.1} \textcolor{g}{(\textbf{+0.1})} \\
\noalign{\hrule height 1.5pt}
\end{tabular}
\end{table}

\paragraph{\ours with video transformers}
Video transformers operate on spatiotemporal tokens, and \ours can be incorporated into these models in a straightforward way because it only rescales the attention scores between tokens. There are two natural ways to extend \ours:
\begin{enumerate}
    \item \textbf{Spatial-only SFCs (2D per frame) } 
    The same 2D SFCs used for images can be applied independently to the $(H, W)$ grid of each frame, while keeping the temporal dimension unchanged. This provides a per-frame spatial prior and mirrors the image setting.
    \item \textbf{Full spatiotemporal SFCs (3D) } 
    Following \cref{def:sfc}, SFCs naturally generalize to arbitrary dimensions. Thus, we can define 3D SFCs over the full $(T, H, W)$ grid (e.g., 3D Hilbert or 3D Morton curves) and compute distances based on each token's original spatiotemporal position. The resulting decay masks encourage locality across both space and time. Masks can be computed once over the full grid and then indexed to the visible token subset, similar to how positional embeddings are handled in VideoMAE~\citep{wang2023videomaev2}.
\end{enumerate}
Both approaches are fully compatible with video MAE-style training: they require no changes to masking or reconstruction objectives, they can be applied to both encoder and decoder, and they provide a meaningful structural prior, especially under high masking ratios where positional structure becomes crucial. Overall, extending \ours to video models is a promising direction for future work, as spatiotemporal SFCs may offer strong inductive bias with minimal additional cost.

\section{Ablation studies} \label[appendix]{ap:ablations}
In this section, we provide comprehensive ablation studies on various elements of \ours. For all ablations, we utilize different scales of DeiT models and we keep the training recipe the same. We use a patch size of 16 and a resolution $224\times224$ for each one of the models.
\subsection{Positional embeddings} \label[appendix]{abl:pos_embd}
To evaluate the impact of positional embeddings, we pretrain the \ours DeiT-B model both with and without them, see \cref{tab:pos}. The results indicate that positional embeddings provide a performance boost, leading us to retain the original positional embedding configurations of the base models. 
\begin{table}[h]
\centering
\caption{\textit{Ablation on positional embeddings (PE):} The performance of the baseline model with PE is compared against \ours with (w) and without (wo) PE. Changes with respect to the baseline are shown inside $(\cdot)$ next to the accuracies.}
\label{tab:pos}
\begin{tabular}{l>{\columncolor{gray!20}}c>{\columncolor{orange!30}}cc}
\noalign{\hrule height 1.5pt}
\multirow{2}{*}{Model}  & \multicolumn{3}{c}{Top-1 Accuracy ($\%$)} \\
& Baseline & \ours w PE & \ours wo PE \\ 
\hline
DeiT-B & 81.8 & \textbf{81.9 \textcolor{g}{(+0.1)}} & 81.5 \textcolor{red}{(-0.3)}\\
\noalign{\hrule height 1.5pt}
\end{tabular}
\end{table}
\subsection{Alternative curve configurations} \label[appendix]{ap:curve_configs}
We examine the individual contribution of each curve to the overall performance. To do so, we pretrain DeiT-S using all possible combinations of the four curves, resulting in $ 2^4 = 16$ variations. The accuracies of each configuration are presented in \cref{tab:curves}. Note that whenever a curve has is used, the transposed version is also included. In other words, if the snake curve is included, its transposed variant Snake$^\top$ is also utilized. 
\begin{table}[t]
\centering
\caption{\textit{Ablation on the effect of each curve:} The performance of the baseline model is compared against \ours with different curve combinations. \gcheck ~indicates the curse is in the set, whereas \xmark ~means it is not. Changes with respect to the baseline are shown inside $(\cdot)$ next to the accuracies.}
\label{tab:curves}
\begin{tabular}{@{}lccccl@{}}
\noalign{\hrule height 1.5pt}
Model  & Snake Curve & Zig-Zag Curve & Hilbert Curve & Peano Curve & Top-1 Acc (\%) \\
\midrule
\rowcolor{gray!20}
DeiT-S (Baseline) & \xmark & \xmark & \xmark & \xmark & 79.9 \\
 & \gcheck& \xmark & \xmark & \xmark & 80.0 {\textcolor{g}{(+0.1)}} \\
 & \xmark &\gcheck & \xmark & \xmark & 80.2 {\textcolor{g}{(+0.3)}}\\
  & \xmark & \xmark &\gcheck & \xmark& 79.9  {\textcolor{g}{  ---}}\\
 & \xmark & \xmark& \xmark& \gcheck& 80.4 {\textcolor{g}{(+0.5)}}\\
 & \gcheck& \gcheck& \xmark& \xmark & 80.3 {\textcolor{g}{(+0.4)}}\\
 & \gcheck& \xmark& \gcheck& \xmark& 80.4 {\textcolor{g}{(+0.5)}}\\
 & \gcheck& \xmark& \xmark& \gcheck& 80.3 {\textcolor{g}{(+0.4)}}\\
 & \xmark&\gcheck & \gcheck& \xmark& 80.3 {\textcolor{g}{(+0.4)}}\\
 & \xmark& \gcheck& \xmark&\gcheck & 80.5 {\textcolor{g}{(+0.6)}}\\
 & \xmark& \xmark& \gcheck&\gcheck & 80.2 {\textcolor{g}{(+0.3)}} \\
 & \gcheck& \gcheck& \gcheck& \xmark& 80.4 {\textcolor{g}{(+0.5)}} \\
 & \gcheck& \gcheck& \xmark&\gcheck & 80.4 {\textcolor{g}{(+0.5)}} \\
 & \gcheck& \xmark &\gcheck &\gcheck & 80.5 {\textcolor{g}{(+0.6)}} \\
  & \xmark & \gcheck &\gcheck &\gcheck & 80.5 {\textcolor{g}{(+0.6)}} \\
 \rowcolor{orange!30}
\ours DeiT-S (Ours) & \gcheck& \gcheck& \gcheck& \gcheck& \textbf{80.7} \textbf{\textcolor{g}{(+0.8)}} \\
\noalign{\hrule height 1.5pt}
\end{tabular}
\end{table}

The results reveal that while certain curve combinations yield more substantial improvements than others, each curve contributes meaningfully to the overall performance. Thus, we retain all four curves in the \ours configuration, leveraging their complementary spatial information.

Additionally, we explore several alternative configurations, as detailed in \cref{tab:extra_curves}. For instance, we evaluate the use of only the four original curves referred as $\mathcal{C}_\text{normal}$ (snake, zig-zag, Hilbert, and Peano) and only their rotated counterparts $\mathcal{C}_\text{transposed}$ (snake$^\top$, zig-zag$^\top$, Hilbert$^\top$, and Peano$^\top$). We also test using only the default Z-curve ordering, which results in a $0.7\%$ accuracy gain.  

\begin{table}[t]
\centering
\caption{\textit{Ablation on different curve configurations:} The performance of the baseline model is compared against \ours with different curve configurations: only original curves ($\mathcal{C}_\text{normal}$), only transposed curves ($\mathcal{C}_\text{transposed}$), only Z-curve, Manhattan distance-based mask and random curves. Changes with respect to the baseline are shown inside $(\cdot)$ next to the accuracies.}
\label{tab:extra_curves}
\begin{tabular}{l>{\columncolor{gray!20}}c>{\columncolor{orange!30}}cccccc}
\noalign{\hrule height 1.5pt}
\multirow{2}{*}{Model}  & \multicolumn{7}{c}{Top-1 Accuracy ($\%$)} \\
& Baseline & \ours & $\mathcal{C}_\text{normal}$ & $\mathcal{C}_\text{transposed}$ & Z-curve & Manhattan & Random\\ 
\hline
DeiT-S & 79.8 & \textbf{80.7 \textcolor{g}{(+0.9)}} & 80.3 \textcolor{g}{(+0.5)} & 80.4 \textcolor{g}{(+0.6)} & 80.5 \textcolor{g}{(+0.7)} & 80.4 \textcolor{g}{(+0.6)} & \xmark \\
\noalign{\hrule height 1.5pt}
\end{tabular}%
\end{table}

Moreover, we define relative distances using a Manhattan mask, inspired by RMT~\citep{fan2024rmt}. Lastly, we experiment with a set of randomized SFCs, where the flattened image is shuffled with a random fixed order across all layers and heads. This model fails to converge to a meaningful accuracy. This further emphasizes the importance of a \textit{structured} SFC as the unstructured curves do not allow model to capture meaningful information from the data.

\subsection{Alternative masking strategies} \label[appendix]{ap:masking}
Another critical design choice is the masking strategy. We compare \ours, which follows the structure $ S(\mathbf{A'} \odot \mathbf{M}) $, where $ S $ denotes the row-wise softmax operation, $ \mathbf{A'} = \alpha \frac{\mathbf{Q}\mathbf{K}^\top}{\sqrt{d}} $, and $ \mathbf{M} = \mathbf{M}_\text{\ours} $ for a cleaner notation. Our findings indicate that the $ S(\mathbf{A'} \odot \mathbf{M}) $ configuration outperforms all other masking alternatives.  

\begin{table}[h]
\centering
\caption{\textit{Ablation on masking strategies:} The performance of the baseline model is compared against \ours with different masking methods: $S(\mathbf{M}+\mathbf{A'})$, $S(\mathbf{A'}) + \mathbf{M}$, $S(\mathbf{A'}) \odot \mathbf{M}$, and $S(\mathbf{A'}\odot(\mathbf{I}+\mathbf{M}))$. Changes with respect to the baseline are shown inside $(\cdot)$ next to the accuracies.}
\label{tab:masks}
\begin{tabular}{l>{\columncolor{gray!20}}c>{\columncolor{orange!30}}ccccc}
\noalign{\hrule height 1.5pt}
\multirow{2}{*}{Model}  & \multicolumn{6}{c}{Top-1 Accuracy ($\%$)} \\
& Baseline & \ours & $S(\mathbf{M}+\mathbf{A'})$ & $S(\mathbf{A'}) + \mathbf{M}$ & $S(\mathbf{A'}) \odot \mathbf{M}$ & $S(\mathbf{A'}\odot(\mathbf{I}+\mathbf{M}))$\\ 
\hline
DeiT-S & 79.8 & \textbf{80.7 \textcolor{g}{(+0.9)}} & 80.1 \textcolor{g}{(+0.3)} & 80.5 \textcolor{g}{(+0.7)} & 80.5 \textcolor{g}{(+0.7)} & 79.1 \textcolor{red}{(-0.7)}\\
\bottomrule
\end{tabular}%
\end{table} 

\subsection{Other design elements} \label[appendix]{ap:other_ablations}
Furthermore, in \cref{tab:design}, we illustrate the impact of additional design choices described in \cref{sec:design}, such as initialization and the scaling parameter \( \alpha \). Additionally, we assess the effect of fixing \( \gamma_c \) at a constant value of 0.9996 instead of learning it. The results indicate that proper initialization and a learnable \( \gamma_c \) are essential for achieving accuracy gains, while the scaling parameter \( \alpha \) primarily contributes to training stability, particularly in larger models. We have also tried using learned per-curve weights instead of averaging, which did not improve the performance. We believe that since the $\gamma_c$ values act as a selection mechanism (see previous discussions), the added learnable weight makes the optimization harder without additional benefits.

\begin{table}[h]
\centering
\caption{\textit{Ablation on other elements of \ours:} The performance of the baseline model is compared against \ours with and without certain design elements: initialization, scaling factor $\alpha$ and learned $\gamma_c$. \gcheck ~indicates it is included in the model, whereas \xmark ~means it is not. Changes with respect to the baseline are shown inside $(\cdot)$ next to the accuracies. }
\label{tab:design}
\begin{tabular}{lcccl}
\toprule
Model  & Initialization & Scaling & Learned $\gamma_c$ & Top-1 Acc ($\%$)\\
\midrule
\rowcolor{gray!20}
DeiT-S (Baseline) & \xmark & \xmark & \xmark  & 79.9 \\
 & \xmark & \gcheck & \gcheck  & 80.0 {\textcolor{g}{(+0.1)}} \\
 & \gcheck& \xmark  & \gcheck  & \textbf{80.7 {\textcolor{g}{(+0.8)}}} \\
 & \gcheck& \gcheck & \xmark   & 80.3 {\textcolor{g}{(+0.4)}} \\
 \rowcolor{orange!30}
\ours DeiT-S (Ours) & \gcheck& \gcheck& \gcheck&  \textbf{80.7 \textcolor{g}{(+0.8)}} \\
\noalign{\hrule height 1.5pt}
\end{tabular}
\end{table}

\subsection{Global Average Pooling (GAP)} \label[appendix]{abl:gap}

Considering the output of the attention mechanism for each token in the last layer, we can write
\begin{align}
\mathbf{y}_i 
= \sum_{j=1}^N 
\frac{\exp\!\left(\mathbf{q}_i^\top \mathbf{k}_j\right)}
{\sum_{j'=1}^N \exp\!\left(\mathbf{q}_i^\top \mathbf{k}_{j'}\right)}
\, \mathbf{v}_j.
\end{align}
When the classification (CLS) token is used, the sequrnce length becomes $N+1$ where the first token is the CLS. When comparing the use of a global average pooling (GAP)~\citep{lin2013network,lu2022bridging} head versus a CLS head with a decay mask, the attention outputs are extracted as follows
\begin{align}
\mathbf{y}_\text{CLS} 
&= \sum_{j=1}^{N+1} 
\frac{\exp\!\left((\mathbf{q}_{CLS}^\top \mathbf{k}_j) \mathbf{M}[CLS,j] \right)}
{\sum_{j'=1}^{N+1} \exp\!\left((\mathbf{q}_{CLS}^\top \mathbf{k}_{j'}) \mathbf{M}[CLS,j']\right)}
\, \mathbf{v}_j, \\[6pt] 
\mathbf{y}_\text{GAP} 
&= \frac{1}{N}\sum_{i=1}^N  \sum_{j=1}^N 
\frac{\exp\!\left((\mathbf{q}_i^\top \mathbf{k}_j)\mathbf{M}[i,j] \right)}
{\sum_{j'=1}^N \exp\!\left((\mathbf{q}_i^\top \mathbf{k}_{j'}) \mathbf{M}[i,j'] \right)}
\, \mathbf{v}_j.
\end{align}
As shown, in the case of the CLS token, the model only requires the attention distribution and relative distances with respect to the CLS token. 
In our setup, this reduces to $\mathbf{M}[CLS,j] = 1$, (or a a learned parameter $\beta_{CLS}$. By contrast, the GAP formulation is more expressive, as it aggregates attention information across all tokens. 
Importantly, the inclusion of the relative distance decay mask $\mathbf{M}[i,j]$ for all tokens makes GAP more effective in constructing the final representation. Therefore, similar to Vision SSMs such as Vision LSTM and Hydra~\citep{alkin2024visionlstm,hydra}, pooling-based outputs align naturally with spatially informed attention. Note that this calculation holds for the last layer only, the remaining layers utilize the mask fully.

\ours attention supports both the use of a classification token and GPA. To assess the role of the classification token versus GAP with the \ours mask, we pretrain all three scales of DeiT and report results in \cref{tab:cls}. While GAP often yields slightly better compatibility with \ours, the improvements cannot be attributed to pooling alone, the gains are additive. 

Most importantly, \ours is \textit{not dependent on GAP}. In DINO pretraining and VTAB-1K fine-tuning, where the cls\_token is retained, \ours still improves performance. This confirms that the benefits arise from the spatial priors introduced by \ours, not from the choice of pooling strategy.

\begin{table}[h]
\centering
\caption{\textit{Ablation on GAP:} The performance of baseline model and \ours is compared when they both have CLS or uses GAP. Baseline$^\dagger$ indicates results taken from~\cite{chu2023conditional}. Changes with respect to the baseline, original model with CLS, are shown inside $(\cdot)$ next to the accuracies. }
\label{tab:cls}
\begin{tabular}{l>{\columncolor{gray!20}}c>{\columncolor{orange!30}}c>{\columncolor{gray!20}}c>{\columncolor{orange!30}}c}
\noalign{\hrule height 1.5pt}
\multirow{3}{*}{Model}  & \multicolumn{4}{c}{Top-1 Accuracy ($\%$)} \\
& \multicolumn{2}{c}{CLS} & \multicolumn{2}{c}{GAP} \\ 
& Baseline & \ours  & Baseline$^\dagger$ & \ours \\ 
\hline
DeiT-T & 72.2  & 72.3 \textcolor{g}{(+0.2)} & 72.6 & \textbf{73.0} \textcolor{g}{(\textbf{+0.8)}}\\
DeiT-S & 79.8 & 80.1 \textcolor{g}{(+0.3)} & 80.2& \textbf{80.7} \textcolor{g}{(\textbf{+0.9})} \\
\noalign{\hrule height 1.5pt}
\end{tabular}
\end{table}

\section{Additional results}\label{ap:additional}
\subsection{Pretraining of larger models} \label[appendix]{ap:base}
As discussed in \cref{bckg:vit}, when both model capacity and training data are sufficiently large, ViTs can implicitly learn spatial biases directly from data. In such scenarios, the relative contribution of \ours is naturally smaller, as seen in the DeiT and DINO base scale pretraining results in \cref{tab:base}, which show only marginal gains. This is expected and lies beyond the primary scope of our work, which focuses on small models and data-scarce settings where inductive biases are most impactful.  

It is important to note that smaller gains at scale do not diminish the relevance of \ours for larger models. In fact, our fine-tuning experiments (\cref{sec:finetune}, \cref{tab:finetune1}) demonstrate that when data is limited, spatial priors provided by \ours substantially improve performance, even for models with hundreds of millions of parameters. This highlights that \ours remains valuable in practice, not by competing with scale, but by enhancing efficiency and adaptability in data-constrained regimes.

\begin{figure*}[h]
    \centering
    \captionof{table}{\textit{Pretraining results of larger models on ImageNet-1K}: 
    Comparison of the top-1 accuracies of \colorbox{gray!20}{baseline models} with their \colorbox{orange!30}{\ours} counterparts. The values in parentheses $(\cdot)$ indicate the accuracy difference compared to the baseline. The best performance between each pair of models is highlighted in \textbf{bold}. For DINO models, both KNN and linear probe evaluations are reported and (300) indicate the number of training epochs. \textbf{(Left)} Supervised, \textbf{(Right)} Self-supervised training.}
    \label{tab:base}
\begin{minipage}[t]{0.44\textwidth}
    \resizebox{\textwidth}{!}{\begin{tabular}{lc>{\columncolor{gray!20}}c>{\columncolor{orange!30}}c}
         \noalign{\hrule height 1.5pt}
        \multirow{2}{*}{Model} & \multirow{2}{*}{$\#$ Param.} & \multicolumn{2}{c}{Top-1 Accuracy ($\%$)}  \\ 
        & & Baseline & \textbf{\ours}  \\ 
        \hline
        DeiT-B  & 86M & 81.8 &\textbf{ 81.9  \textcolor{g}{(+0.1)}}  \\
         \noalign{\hrule height 1.5pt}
    \end{tabular}}
\end{minipage}
\hfill
\begin{minipage}[t]{0.54\textwidth}
        \resizebox{\textwidth}{!}{\begin{tabular}{lcc>{\columncolor{gray!20}}c>{\columncolor{orange!30}}c}
         \noalign{\hrule height 1.5pt}
        \multirow{2}{*}{Model} &  \multirow{2}{*}{} & \multirow{2}{*}{$\#$ Param.} & \multicolumn{2}{c}{Top-1 Accuracy ($\%$)}  \\ 
        & & & Baseline & \textbf{\ours}  \\ 
        \hline
        \multirow{2}{*}{DINO-B (300)}             & KNN    & \multirow{2}{*}{86M}& 76.1 & \textbf{76.1 \textcolor{g}{(-----)}}   \\
                                            & Linear & & 78.2 & \textbf{78.4  \textcolor{g}{(+0.2)}}  \\
         \noalign{\hrule height 1.5pt}
    \end{tabular}}
\end{minipage}
\end{figure*}

\subsection{Fine-tuning of \ours pretrained models} \label[appendix]{ap:finetune}
We fine-tune the \ours DeiT, and DINO pretrained models from \cref{sec:pretrain,ap:base} on the VTAB-1K dataset. The accuracies for each category and the overall average are presented in \cref{tab:finetune1}, alongside the baseline accuracies of the baseline fine-tuned models. We observe that \ours increases the performance across all models and scales compared to original baselines. DeiT,and DINO models achieve impressive improvements of up to $1.92\%$ with up to $2.87\%$ improvement in individual categories. We note that similar to \cref{tab:finetune2} in this setting, Structured group shows the highest accuracy gain. This further shows the broad applicability of \ours, enhancing diverse architectures with close to zero computational overhead.

Notably, we compare \cref{tab:finetune2} and \cref{tab:finetune1}, fine-tuning with an mask learned only during fine-tuning for all models yields better performance in different tasks compared to pretraining with it. We hypothesize that this is because the model starts with generic pretrained representations and gains additional flexibility by learning spatial structure tailored specifically to the downstream task. This is particularly advantageous when the target task differs substantially from the pretraining domain.

\begin{table}[h]
    \centering
    \caption{\textit{Fine-tuning results of pretrained \ours models on VTAB-1K}: 
    Comparison of the top-1 accuracies of \colorbox{gray!20}{baseline models} and their pretrained \colorbox{orange!30}{\ours} counterparts across the VTAB-1K benchmark. 
    The three task groups are abreviated as NAT. = Natural, SPE. = Specialized, and STR. = Structured. The values in parentheses $(\cdot)$ indicate the accuracy difference compared to the baseline. The best performance within each model pair is highlighted in \textbf{bold}.
    }
    \label{tab:finetune1}
    \resizebox{0.95\textwidth}{!}{
    \begin{tabular}{lc>{\columncolor{gray!20}}c>{\columncolor{gray!20}}c>{\columncolor{gray!20}}c>{\columncolor{gray!20}}c>{\columncolor{orange!30}}c>{\columncolor{orange!30}}c>{\columncolor{orange!30}}c>{\columncolor{orange!30}}c}
         \noalign{\hrule height 1.5pt}
        \multirow{3}{*}{Model} & \multirow{3}{*}{Param.} & \multicolumn{8}{c}{Top-1 Accuracy ($\%$)}  \\ 
        & & \multicolumn{4}{{>{\columncolor{gray!20}}c}}{Baseline} & \multicolumn{4}{{>{\columncolor{orange!30}}c}}{\textbf{\ours}} \\ 
        & & NAT. & SPE. & STR. & Avg.& NAT. & SPE. & STR. & Avg.\\
        \hline
        DeiT-T & 5M  & 69.56 & 82.34 & 53.57 & 65.52 & \textbf{70.71} \textbf{\textcolor{g}{(+1.15)}} & \textbf{82.64} \textbf{\textcolor{g}{(+0.30)}} & \textbf{54.52} \textbf{\textcolor{g}{(+0.95)}} & \textbf{66.41} \textbf{\textcolor{g}{(+0.89)}} \\
        DeiT-S & 22M & 73.64 & 84.30 & 53.44 & 67.38 & \textbf{75.24} \textbf{\textcolor{g}{(+1.60)}} & \textbf{84.87} \textbf{\textcolor{g}{(+0.57)}} & \textbf{56.31} \textbf{\textcolor{g}{(+2.87)}} & \textbf{69.30} \textbf{\textcolor{g}{(+1.92)}} \\
        DeiT-B & 86M & \textbf{76.93} & \textbf{85.52} & 57.00 & 70.35 & {76.54} {\textcolor{red}{(-0.39)}} & {85.44} {\textcolor{red}{(-0.08)}} & \textbf{58.90} \textbf{\textcolor{g}{(+1.90)}} & \textbf{70.99} \textbf{\textcolor{g}{(+0.64)}} \\
        \hline
        DINO-S & 22M & 75.35 & 85.09 & \textbf{60.65} & 71.21 & \textbf{76.29} \textbf{\textcolor{g}{(+0.94)}} & \textbf{85.75} \textbf{\textcolor{g}{(+0.66)}} & {60.61} {\textcolor{red}{(-0.04)}} & \textbf{71.68} \textbf{\textcolor{g}{(+0.47)}} \\
        DINO-B & 86M & 77.50 & 85.77 & 58.47 & 71.23 & \textbf{77.82} \textbf{\textcolor{g}{(+0.32)}} & \textbf{85.83} \textbf{\textcolor{g}{(+0.06)}} & \textbf{58.77} \textbf{\textcolor{g}{(+0.30)}} & \textbf{71.49} \textbf{\textcolor{g}{(+0.26)}} \\

         \noalign{\hrule height 1.5pt}
    \end{tabular}}
\end{table}

\subsection{Multi-resolution classification} \label[appendix]{abl:multi_res}
Following \citet{heo2024ropevit}, we test the resolution scalability of \ours models. We present the top-1 accuracies for DeiT-S, and DeiT-B models across input resolutions ranging from 144 to 512 in \cref{fig:resolution}. We use bicubic interpolation for all positional embeddings~\citep{heo2024ropevit}. In the top plot, we observe that although \ours without positional embeddings performs slightly worse than the baseline at the training resolution (224), it begins to outperform the baseline at higher resolutions. In the second and third plots, where \ours is combined with positional embeddings, for most resolutions, \ours preserves or expands the performance gap compared to baselines. These results suggest that the decay mask used in \ours generalizes effectively to higher resolutions, making it a resolution-robust enhancement for ViTs.

Another interesting application of context extrapolation is video understanding. Following~\citet{dino}, we generate a segmentation video using \ours DINO-B model. While the training resolution is 224, for video, \ours extends to $768\times432$ resolution. Some frames are provided in \cref{fig:dindinvid} and the full video can be found in our GitHub repository.
\begin{figure}[h] 
    \centering
    \includegraphics[width=0.5\linewidth]{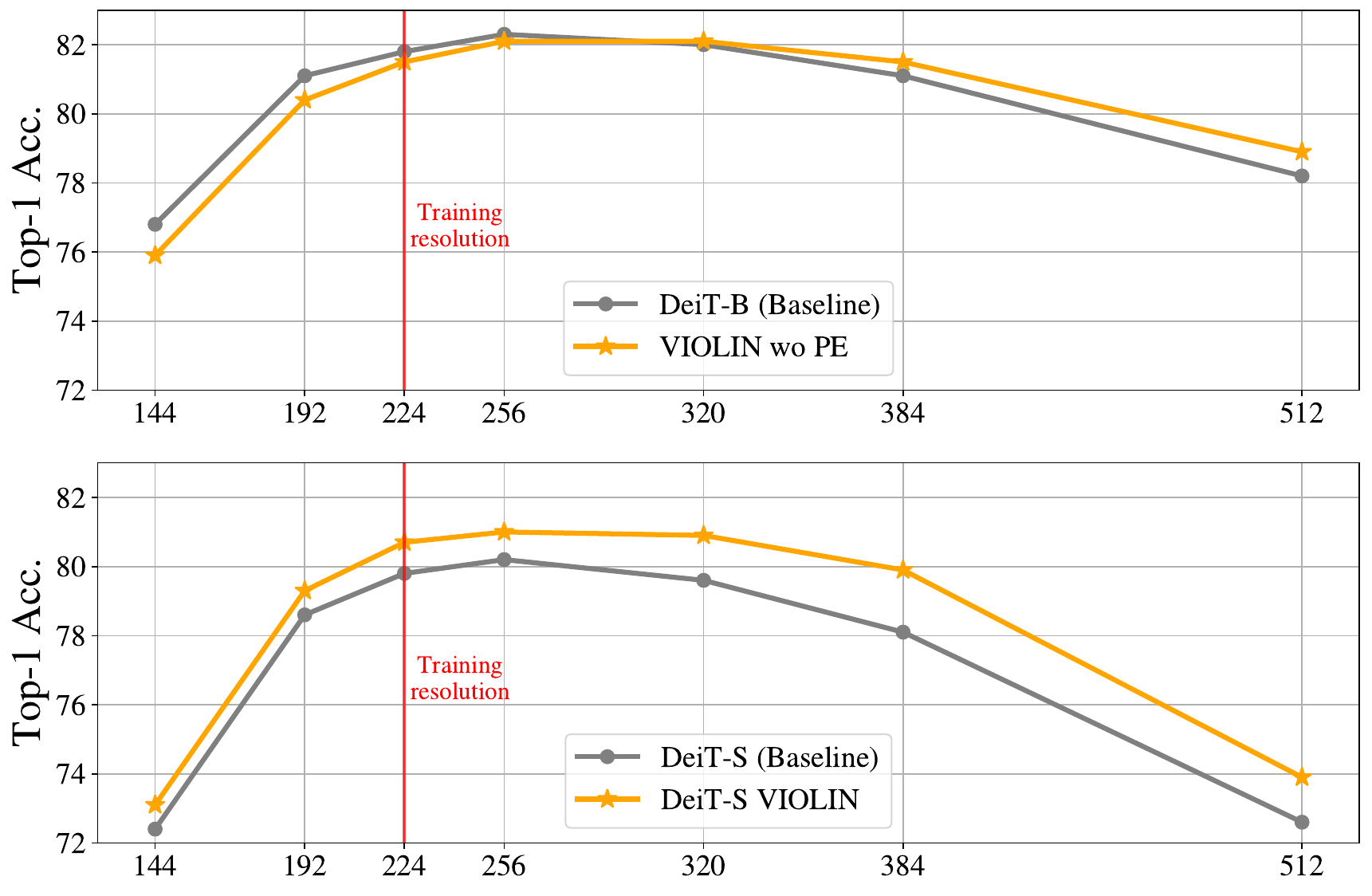}
    \caption{\textit{Resolution expansion:} Top-1 accuracies of DeiT-B (top), DeiT-S (middle) and DeiT-III-S (bottom) models and their \ours counterparts at different resolutions on ImageNet. Training resolution of 224 is highlighted in red.}
    \label{fig:resolution}
\end{figure}
\begin{figure}[h] 
    \centering
    \includegraphics[width=0.9\linewidth]{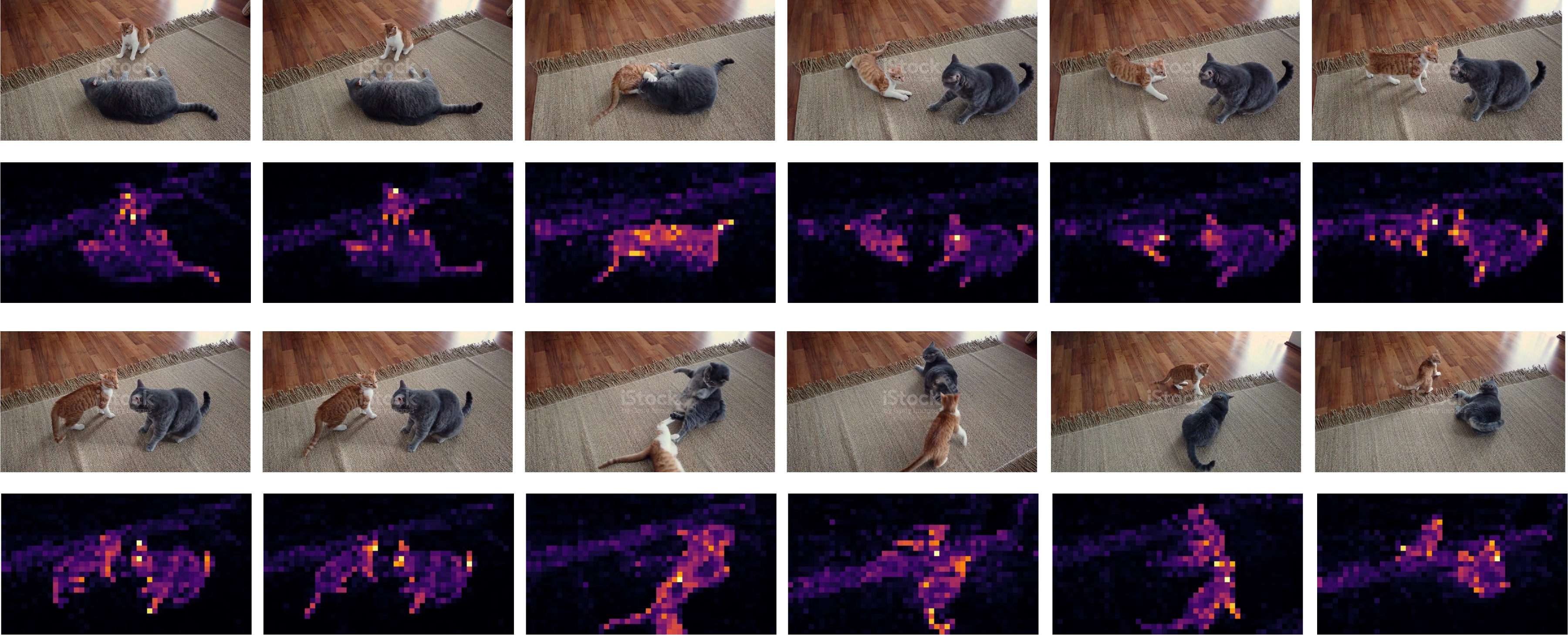}
    \caption{\textit{Video undertanding}: Frame by frame video understanding of \ours-DINO in base scale. The full video and generation codes are also included in the github repository of \ours.}
    \label{fig:dindinvid}
\end{figure}
\subsection{Additional visualizations} \label[appendix]{ap:visuals}
In \cref{fig:flat_golge}, we present the 1D flattened sequences of the patched image \textbf{(a)}, corresponding to the curves illustrated in \cref{fig:golge_curves}. In~\cref{fig:gamma_values} we visualize the mask pattern for a middle pixel under the snake curve for different values of $\gamma$. As expected, when $\gamma \approx 1$, the head attends broadly across the entire image, whereas smaller $\gamma$ values produce a much more localized receptive field, emphasizing spatial neighbors. \cref{fig:attentionabs} compares attention heatmaps of DeiT and \ours models, fine-tuned on Structured group datasets. \cref{fig:violinattvis} visualizes the attention heatmaps of the \ours DeiT-B model using various images. We adopt the average diagonal visualization strategy as proposed in~\citep{liu2024vmamba}. 
\begin{figure}[h] 
    \centering
    \includegraphics[width=0.9\linewidth]{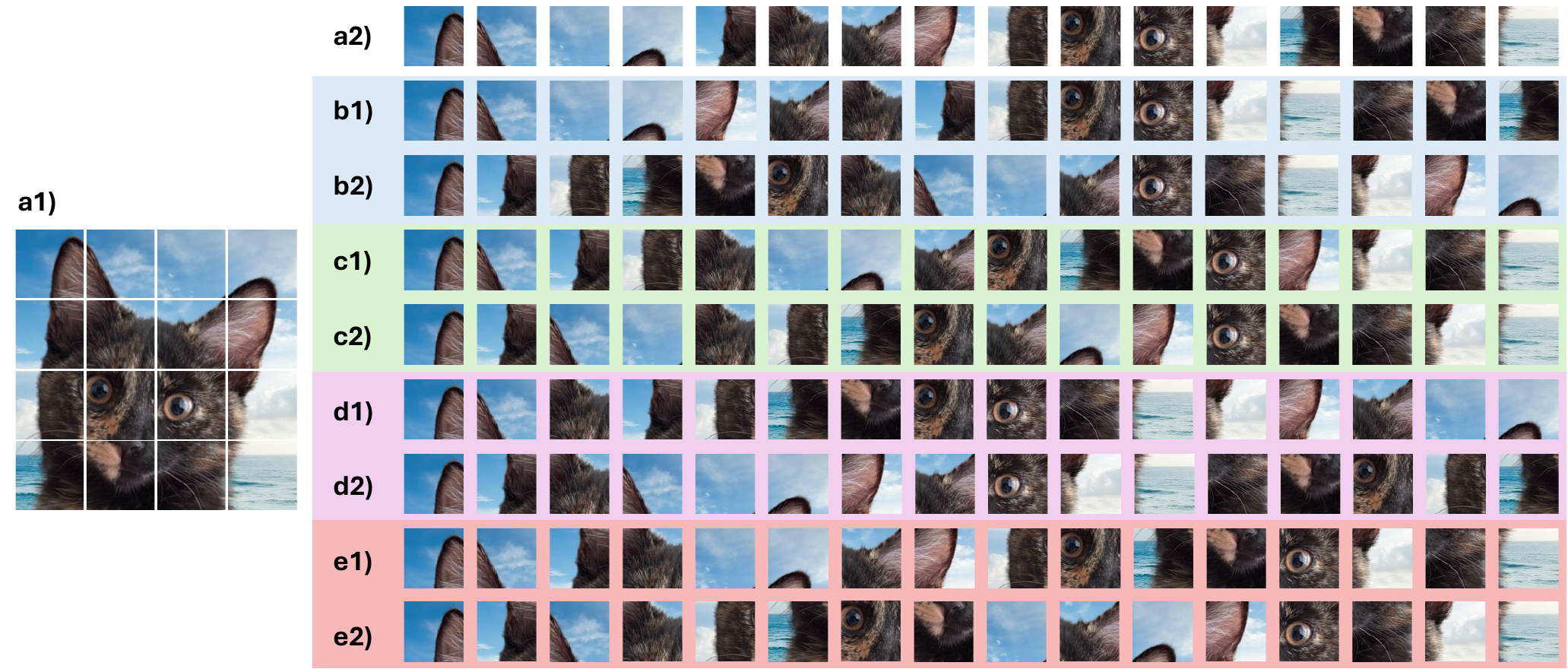}
    \caption{\textit{Flattened Space Filling Curve paths:} Examples of flattened images with different traversal paths followed in \ours. \textbf{(a1)} Original patchedimage. \textbf{(a2)} Z-curve 
        \textbf{(b1)} Snake curve, 
    \textbf{(b2)} Transposed Snake curve, 
        \textbf{(c1)} Zig-zag curve, 
    \textbf{(c2)} Transposed Zig-zag curve, 
        \textbf{(d1)} Hilbert curve, 
    \textbf{(d2)} Transposed Hilbert curve, 
        \textbf{(e1)} Peano curve, 
    \textbf{(e2)} Transposed Peano curve.}
    \label{fig:flat_golge}
\end{figure}
\begin{figure}[h] 
    \centering
    \includegraphics[width=0.9\linewidth]{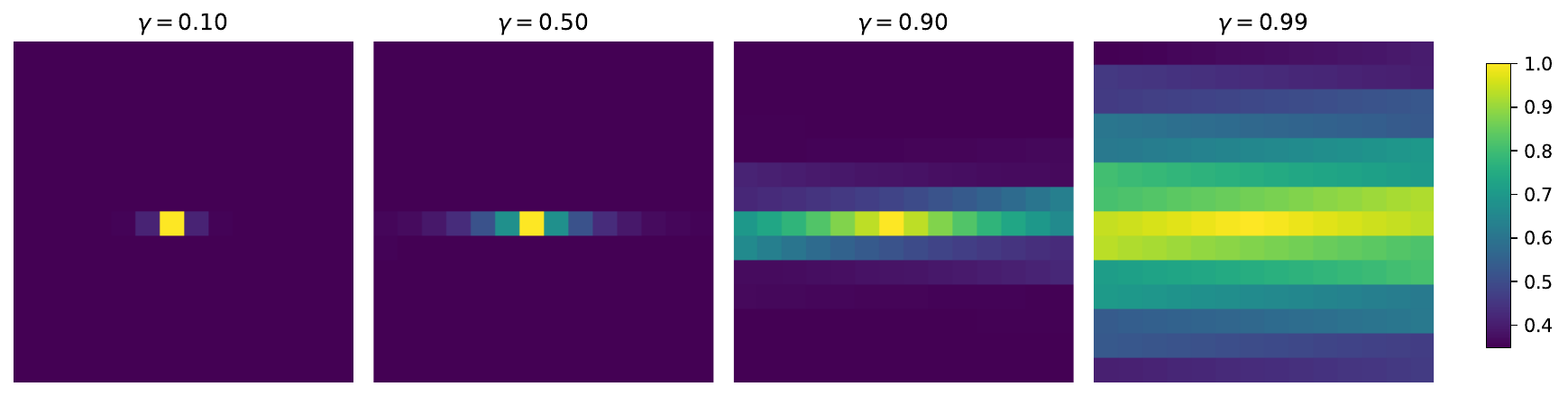}
    \caption{\textit{Effect of $\gamma$ on the decay mask:} Visualization of the decay mask for a central pixel under the Snake curve for different values of $\gamma$. Larger $\gamma$ values yield more global attention, while smaller $\gamma$ restrict the effective receptive field to local regions.}
    \label{fig:gamma_values}
\end{figure}
\begin{figure}[h] 
    \centering
    \includegraphics[width=1\linewidth]{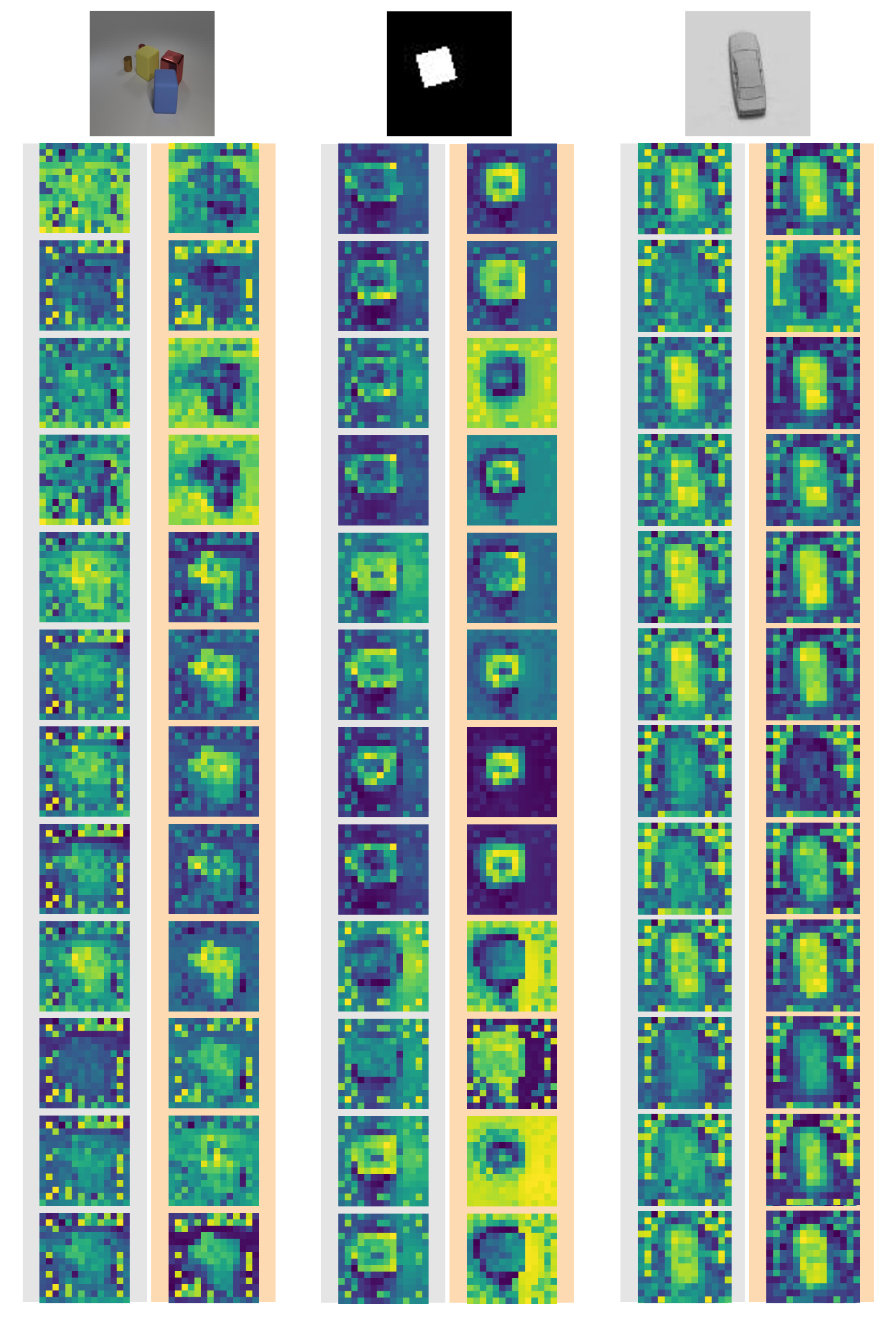}
    \caption{\textit{Attention heatmaps on Structured tasks:} Examples are taken from three datasets in the Structured group: CLEVR-Count, dSprites-Location, and SmallNORB-Azimuth. We compare attention scores of \colorbox{gray!20}{DeiT-B} (left) and \colorbox{orange!30}{\ours} (right), fine-tuned on the corresponding dataset. Visualizations are from layer 12, with rows showing heads 1–12. Since both models share the same pretrained initialization, attention heads are initially identical before fine-tuning. After fine-tuning, \ours produces more accurate and focused heads, with better object coverage and more uniform color outside the objects, indicating reduced attention to irrelevant regions.}
    \label{fig:attentionabs}
\end{figure}

\begin{figure}[h] 
    \centering
    \includegraphics[width=0.9\linewidth]{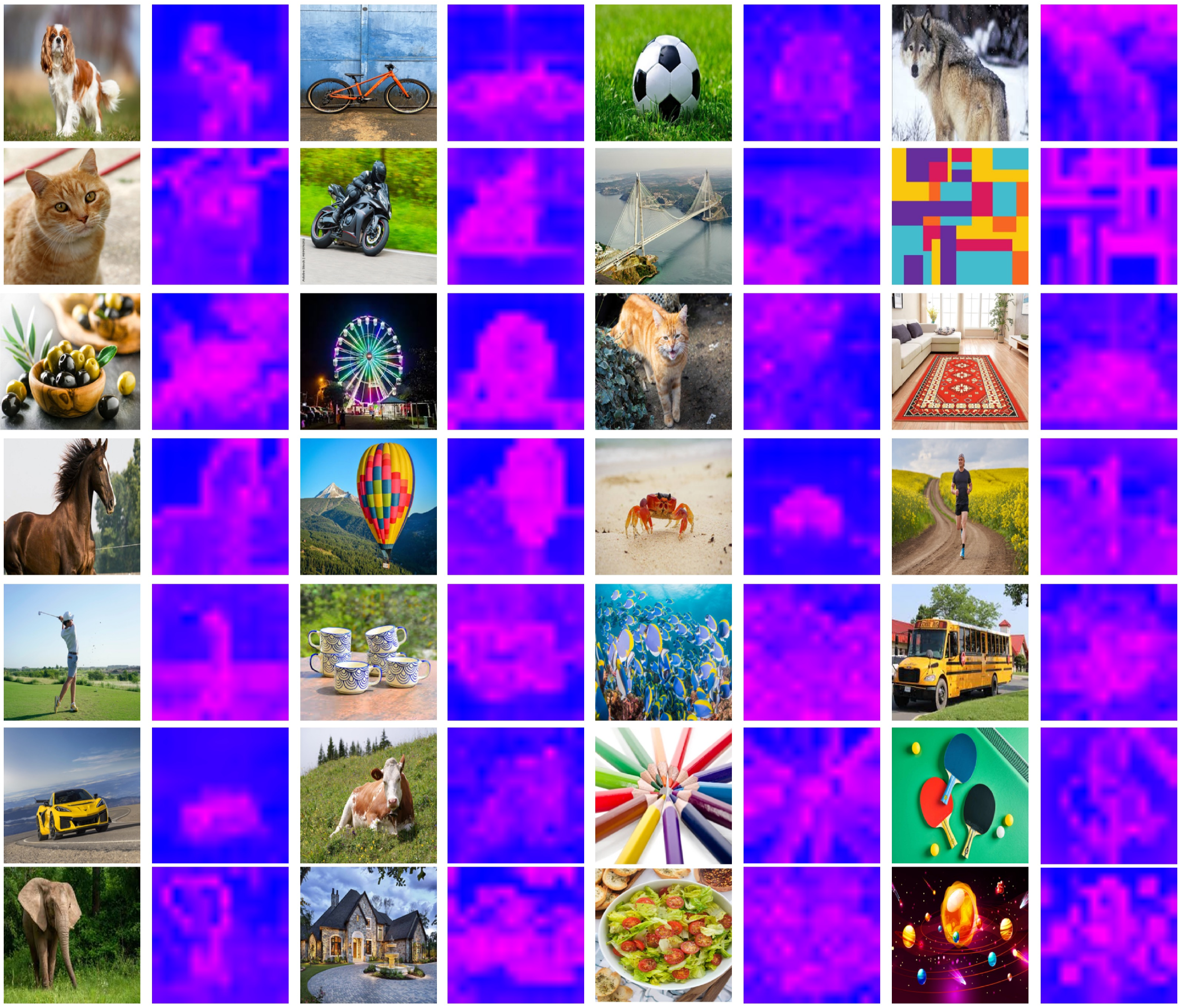}
    \caption{\textit{Attention heatmap visualization of \ours DeiT-B}: The average diagonal of the masked attention is visualized followed by~\citep{liu2024vmamba}.}
    \label{fig:violinattvis}
\end{figure}

\clearpage
\subsection{Details and individual results on VTAB-1K dataset}\label[appendix]{ap:vtab}
VTAB~\citep{vtab} contains 19 tasks which cover a broad spectrum of domains and semantics that are grouped into three sets: NATURAL, SPECIALIZED,
and STRUCTURED.

The NATURAL group represents  natural images and classical vision problems. The group includes Caltech101, CIFAR-100, DTD, Flowers102, Pets, Sun397, and SVHN datasets.

The SPECIALIZED group also contains images of the world, but they are captured through specialist equipment. These images have different invariances to those in the NATURAL tasks. It includes Resisc45 and EuroSAT, Patch Camelyon, and Diabetic Retinopathy datasets.

The STRUCTURED group assesses comprehension of the structure of a scene, for example, object counting, or 3D depth prediction. Most of the tasks are generated from simulated environments, whose structure is easy for a human, but their domain differs greatly to datasets like ImageNet. It includes Clevr count and distance, dSprites location and orientation, SmallNORB, DMLab, and KITTI. In \cref{tab:vtab_natural,tab:vtab_specialized,tab:vtab_structured}, we present the accuracy scores of each model on all VTAB-1K datasets. 
\begin{table}[h]
\centering
\caption{\textit{VTAB Results-Natural Subset:} Individual scores for each dataset.}
\label{tab:vtab_natural}
\begin{tabular}{cl ccccccc}
\noalign{\hrule height 1.5pt}
& \textbf{Model} & CIFAR & Caltech101 & DTD & Flowers102 & Pets  & SVHN & Sun397\\
\midrule
\parbox[t]{3mm}{\multirow{21}{*}{\rotatebox[origin=c]{90}{\textbf{Natural}}}} 
& DeiT-T & 48.36 &	86.9 &	63.97 &	86.43 &	87.14 &	78.28 &	35.87   \\
& \ours DeiT-T & 51.21 &	86.48 &	64.75 &	87.24 &	86.77 &	83.16 &	35.38  \\
& DeiT-T $\odot \mathbf{M}_\text{\ours}$& 51.17 &	87.8 &	65.43 &	89.17 &	86.75 &	85.78 &	37.17  \\
\cline{2-9}
& DeiT-S & 57.38 &	89.06 &	68.83 &	91.09 &	91.13 &	75.82 &	42.19  \\
& \ours DeiT-S & 60.71 &	88.06 &	68.33 &	91.12 &	91.19 &	85.38 &	41.93  \\
& DeiT-S $\odot \mathbf{M}_\text{\ours}$& 59.6 &	89.78 &	69.08 &	92.5 &	91.89 &	86.15 &	43.45  \\
\cline{2-9}
& DeiT-B & 61.38 &	90.33 &	69.06 &	93.73 &	92.43 &	85.95 &	45.59  \\
& \ours DeiT-B & 63.32 &	89.55 &	68.37 &	92.1 &	92.04 &	86.22 &	44.15  \\
& DeiT-B $\odot \mathbf{M}_\text{\ours}$& 61.99 &	91.07 &	70.14 &	93.97 &	92.75 &	90.22 &	45.56  \\
& DeiT-B LoRA & 62.37 &	90.07 &	69.27 &	93.26 &	92.3 &	90.58 &	44.35  \\
& DeiT-B $\odot \mathbf{M}_\text{\ours}$ LoRA &  65.36 &	90.92 &	70.62 &	93.57 &	92.37 &	91.86 &	45.19\\
& DeiT-B DoRA &  63.81 &	90.78 &	69.29 &	91.79 &	89.95 &	88.75 &	44.12 \\
& DeiT-B $\odot \mathbf{M}_\text{\ours}$ DoRA & 66.38 &	90.97 &	69.82 &	92.77 &	91.71 &	90.26 &	44.64 \\
\cline{2-9}
\cline{2-9}
& DeiT-III-S & 59.08 &	88.53 &	67.09 &	91.13 &	91.85 &	84.65 &	43.57  \\
& DeiT-III-S $\odot \mathbf{M}_\text{\ours}$& 62.18 &	88.78 &	69.4 &	93.92 &	91.35 &	89.98 &	43.6  \\
\cline{2-9}
& DeiT-III-B & 64.39 &	89.56 &	70.8 &	94.63 &	93.38 &	87.28 & 47.28  \\
& DeiT-III-B $\odot \mathbf{M}_\text{\ours}$& 66.77 &	89.97 &	71.38 &	95.53 &	93.61 &	91.24 & 46.19 \\
\cline{2-9}
& DeiT-III-L &  65.16 &	87.89 &	71.58 &	94.39 &	93.23 &	71.17 &	48.65 \\
& DeiT-III-L $\odot \mathbf{M}_\text{\ours}$& 66.74 &	87.67 &	72.34 &	95.01 &	93.28 &	78.7 &	48.58\\
\cline{2-9}
& DeiT-III-H &  64.34 &	88.2 &	71.22 &	94.95 &	92.96 &	68.76 &	48.46 \\
& DeiT-III-H $\odot \mathbf{M}_\text{\ours}$& 65.16 &	88.18 &	71.35 &	95.18 &	93.33 &	72.72 &	48.7\\
\cline{2-9}
\cline{2-9}
& DINO-S & 54.32 &	93.95 &	68.12 &	91.28 &	88.62 &	90.24 &	40.93  \\
& \ours DINO-S & 56.05 &	91.95 &	69.33 &	95.26 &	89.62 &	91.65 &	40.2  \\
& DINO-S $\odot \mathbf{M}_\text{\ours}$& 57.38 &	90.92 &	68.88 &	95.18 &	89.44 &	90.61 &	41.45  \\
\cline{2-9}
& DINO-B & 58.57 &	93.7 &	70.64 &	95.84 &	90.21 &	89.69 &	43.86  \\
& \ours DINO-B & 59.96 &	92.13 &	71.84 &	95.69 &	90.49 &	90.78 &	43.83  \\
& DINO-B $\odot \mathbf{M}_\text{\ours}$& 62.21 &	93.32 &	71.58 &	96.1 &	90.74 &	91.74 &	44.87  \\
\noalign{\hrule height 1.5pt}
\end{tabular}
\end{table}

\begin{table}[h]
\centering
\caption{\textit{VTAB Results-Structured Subset:} Individual scores for each dataset. SN refers to SmallNorm, and dS represents dSprites.}
\label{tab:vtab_structured}
\resizebox{\textwidth}{!}{
\begin{tabular}{cl cccccccc}
\noalign{\hrule height 1.5pt}
& Model & CLEVR Count & CLEVR Dist & DMLab & KITTI & dS Loc & dS Ori & SN Azi & SN Ere \\
\midrule
\parbox[t]{3mm}{\multirow{21}{*}{\rotatebox[origin=c]{90}{\textbf{Structured}}}} 
& DeiT-T &  71.37 &	60.37 &	44.26 &	78.81 &	69.04 &	41.86 &	30.28 &	32.57 \\
& \ours DeiT-T & 72.73 &	61.7 &	47.98 &	79.7 &	68.7 &	46.11 &	25.31 &	33.96  \\
& DeiT-T $\odot \mathbf{M}_\text{\ours}$& 74.41 &	59.84 &	46.37 &	80.78 &	78.32 &	50.91 &	31.33 &	38.05  \\
\cline{2-10}
& DeiT-S & 75.08 &	58.15 &	45.74 &	78.43 &	63.3 &	48.13 &	26.24 &	32.48  \\
& \ours DeiT-S & 78.26 &	59.25 &	49.91 &	81.29 &	64.63 &	53.16 &	27.37 &	36.59  \\
& DeiT-S $\odot \mathbf{M}_\text{\ours}$& 78.87 &	59.2 &	50.59 &	80.4 &	73.52 &	53.44 &	32.48 &	37.62   \\
\cline{2-10}
& DeiT-B & 79.01 &	60.1 &	47.03 &	82.61 &	66.7 &	53.38 &	30.87 &	36.32  \\
& \ours DeiT-B & 82.6 &	61.72 &	52.84 &	80.97 &	68.44 &	55.47 &	31.72 &	37.45  \\
& DeiT-B $\odot \mathbf{M}_\text{\ours}$& 81.33 &	61.31 &	53.93 &	83.22 &	81.72 &	57.28 &	35.37 &	40.98  \\
& DeiT-B LoRA & 79.1 &	60.15 &	51.93 &	81.25 &	78.53 &	53.71 &	28.28 &	32.12 \\
& DeiT-B $\odot \mathbf{M}_\text{\ours}$ LoRA & 82.36 &	63.46 &	52.86 &	82.18 &	78.52 &	55.25 &	32.21 &	39.79 \\
& DeiT-B DoRA & 76.97 &	60.62 &	50.37 &	81.34 &	73.34 &	54.11 &	28.69 &	39.43 \\
& DeiT-B $\odot \mathbf{M}_\text{\ours}$ DoRA & 81.64 &	63.29 &	51.06 &	82.42 &	78.65 &	56.14 &	27.62 &	38.89 \\
\cline{2-10}
\cline{2-10}
& DeiT-III-S & 76.53 &	57.29 &	46.23 &	81.81 &	58.12 &	50.48 &	26.33 &	26.57  \\
& DeiT-III-S $\odot \mathbf{M}_\text{\ours}$& 77.78 &	61.9 &	54.84 &	83.17 &	85.91 &	59.78 &	33.45 &	36.07  \\
\cline{2-10}
& DeiT-III-B & 80.54 &	61.82 &	50.95 &	82.7 &	60.75 &	55.35 &	30.36 &	31.18  \\
& DeiT-III-B $\odot \mathbf{M}_\text{\ours}$& 84.51 &	61.92 &	55.64 &	82.79 &	84.06 &	60.34 &	36.59 &	38.4  \\
\cline{2-10}
& DeiT-III-L &  72.99 &	53.23 &	47.59 &	80.78 &	50.19 &	50.72 &	25.21 &	30.51\\
& DeiT-III-L $\odot \mathbf{M}_\text{\ours}$& 76.66 &	55.64 &	50.03 &	81.86 &	55.42 &	57.35 &	28.69 &	33.91\\
\cline{2-10}
& DeiT-III-H &  75.17 &	55.24 &	48.66 &	81.11 &	41.57 &	46.99 &	25.15 &	31.74 \\
& DeiT-III-H $\odot \mathbf{M}_\text{\ours}$& 77.89 &	55.96 &	50.96 &	81.9 &	47.85 &	55.07 &	26.57 &	33\\
\cline{2-10}
\cline{2-10}
& DINO-S & 83.29 &	65.03 &	53.44 &	80.03 &	78.72 &	48.61 &	34.23 &	41.87  \\
& \ours DINO-S & 84.19 &	63.35 &	55.72 &	81.43 &	75.82 &	49.37 &	32.92 &	42.06   \\
& DINO-S $\odot \mathbf{M}_\text{\ours}$& 83.69 &	64.23 &	55.35 &	79.98 &	79.42 &	49.18 &	36.43 &	41.61  \\
\cline{2-10}
& DINO-B &    80.93 &	62.76 &	52.17 &	79.23 &	69.22 &	48.39 &	33.73 &	41.34 \\
& \ours DINO-B & 81.96 &	63.04 &	53.45 &	79 &	72.12 &	49.59 &	30.29 &	40.76 \\
& DINO-B $\odot \mathbf{M}_\text{\ours}$& 83.87 &	63.65 &	55.66 &	81.2 &	74.14 &	54.18 &	34.79 &	39.27  \\
\noalign{\hrule height 1.5pt}
\end{tabular} }
\end{table}

\begin{table}[h]
\centering
\caption{\textit{VTAB Results-Specialized Subset:} Individual scores for each dataset.}
\label{tab:vtab_specialized}
\begin{tabular}{cl cccc}
\noalign{\hrule height 1.5pt}
& Model &  Patch Camelyon & EuroSAT & Resisc45 & Diabetic Retinopathy \\
\midrule
\parbox[t]{3mm}{\multirow{21}{*}{\rotatebox[origin=c]{90}{\textbf{Specialized}}}} 
& DeiT-T & 82.79 &	93.53 &	80.98 &	72.05 \\
& \ours DeiT-T & 82.47 &	93.35 &	81.3 &	73.43 \\
& DeiT-T $\odot \mathbf{M}_\text{\ours}$& 84.04 &	93.88 &	83.23 &	73.87 \\
\cline{2-6}
& DeiT-S & 84.08 &	94.4 &	84.01 &	74.72 \\
& \ours DeiT-S &  85.36 &	95.41 &	83.86 &	74.85\\
& DeiT-S $\odot \mathbf{M}_\text{\ours}$& 85.19 &	95.02 &	85.68 &	74.32 \\
\cline{2-6}
& DeiT-B &  85.74 &	95.38 &	86.37 &	74.6 \\
& \ours DeiT-B & 85.62 &	95.44 &	85.68 & 75.02 \\
& DeiT-B $\odot \mathbf{M}_\text{\ours}$& 86.74 &	95.91 &	87.31 &	75.2 \\
& DeiT-B LoRA &  86.2 &	95.46 &	85.72 &	75.09 \\
& DeiT-B $\odot \mathbf{M}_\text{\ours}$ LoRA & 85.9 &	95.66 &	86.71 &	73.73 \\
& DeiT-B DoRA &  85.53 &	95.39 &	85.21 &	74.8 \\
& DeiT-B $\odot \mathbf{M}_\text{\ours}$ DoRA & 85.92 &	95.56 &	84.98 &	73.35 \\
\cline{2-6}
\cline{2-6}
& DeiT-III-S &  84.57 &	93.33 &	82.68 &	73.94\\
& DeiT-III-S $\odot \mathbf{M}_\text{\ours}$& 85.76 &	94.98 &	86.43 &	74.67 \\
\cline{2-6}
& DeiT-III-B &  86.4 &	94.47 &	85.83 &	74.33 \\
& DeiT-III-B $\odot \mathbf{M}_\text{\ours}$&  87.77 &	95.8 &	87.57 &	74.73\\
\cline{2-6}
& DeiT-III-L & 84.5 &	93.28 &	84.47 &	75.28 \\
& DeiT-III-L $\odot \mathbf{M}_\text{\ours}$& 84.54 &	94.11 &	85.24 &	74.83 \\
\cline{2-6}
& DeiT-III-H &  84.64 &	92.64 &	84.99 &	74.46 \\
& DeiT-III-H $\odot \mathbf{M}_\text{\ours}$& 84.81 &	93.3 &	84.66 &	74.93 \\
\cline{2-6}
\cline{2-6}
& DINO-S &  86.82 &	94.29 &	86.13 &	73.14 \\
& \ours DINO-S &  87.7 &	94.76 &	86.59 &	73.96 \\
& DINO-S $\odot \mathbf{M}_\text{\ours}$& 85.94 &	94.9 &	86.17 &	74.26  \\
\cline{2-6}
& DINO-B & 87.02 &	94.45 &	87.05 &	74.55  \\
& \ours DINO-B &  87.57 &	94.46 &	87.25 &	74.03  \\
& DINO-B $\odot \mathbf{M}_\text{\ours}$& 87.81 &	95.44 &	87.96 &	74.54 \\
\noalign{\hrule height 1.5pt}
\end{tabular}
\end{table}

\clearpage
\subsection{Comparison against other locality methods.}\label[appendix]{ap:locality}

There are many methods for enhancing locality in plain ViTs. To compare these approaches with \ours, we start from the same pretrained DeiT-B model, add each locality mechanism on top of it, and fine-tune all models under the exact same protocol. This ensures that every method begins from an identical initialization. The results show that while all methods offer some improvement, \ours achieves the strongest gains. Below, we detail how each method is incorporated and initialized to preserve the pretrained model at the start of fine-tuning, and we report results in \cref{tab:vtab_natural_local,tab:vtab_structured_local,tab:vtab_specialized_local}. 

\paragraph{Swin RPB} Swin transformers~\citep{liu2021swin} introduces locality two ways, by partitioning the feature map into shifted windows, and with relative position biases (RPB) that encode spatial offsets inside each window. These biases give the attention mechanism information about relative spatial relationships within a window, improving performance on vision tasks where nearby pixels are correlated. To incorporate RPB into a pretrained global-attention ViT, we add a learnable bias term $\mathbf{B}\in\R^{N\times N}$ as in \cref{eq:swin} where $\mathbf{B}[i,j]$ depends on the relative position of the tokens $i$ and $j$.
\begin{align} \label{eq:swin}
        \mathbf{Y} = \text{Softmax}\left( \frac{\mathbf{QK}^\top}{\sqrt{d}} + \mathbf{B} \right) \mathbf{V}.
    \end{align}
By initializing $\mathbf{B}$ with zeros, the modified attention reduces exactly to the original attention. This guarantees that adding the Swin-RPB  does not alter the model’s capabilities and new positional biases can be learned during fine-tuning.

\paragraph{2D Relative Position Encoding (iRPE)} iRPE~\citep{wu2021rethinking} add locality into attention, by adding learnable bias terms based on the 2-D relative position of tokens.  For any pair of tokens \( (i,j) \), the offset \( \Delta p_{ij} \) is mapped through a bucketing function to an index \( b_{ij} \), which selects a bias embedding from a table \( R \in \mathbb{R}^{B \times H} \). Depending on the chosen attachment mode, this embedding is added to queries, keys or values (e.g., \( \hat{k}_j = k_j + R_{b_{\cdot j}} \)) and the attention scores are calculated using this new parameters. To integrate iRPE into a pretrained ViT without disturbing its learned representations, we initialize all bucket embeddings
to zero,\[R_{b} = 0 \;\; \forall\, b\] so that the queries/keys/values are not changed at the start of finetuning. This ensures that the model initially behaves exactly like the pretrained backbone, while the RPE
parameters gradually learn non-zero spatial biases during training.  

\paragraph{LocalVit} LocalVit~\citep{li2021localvit} enhances locality inside the feed-forward network (FFN) rather than attention. It replaces the MLP with a depthwise-convolutional residual branch. This allows each token to mix information with its spatial neighbors, giving the transformer an inductive bias similar to CNNs while preserving the global interactions of self-attention. For LocalViT, we gate the convolutional branch with a learnable scalar initialized to zero, and initialize the depthwise conv as an identity kernel (center=1, others=0). This allows the modified architecture to behave exactly the same as the pretrained model at the first step, enabling smooth fine-tuning and gradual learning of locality information.

\paragraph{\ours variations} Additionally, we evaluate several ablations discussed in previous sections, including an additive version of $\mathbf{M}{\ours}$, Manhattan-distance masking, a single-curve variant ($\mathbf{M}{\text{Peano}}$), and random-curve masking ($\mathbf{M}_{\text{Random}}$), under the same finetuning protocol for completeness. These results further highlight the contributions of using multiple SFCs rather than relying on any single locality pattern.

\begin{table}[h]
\centering
\caption{\textit{VTAB Results-Natural Subset:} Individual scores for each dataset for different locality-enforcing methods.}
\label{tab:vtab_natural_local}
{
{%
\begin{tabular}{l ccccccc}
\noalign{\hrule height 1.5pt}
 Model & CIFAR & Caltech101 & DTD & Flowers102 & Pets  & SVHN & Sun397\\
\midrule
Additive $\mathbf{M}_{\ours}$ & 63.64 & 91.11 &	69.27 &	93.6 &	92.6 &	90.46 &	44.28  \\
Swin RPB& 63.72 & 90.75 &	70.16 &	94.15 &	92.66 &	90.21 &	45.82  \\
i-RPE-QKV & 65.03 &	90.94 &	70.12 &	93.97 &	92.63 &	90.32 &	45.66  \\
LocalVit & 65.17 &	91.13 &	69.57 &	93.85 &	92.56 &	90.26 &	45.63 \\
Manhattan & 59.62 &	90.78 &	68.03 &	92.07 &	91.47 &	89.81 &	42.13 \\
 $\mathbf{M}_{\text{Peano}}$ & 65.04 &	90.78 &	69.18 &	94.11 &	92.61 &	90.14 &	45.89 \\
 $\mathbf{M}_{\text{Random}}$ & 65.02 &	90.78 &	69.02 &	94.09 &	92.6 &	89.74 &	45.91 \\
\noalign{\hrule height 1.5pt}
\end{tabular}}}
\end{table}
\begin{table}[h]
\centering
\caption{\textit{VTAB Results-Structured Subset:} Individual scores for each dataset for different locality-enforcing methods. SN refers to SmallNorm, and dS represents dSprites.}
\label{tab:vtab_structured_local}
{%
\begin{tabular}{l cccccccc}
\noalign{\hrule height 1.5pt}
 Model & CLEVR Count & CLEVR Dist & DMLab & KITTI & dS Loc & dS Ori & SN Azi & SN Ere \\
\midrule
Additive $\mathbf{M}_{\ours}$ & 81.08 & 62.12 &	51.95 &	83.26 &	80.95 &	57.25 &	34.76 &	39.38 \\
Swin RPB & 81.42 & 61.67 &	53.83 &	83.17 &	81.39 &	56.81 &	35.42 &	38.9 \\
i-RPE-QKV & 81.25 & 61.58 &	53.42 &	83.12 &	81.49 &	57.28 &	35.13 &	38.34 \\
LocalVit & 81.28 &	61.53 &	53.43 &	82.56 &	81.38 &	57.6 &	35.5 &	38.71 \\
Manhattan &	76.74 &	60.73 &	50.16 &	82.51 &	74.69 &	55.03 &	32.49 &	34.57  \\
$\mathbf{M}_{\text{Peano}}$ & 81.45 &	61.4 &	53.59 &	83.17 &	81.09 &	56.98 &	34.53 &	40.84 \\
$\mathbf{M}_{\text{Random}}$ & 81.45 &	61.33 &	53.36 &	82.84 &	80.21 &	56.98 &	34.5 &	40.76 \\
\noalign{\hrule height 1.5pt}
\end{tabular} }%
\end{table}
\begin{table}[h!]
\centering
\caption{\textit{VTAB Results-Specialized Subset:} Individual scores for each dataset for different locality-enforcing methods.}
\label{tab:vtab_specialized_local}
{%
\begin{tabular}{l cccc}
\noalign{\hrule height 1.5pt}
Model &  Patch Camelyon & EuroSAT & Resisc45 & Diabetic Retinopathy \\
\midrule
Additive $\mathbf{M}_{\ours}$ & 86.84 &96.07 &	87.62 &	74.93  \\
Swin RPB & 86.17 & 95.66 &	87.47 &	75.37  \\
i-RPE-QKV & 86.76 & 95.72 &	87.51 &	74.91  \\
LocalVit & 86.55 &	95.85 &	87.58 &	75.4 \\
Manhattan &	86.44 &	94.93 &	86.29 &	74.21 \\
$\mathbf{M}_{\text{Peano}}$ & 87.13 &	95.93 &	87.71 &	75.56 \\
$\mathbf{M}_{\text{Random}}$ & 86.8 &	95.57 &	87.63 &	75.26 \\
\noalign{\hrule height 1.5pt}
\end{tabular}}
\end{table}

\subsection{Learned curve order}\label{ap:learned}
Motivated by recent work on learned patch orderings~\citep{kutscher2025REOrder}, we implemented a learned ordering variant within our framework and trained a DeiT-Tiny model using this learned sequence. The results are shown in \cref{tab:learned_order}. Although the learned variant underperforms the original VIOLIN mask in this initial experiment, it highlights several promising research directions, such as jointly learning multiple traversal curves, exploring task-adaptive orderings, and studying how different datasets induce specialized spatial structures—all of which may further improve performance and interpretability.
\begin{table}[h!]
\centering
\caption{\textit{Comparison of DeiT-Tiny, VIOLIN, and a learned patch-ordering variant:} learned patch orderings~\citep{kutscher2025REOrder} is adapted to \ours framework.}
{%
\begin{tabular}{lc}
\noalign{\hrule height 1.5pt}
Model & Accuracy ($\%$) \\
\midrule
\rowcolor{gray!20}DeiT-T & 72.2 \\
\rowcolor{orange!30}\ours & 73.0 \\
\ours w learned order & 70.1 \\
\noalign{\hrule height 1.5pt}
\end{tabular}}
\label{tab:learned_order}
\end{table}

\subsection{Comparison with relative positional encodings in pretraining}\label{ap:rpe_comp}

\ours and relative positional encodings (RPEs) introduce spatial inductive bias through different mechanisms. As described in \cref{ap:relpos}, \ours applies a lightweight multiplicative decay mask, whereas modern RPEs add learned pairwise positional terms to the attention logits and often require additional parameters or architecture-specific modifications. To assess their relationship, in addition to the fine-tuning experiments in \cref{ap:locality}, we include comparisons with several RPE-based locality baselines in both the pretraining settings.

On ImageNet-1K supervised pretraining, VIOLIN achieves competitive performance to several RPE variants while adding significantly fewer FLOPs. For example, on DeiT-S, \ours introduces 5$\times$ fewer FLOPs than Transformer-XL and 1.3$\times$ fewer FLOPs than iRPE-QK, while obtaining comparable accuracy.

\begin{table}[h]
\centering
\caption{\textit{Comparison of \ours and RPE variants:} on DeiT-S pretraining in ImageNet-1K. Results are taken from respective papers of i-RPE~\citep{wu2021rethinking} and Transformer-XL~\citep{dai2019transformerxl}.}
{%
\begin{tabular}{lcc}
\noalign{\hrule height 1.5pt}
Model & {Additional FLOPs (\%)} & {Top-1 Acc. (\%)} \\
\midrule
\rowcolor{gray!20} DeiT-S       & -   & 79.9 \\
\rowcolor{orange!30}\ours & \textbf{0.7}  & 80.7 \\
Transformer-XL & 4.3  & 80.8 \\
iRPE-K         & 0.9  & \textbf{80.9} \\
iRPE-QK        & 2.2  & 81.1 \\
iRPE-QKV       & 5.9  & 81.4 \\
\noalign{\hrule height 1.5pt}
\end{tabular}}
\label{tab:rpe_pretrain}
\end{table}

\ours can also be combined with RPEs. On DeiT-T, adding VIOLIN to iRPE-K yields an additional accuracy gain, indicating that the methods introduce complementary inductive information.

\begin{table}[h]
\centering
\caption{\textit{Combination of \ours with RPEs:} pretraining results on DeiT-T model as baseline, with PRE and with RPE+\ours.}
{%
\begin{tabular}{lcc}
\noalign{\hrule height 1.5pt}
{Model} & {Additional FLOPs (\%)} & {Top-1 Acc. (\%)} \\
\midrule
\rowcolor{gray!20} DeiT-T            & --   & 72.2 \\
iRPE-K              & 1.7  & 73.7 \\
\rowcolor{orange!30}iRPE-K + \ours & 2.3  & \textbf{73.9} \\
\noalign{\hrule height 1.5pt}
\end{tabular}}
\label{tab:rpe_combine}
\end{table}

\newpage

\section{Codes and implementation details} \label{ap:codes}
\subsection{Compute resources} \label{ap:compute}
\begin{table}[h]
\centering
\caption{\textit{Compute resources for pertaining:} The number of GPUS and approximate training time for each model and scale are provided.}
\label{tab:compute}
\begin{tabular}{lcc}
\noalign{\hrule height 1.5pt}
Model & \# GPUs & Training time \\
\midrule
DeiT-T & 4 & $\approx 17 \text{ Hour}$\\
DeiT-S & 4 & $\approx 23 \text{ Hour}$ \\
DeiT-B & 16 & $\approx 1.7 \text{ Day}$\\
\hline
DINO-S & 16  & $\approx 3.2 \text{ Days}$\\
DINO-B & 16 &  $\approx 7 \text{ Days}$\\
\noalign{\hrule height 1.5pt}
\end{tabular}
\end{table}

In \cref{tab:compute}, we report the compute resources required for each of the evaluated models. These numbers also apply to the models used for ablation experiments.

For fine-tuning, we performed 30 runs per dataset for each model (25 for validation and 5 for final evaluation). Each run took between 2 to 10 minutes, and the complete fine-tuning evaluation was completed in approximately 10 days.

All experiments were conducted using a mix of NVIDIA A100 SXM4 80GB, NVIDIA GH200 96GB, and NVIDIA H100 SXM5 80GB GPUs, used interchangeably depending on availability.

\subsection{VTAB-1K hyperparameters} \label[appendix]{ap:vtab_hyper}
To determine optimal learning rates, we use the VTAB-1K-pytorch repository~\citep{vtab1k-pytorch} and conduct a grid search. Following the original implementation, every dataset is first split into a $800/200$ train/validation partition to select the optimal learning rate per dataset using 5 seeds. We then train on the full dataset using 5 random seeds. For each model, we average the top 3 runs to report the final accuracy. The complete list of hyperparameters is provided in \cref{tab:vtab_hyper}. For parameter-efficient fine-tuning, we again use the same set of hyperparameters and grid search over ranks [2,4,8,16].

\begin{table}[h]
\begin{center}
\caption{\textit{Hyperparameters for fine-tuning on VTAB-1K}: The same hyperparameters are used for all models, following~\citep{vtab1k-pytorch}.}
\label{tab:vtab_hyper}
\begin{tabular}{lc}
\noalign{\hrule height 1.5pt}
Parameter & Value  \\
\hline
Epochs & 50 \\
Batch size & 64 \\
Seeds & 5 \\
Optimizer & AdamW \\
\quad Learning rate & [1e-3, 7.5e-4, 5.0e-4, 2.5e-4, 1.0e-4] \\
\quad Layer-wise lr deca & 0.65* \\
\quad Weight decay & 0.05\\
\quad Momentum & $\beta_1=0.9,\beta_2=0.999$ \\
Learning rate schedule & linear warmup $\rightarrow$ cosine decay \\
\quad Warmup epochs & 5 \\
Precision & mixed \texttt{bfloat16} \\
\quad Backend & \texttt{torch.autocast} \\
Data Augmentation & \\
\quad \tt{Resize} &  \\
\qquad \tt{interpolation} & bicubic \\
\qquad \tt{size} & 224x224 \\
\quad \tt{Normalize} & ImageNet-1K statistics \\
\noalign{\hrule height 1.5pt}
\end{tabular}
\end{center}
\end{table}

\subsection{Codes for curves} \label[appendix]{curve_codes}

In this section, we provide the codes used to create the permutation orders of each SFC in basis of Z-curve. In other words, we define efficiency the indexing needed for the permutation $\pi_c(.)$ for each curve $c$ used in our study. \\

\textbf{Snake curve} \\ \\
\hspace{5mm}\begin{minipage}[h]{0.95\columnwidth} 
    \centering
    \begin{python}[framerule=0.0
    mm , rulecolor=\color{black} ,frame=single , backgroundcolor = \color{cyan!10} ]
def snake_curve(grid):
    """Returns the elements of the grid in snake order."""
     n_rows, n_cols = grid.shape
    order = []
    for y in range(n_rows):
        if y %
            # Left-to-right for even rows
            order.extend((x, y) for x in range(n_cols))
        else:
            # Right-to-left for odd rows
            order.extend((x, y) for x in reversed(range(n_cols)))
    return order
\end{python}
\end{minipage}

\textbf{Zig-zag curve} \\ \\
\hspace{5mm}
\begin{minipage}[h]{0.95\columnwidth} 
    \centering
    \begin{python}[framerule=0.0
    mm , rulecolor=\color{black} ,frame=single , backgroundcolor = \color{green!10}]
def zigzag_curve(grid):
    """Returns the elements of the grid in diagonal zig-zag order."""
    n_rows, n_cols = grid.shape
    order = []
    for d in range(n_rows + n_cols - 1):
        if d %
            r = min(d, n_rows - 1)
            c = d - r
            while r >= 0 and c < n_cols:
                order.append((r, c))
                r -= 1
                c += 1
        else:
            c = min(d, n_cols - 1)
            r = d - c
            while c >= 0 and r < n_rows:
                order.append((r, c))
                c -= 1
                r += 1
    return order
\end{python}
\end{minipage}

\newpage
\textbf{Hilbert curve } Adapted from~\citep{gilbert}. \\ \\
\hspace{5mm}\begin{minipage}[h]{0.95\columnwidth} 
    \centering
    \begin{python}[framerule=0.0
    mm , rulecolor=\color{black} ,frame=single , backgroundcolor = \color{violet!7}]
def hilbert_curve(grid):
    rows = len(grid)
    cols = len(grid[0]) if rows > 0 else 0
    return [(x, y) for x,y in gilbert2d(rows, cols)]
    
def gilbert2d(width, height):
    """
    Generalized Hilbert ('gilbert') space-filling curve for arbitrary-sized
    2D rectangular grids. Generates discrete 2D coordinates to fill a rectangle
    of size (width x height).
    """
    if width >= height:
        yield from generate2d(0, 0, width, 0, 0, height)
    else:
        yield from generate2d(0, 0, 0, height, width, 0)

def sgn(x):
    return -1 if x < 0 else (1 if x > 0 else 0)

def generate2d(x, y, ax, ay, bx, by):
    w = abs(ax + ay)
    h = abs(bx + by)
    (dax, day) = (sgn(ax), sgn(ay)) # unit major direction
    (dbx, dby) = (sgn(bx), sgn(by)) # unit orthogonal direction
    if h == 1:
        # trivial row fill
        for i in range(0, w):
            yield(x, y)
            (x, y) = (x + dax, y + day)
        return
    if w == 1:
        # trivial column fill
        for i in range(0, h):
            yield(x, y)
            (x, y) = (x + dbx, y + dby)
        return
    (ax2, ay2) = (ax//2, ay//2)
    (bx2, by2) = (bx//2, by//2)
    w2 = abs(ax2 + ay2)
    h2 = abs(bx2 + by2)
    if 2*w > 3*h:
        if (w2 %
            # prefer even steps
            (ax2, ay2) = (ax2 + dax, ay2 + day)
        # long case: split in two parts only
        yield from generate2d(x, y, ax2, ay2, bx, by)
        yield from generate2d(x+ax2, y+ay2, ax-ax2, ay-ay2, bx, by)
    else:
        if (h2 %
            # prefer even steps
            (bx2, by2) = (bx2 + dbx, by2 + dby)
        # standard case: one step up, one long horizontal, one step down
        yield from generate2d(x, y, bx2, by2, ax2, ay2)
        yield from generate2d(x+bx2, y+by2, ax, ay, bx-bx2, by-by2)
        yield from generate2d(x+(ax-dax)+(bx2-dbx), y+(ay-day)+(by2-dby),
                              -bx2, -by2, -(ax-ax2), -(ay-ay2))
\end{python}
\end{minipage}
\newpage
\textbf{Peano curve } Adapted from~\citep{rmrschub_2021_zCurve,prater_pymorton}. \\ \\
\hspace{5mm}\begin{minipage}[h]{0.95\columnwidth} 
    \centering
    \begin{python}[framerule=0.0
    mm , rulecolor=\color{black} ,frame=single , backgroundcolor = \color{red!7}]
def interleave_bits(x, y):
    """
    Interleave the bits of two integers (x, y) to compute Morton order.
    """
    def split_bits(value):
        result = 0
        for i in range(32):  # Support up to 32-bit integers
            result |= ((value >> i) & 1) << (2 * i)
        return result

    return split_bits(x) | (split_bits(y) << 1)

def peano_curve(grid):
    """Returns the elements of the grid in diagonal morton/peano order."""
    n_rows, n_cols = grid.shape
    order = []

    for y in range(n_rows):
        for x in range(n_cols):
            morton_key = interleave_bits(x, y)
            order.append((morton_key, x, y))

    # Sort by Morton key to achieve the Morton curve order
    order.sort(key=lambda pair: pair[0])
    return [(x, y) for _, x, y in order]
\end{python}
\end{minipage}

\subsection{Code of efficient decay mask} \label{ap:decay_code}
\hspace{5mm}\begin{minipage}[h]{0.95\columnwidth} 
    \centering
    \begin{python}[framerule=0.0
    mm , rulecolor=\color{black} ,frame=single , backgroundcolor = \color{gray!7}]
def Casual_Decay_Mask(b_i , N):
    idx = torch.arange(N,device=b_i.device)
    I, J = torch.meshgrid(idx, idx, indexing='ij')
    E = (torch.abs((I-J)).float().view(1,1,N,N))
    M = torch.sigmoid(b_i).view(1,-1,1,1)**E
    return M
\end{python}
\end{minipage}

\end{document}